\def\1{\bm{1}}
\def\rd{{\textnormal{d}}}
\DeclareMathAlphabet{\mathsfit}{\encodingdefault}{\sfdefault}{m}{sl}
\SetMathAlphabet{\mathsfit}{bold}{\encodingdefault}{\sfdefault}{bx}{n}
\newcommand{\sm}{{\rm softmax}}
\newcommand{\SA}{{\textnormal{\textsf{SA}}}}
\newcommand{\TF}{{\textnormal{\textsf{TF}}}}
\newcommand{\FFN}{\textnormal{\textsf{FFN}}}
\newcommand{\FG}{\textnormal{\textsf{G}}_4}
\newcommand{\IH}{\textnormal{\textsf{IH}}_2}
\newcommand{\nIH}{\textnormal{\textsf{IH}}_n}
\newcommand{\GnIH}{\textnormal{\textsf{GIH}}_n}
\newcommand{\init}{{\rm init}}
\newcommand{\Lip}{{\rm Lip}}
\newcommand{\trinorm}[1]{{\left\vert\kern-0.25ex\left\vert\kern-0.25ex\left\vert #1 
   \right\vert\kern-0.25ex\right\vert\kern-0.25ex\right\vert}}
\newcommand{\bb}{b}
\newcommand{\be}{\boldsymbol{e}}
\newcommand{\bu}{u}
\newcommand{\bv}{v}
\newcommand{\bx}{x}
\newcommand{\by}{y}
\newcommand{\bz}{z}
\newcommand{\bI}{I}
\newcommand{\bS}{S}
\newcommand{\bW}{W}
\newcommand{\bX}{X}
\newcommand{\bZ}{Z}
\newcommand{\btheta}{\theta}
\newcommand{\cB}{\mathcal{B}}
\newcommand{\cC}{\mathcal{C}}
\newcommand{\cI}{\mathcal{I}}
\newcommand{\cL}{\mathcal{L}}
\newcommand{\cN}{\mathcal{N}}
\newcommand{\cO}{\mathcal{O}}
\newcommand{\bbE}{\mathbb{E}}
\newcommand{\bbI}{\mathbb{I}}
\newcommand{\bbN}{\mathbb{N}}
\newcommand{\bbR}{\mathbb{R}}
\newcommand{\bzero}{0}
\newcommand{\pll}{\kern 0.56em/\kern -0.8em /\kern 0.56em}
\newcommand{\norm}[1]{\ensuremath{\left\| #1 \right\|}}
\newcommand{\bracket}[1]{\ensuremath{\left( #1 \right)}}
\newcommand{\<}{\left\langle}
\renewcommand{\>}{\right\rangle}
\newcommand{\triplebar}[1]{\left\vert\!\left\vert\!\left\vert #1 \right\vert\!\right\vert\!\right\vert}
\theoremstyle{plain}
\newtheorem{theorem}{Theorem}[section]
\newtheorem{proposition}[theorem]{Proposition}
\newtheorem{lemma}[theorem]{Lemma}
\theoremstyle{definition}
\newtheorem{definition}[theorem]{Definition}
\newtheorem{remark}[theorem]{Remark}
\newtheorem*{main result}{Main Theorem}
\title{How Transformers Get Rich: Approximation and Dynamics Analysis}
\author{Mingze Wang \\
School of Mathematical Sciences\\
Peking University\\
\texttt{mingzewang@stu.pku.edu.cn} 
\And
Ruoxi Yu\\
Center for Data Science\\
Peking University\\
\texttt{yuruoxi@stu.pku.edu.cn} 
\\
\And
Weinan E \\
School of Mathematical Sciences\\ 
Center for Machine Learning Research\\
AI for Science Institute\\
Peking University\\
\texttt{weinan@math.pku.edu.cn} 
\And
Lei Wu \\
School of Mathematical Sciences\\ 
Center for Machine Learning Research\\
Peking University\\
\texttt{leiwu@math.pku.edu.cn} 
}
\begin{document}

\maketitle

\begin{abstract}
    Transformers have demonstrated exceptional in-context learning  capabilities, yet the theoretical understanding of the underlying mechanisms remains limited.
A recent work~\citep{elhage2021mathematical} identified a ``rich'' in-context mechanism known as induction head, contrasting with ``lazy'' $n$-gram models that overlook long-range dependencies.
In this work, we provide both approximation and dynamics analyses of how transformers implement induction heads.
In the {\em approximation} analysis, we formalize both standard and generalized induction head mechanisms, and examine how transformers can efficiently implement them, with an emphasis on the distinct role of each transformer submodule.
For the {\em dynamics} analysis, we study the training dynamics on a synthetic mixed target, composed of a 4-gram and an in-context 2-gram component. This controlled setting allows us to precisely characterize the entire training process and uncover an {\em abrupt transition} from lazy (4-gram) to rich (induction head) mechanisms as training progresses.

\end{abstract}

\section{Introduction}


Transformer, introduced by~\citet{vaswani2017attention}, have achieved remarkable success across various domains, including natural language processing, computer vision, and scientific computing. An emergent observation is that transformers, trained on trillions of tokens, can perform (few-shot) in-context learning (ICL), which makes prediction based on the contextual information without needing model retraining~\citep{brown2020language}.
This ICL ability is widely regarded as crucial for enabling large language models (LLMs) to solve reasoning tasks, representing a key step toward more advanced artificial intelligence. 

To understand how transformers implement ICL, \citet{elhage2021mathematical} and \citet{olsson2022context} identified a simple yet powerful mechanism known as  {\bf induction head}. Specifically, given an input sequence \texttt{[$\cdots$ab$\cdots$a]}, an induction head predicts \texttt{b} as the next token by leveraging the prior occurrence of the pattern \texttt{ab} in the context, effectively modeling an in-context bi-gram.
In contrast,  traditional $n$-gram model~\citep{shannon1948mathematical} (with a small $n$) utilizes only a limited number of recent tokens to predict the next token, which is context-independent and inevitably overlooks long-range dependence.
Based on the extent of context utilization, we categorize $n$-gram model as a {\em ``lazy'' mechanism}, whereas the induction head represents a more {\bf ``rich'' mechanism}.

Practically, induction heads have been demonstrated to play a  critical role in  enabling LLMs' ICL capabilities~\citep{song2024out,crosbie2024induction}, and even  used to test new LLM architectures~\citep{gu2023mamba}.
Theoretically, induction heads also serve as a controllable tool for understanding various aspects of LLMs, such as multi-step reasoning~\citep{sanford2024transformers} and inductive biases of different architectures~\citep{jelassi2024repeat}.

In this paper, we aim to provide a theoretical analysis  of how transformers can efficiently implement induction heads. 
The first key problem is to rigorously formalize induction heads and evaluate the efficiency of transformers in representing them. According to \citet{elhage2021mathematical}, the original induction head can be implemented using a two-layer, twelve-head transformer without feed-forward networks (FFNs). However, practical scenarios demand more powerful induction heads. Thus, it is crucial to generalize the mechanism behind and explore how different transformer submodules, such as varying the number of attention heads or incorporating FFNs, impact the transformer’s ability to implement them. This forms our first research objective:


\begin{center}
    \em (Approximation). Investigate how two-layer transformers express the induction head mechanism and its potential variants.
\end{center}


The next problem is to investigate the dynamics of  transformers in learning induction heads. The pioneering works by \citet{elhage2021mathematical} and \citet{olsson2022context} demonstrated that transformers 
undergo an abrupt phase transition to learning induction heads. A recent empirical study on synthetic datasets replicate this behavior, 
further showing that 2-gram is always learned prior to induction heads~\citep{bietti2024birth}. However, a rigorous theoretical analysis of this learning progression is still lacking. Closing this gap forms our second research objective:
\begin{center}
    \em (Dynamics). Understand how transformers transition from relying on $n$-gram patterns to employing the induction head mechanism as training progresses.
\end{center}

Focusing on these two key problems, in this paper, we make the following contributions:


\begin{itemize}[leftmargin=2em]
    \item
    {\bf Approximation analysis: how transformers express induction heads.} We consider three types of induction heads with varying complexities. 
    First, we show that two-layer, single-head transformers without FFNs can efficiently approximate the vanilla induction head~\citep{elhage2021mathematical}. We then introduce two generalized induction heads, which  leverage richer in-context $n$-gram information and incorporate a general similarity function. Our  analysis clarifies the distinct roles of multihead attention, positional encoding, dot-product structure, and FFNs in implementing these generalized induction heads.
 
    \item{\bf Dynamics analysis: how learning undergoes a sharp transition from $n$-gram to induction head.} We study the learning dynamics of a two-layer transformer without FFNs for a mixed target, composed of a 4-gram and an in-context 2-gram component. This toy setting allows us to capture the entire training process precisely. Specifically, we show that learning progresses through four phases: partial learning of the 4-gram, plateau of induction head learning, emergence of the induction head, and final convergence, showcasing a sharp transition from 4-gram to induction head. 
    Our analysis identifies two key drivers of the transition: 1) time-scale separation due to   low- and high-order parameter dependencies in self-attention, and 2) speed differences caused by the relative proportions of the two components in the mixed target. 
\end{itemize}

Finally, we conduct a series of experiments, ranging from simple toy models to real-world natural language training tasks, to validate our theoretical insights.

The detailed discussion of related works is deferred to Appendix~\ref{section: related works}.


\section{Preliminaries}


{\bf Notations.} For $k\in\bbN^+$, let $[k]=\{1,2,\dots,k\}$. 
For a vector $v$ and $1\leq p\leq\infty$, we denote by $\norm{v}_p$ the $\ell_p$ norm of $v$.
For a matrix $A=(a_{i,j})$, we denote by $\|A\|$, $\|A\|_F$ the spectral and Frobenius norms, respectively; let $\|A\|_{1,1}=\sum_{i,j}|a_{i,j}|$. For an event $S$, we define $\bbI\{S\}=1$ if $S$ is true, and $0$ otherwise.
We use ${\rm sm}(\cdot)$ to denote the softmax function. We use standard 
big-O notations $\cO,\Omega,\Theta$ to hide absolute positive constants, and use  $\tilde{\cO},\tilde{\Omega},\tilde{\Theta}$ to further hide logarithmic constants. 

{\bf Sequence modeling.}
Given a sequence of tokens $(x_1,x_2,x_3, \dots)$ with each token lying in $\bbR^d$, let $X_L=(x_1,x_2,\dots,x_L)\in\bbR^{d\times L}$ and $X_{m:n}=(x_m^\top,x_{m+1}^\top,\dots,x_{n}^\top)^\top\in\bbR^{(n-m+1)d}$. Given $A=(a_1,\cdots a_n)\in\bbR^{m\times n}$, we denote $(a_s)_{s=i}^j=(a_i,\cdots,a_j)\in\bbR^{m\times(j-i+1)}$.
Then, we consider the next-token prediction task: predict $x_{L+1}$ using $X_L=(x_1,x_2,\dots,x_L)$. 

In a {\bf $n$-gram model}~\citep{shannon1948mathematical}, the conditional probability of predicting the next token is given by $p(x_{L+1}|X_L)=p(x_{L+1}|X_{L-n+2:L})$, meaning that the  prediction  depends only on the most recent $n-1$ tokens.
In practice, the value of $n$ is typically small (e.g., 2, 3, or 4), as the computational cost of $n$-gram models grows exponentially with $n$.
However, $n$-gram models with small $n$ cannot capture long-range interactions,  leading to inferior performance in  sequence modeling.

{\bf Transformer}  is designed to more efficiently  capture long-range dependencies in sequence modeling~\citep{vaswani2017attention}. Specifically, given an  $L$-token input sequence $\bX=(\bx_1,\cdots,\bx_L)\in\bbR^{d\times L}$, an $U$-layer transformer $\TF$ processes it as follows. First, each input token is embedded into a higher-dimensional space through an {\em embedding layer} with  $\bW_E\in\bbR^{D\times d},\bb_E\in\bbR^D$:
$$
\bx_s^{(0)}=\bW_E \bx_{s}+\bb_E,\ s\in[L].
$$ 
Next, the $U$-layer \SA-\FFN blocks process the embedded sequence $\bX^{(0)}=(\bx_1^{(0)},\cdots,\bx_L^{(0)})$ as follows, and the output of the final layer is taken as the output sequence $\TF(X)=X^{(U)}\in\bbR^{D\times L}$:
\begin{equation}\label{model: Transformer}
\begin{aligned}
    \bX^{(u-\frac{1}{2})}&=\bX^{(u-1)}+\SA^{(u)}(\bX^{(u-1)}),\quad u\in[U];
    \\
    \bX^{(u)}&=\bX^{(u-\frac{1}{2})}+\FFN^{(u)}(\bX^{(u-\frac{1}{2})}),\quad u\in[U].
\end{aligned}
\end{equation}
Here, $\FFN^{(u)}$ denotes a (token-wise) two-layer FFN of width $M$, and $\SA^{(u)}$ represents the $H$-head self-attention operation. Specifically,  when applied to a sequence $\bZ=(\bz_1,\cdots,\bz_L)\in\bbR^{D\times L}$, $\SA^{(u)}$ operates it as follows:
\begin{equation}\label{model: multi-head self-attention}
    \begin{gathered}
        \!\!\SA^{(u)}(Z)=\bW_O^{(u)}\sum_{h=1}^{H}\SA^{(u,h)}(\bZ),\quad \SA^{(u,h)}(\bZ)=
        \\
        \!\left(\bW_V^{(u,h)}Z\right)\!\sm\!\left(\<\!\bW_Q^{(u,h)}\!\bZ,\bW_K^{(u,h)}\!\bZ\!\>\!+\! R^{(u,h)}\!\right)\!,
    \end{gathered}
\end{equation}
where $\bW_{Q}^{(u,h)},\bW_{K}^{(u,h)},\bW_{V}^{(u,h)},\bW_O^{(u)}\in\bbR^{D\times D}$ correspond to the query, key, value and output matrices of the $(u,h)$-th head, respectively. $\sm$ represents taking softmax normalization across columns. 
$\<\bW_Q^{(u,h)}\bX,\bW_K^{(u,h)}\bX\>$ is called the dot-product  (DP) structure. 
Furthermore, $R^{(u,h)}=(R_{i,j}^{(u,h)})\in\bbR^{L\times L}$ denotes the additive relative positional encoding matrix, which satisfies $R_{i,j}^{(u,h)}=-\infty$ if $i\leq j$ for the next-token prediction task.




{\bf Relative positional encoding} (RPE).
Throughout this paper, we focus on the Alibi RPE~\citep{press2021train}, where $R_{ij}^{(u,h)}$ exhibit a Toeplitz structure, i.e., $R_{ij}^{(u,h)}=\phi(i-j;p^{(u,h)})$ for $i,j\in [L]$. Here, $p^{(u,h)}$'s are learnable parameters and  $\phi(\cdot;p)$ has the following form:


\begin{equation}\label{equ: RPE}
    \phi(z;p)=\begin{cases}
    -p\cdot (z-1) & \text{ if } z\geq1
    \\
    -\infty & \text{\ otherwise}
    \end{cases}.
\end{equation}


Note that we adopt the Alibi RPE only for simplicity and our results can be  extended to other additive RPEs, such as T5~\citep{raffel2020exploring}, and KERPLE~\citep{chi2022kerple}. However,  extending our analysis to the popular rotary RPE~\citep{su2024roformer} may be nontrivial, and we leave this for future work.

\section{Formulation and Approximation of Induction Head}\label{section: approximation}


In this section, we formalize three types of induction head mechanisms with varying levels of  complexity.
We then theoretically investigate how two-layer single- or multi-head transformers, with or without FFNs, can efficiently implement these mechanisms, highlighting the distinct roles of different transformer submodules




\subsection{Vanilla Induction Heads}\label{subsection: standard}


The original induction head, proposed in~\citet{elhage2021mathematical} and \citet{olsson2022context}, is regarded as one of the key mechanisms to implement ICL and reasoning. 
This induction head suggests that two-layer multi-head transformers without FFNs can execute a simple in-context algorithm to predict the next token \texttt{b} from a context \texttt{[$\cdots$ab$\cdots$a]} through retrieval, copying, and pasting, based on in-context bi-gram pairs, as illustrated in Figure~\ref{fig: example, induction head}.

\begin{figure}[!ht]
    \centering
    \includegraphics[width=12cm]{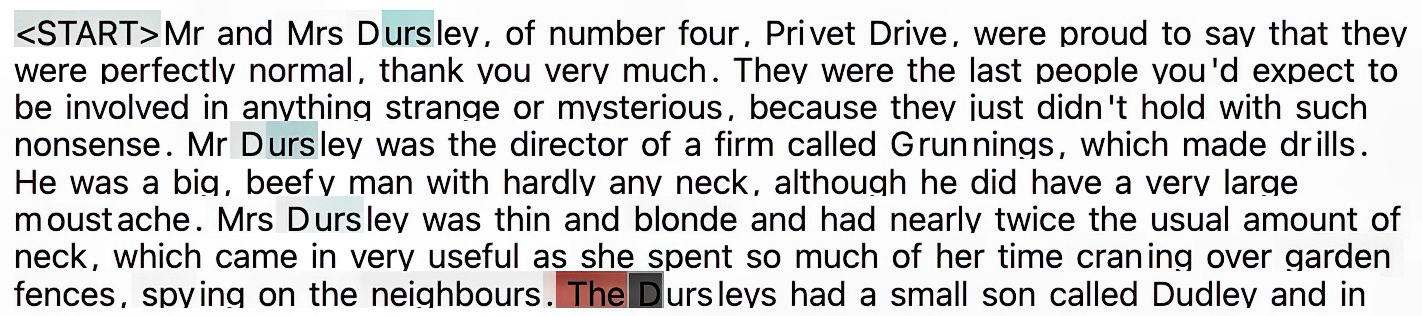}
    \caption{\small An illustration of the original induction head (taken from~\citet{elhage2021mathematical}). The induction head proceeds the context [$\cdots$The D] by retrieving the preceding information most relevant to the current token (\texttt{D}), then copying and pasting the subsequent token (the green \texttt{urs}) as the current prediction. 
    Notably, the first and second self-attention layers focus on the highlighted red and green tokens, respectively. For further details, refer to the description below Theorem~\ref{theorem: standard}. 
    }
    \label{fig: example, induction head}
\end{figure}

{\bf Formulation of $\IH$.}
Based on the phenomenon illustrated in Figure~\ref{fig: example, induction head}, we define the vanilla induction head $\IH:\cup_{L\in\bbN^+}\bbR^{d\times L}\mapsto\bbR^d$ as follows:
\begin{equation}\label{equ: induction head, type I}
    \IH(\bX_L)= \sum_{s=2}^{L-1} x_s \ {\rm sm}\Big(\big(x_{L}^\top W^\star x_{\nu-1} \big)_{\nu=2}^{L-1}\Big)_{\nu=s}.
\end{equation}
Specifically, $\IH$ retrieves in-context information based on the similarities of in-context bi-gram pairs $\{(x_{s}, x_L)\}_{s=1}^{L-2}$. Note that the magnitude of  matrix $W^\star$ controls the sparsity of retrieval, since increasing $\|W^\star\|$ causes the softmax output to concentrate as a delta measure over the preceding tokens. 
Additionally, $\IH$ can handle input sequences of arbitrary length.

This model retrieves previous tokens $x_{s-1}$'s that are similar to the current token $\bx_L$ based on a dot-product similarity, and then copies and pastes $\bx_{s-1}$'s subsequent token $\bx_{s}$ as the current prediction $\bx_{L+1}$.
For example, in Figure~\ref{fig: example, induction head}, the current token $\bx_L$ is \texttt{D}, and the model retrieves previous tokens similar to \texttt{D}, copying and pasting its subsequent token \texttt{urs} as the prediction.

{\bf Comparison with previous formulations.}
As shown in Figure~\ref{fig: example, induction head}, the current token \texttt{D} appears multiple times in the preceding context, and the induction head detects all occurrences of \texttt{D}.
Our formulation~\eqref{equ: induction head, type I} captures this behavior, as the softmax scores for all preceding \texttt{D} are identical.
In contrast, previous formulations, such as \citet{sanford2024one} and \citet{sanford2024transformers}, focus solely on the most recent occurrence of \texttt{D}, neglecting this multi-occurrence aspect.

{\bf Measure of approximation.}  Consider a target function $\textsf{H}:\cup_{L\in\bbN^+}\bbR^{d\times L}\mapsto\bbR^d$, where $d$ is the token dimension and $L$ denotes the sequence length.
Given an input sequence $X\in\bbR^{d\times L}$, transformer $\TF$ approximates $\textsf{H}(X)$ using its last output token, i.e., $\TF_{-1}(X)\in\bbR^d$. To quantify the approximation error, we define the following metric: for $1\leq p\leq +\infty$,
\begin{equation}
\label{equ: approximation norm}
\!\!\!\triplebar{\textsf{H}-\TF}_{L,p}\!\!:= \big(\bbE_{X_L}[\|\textsf{H}(X_L)-\TF_{-1}(X_L)\|_{\infty}^p]\big)^{1/p}.
\end{equation}

The next theorem shows that a two-layer {\em  single-head} transformer {\em without FFNs}  suffices to implement vanilla induction heads.

\begin{theorem}[two-layer single-head $\TF$ w/o FFNs]\label{theorem: standard}
Let $\IH$ satisfy Eq.~\eqref{equ: induction head, type I}. 
Then exists an absolute constant $C>0$ and
a two-layer single-head transformer $\TF$ (without FFNs), with $D=2d$, $W_K^{(1,1)}=W_Q^{(1,1)}=0$,
$p^{(2,1)}=0$, and $\|W_K^{(2,1)}\|,\|W_Q^{(2,1)}\|\leq\cO(1,\|W^\star\|_{F})$, such that
\begin{align*}
\sup_{L\in\bbN^{+}}\triplebar{\IH-\TF}_{L,\infty} \leq \frac{ C}{e^{p^{(1,1)}}}.
\end{align*}
\end{theorem}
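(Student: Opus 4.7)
The goal is to construct a two-layer single-head transformer with hidden dimension $D=2d$ in which the first layer implements a ``copy the previous token'' operation into a reserved coordinate block, and the second layer performs the induction head's similarity match and value copy on this enlarged representation. I would take the embedding $\bW_E$ to be the block matrix with $I_d$ on top and $0$ on the bottom, so that $\bx_s^{(0)}=(\bx_s^\top,0^\top)^\top$ and the lower $d$ coordinates serve as a reserved ``previous-token register''.

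\textbf{First layer.} With $\bW_K^{(1,1)}=\bW_Q^{(1,1)}=0$, the attention logits collapse to the Alibi bias $\phi(i-j;p^{(1,1)})=-p^{(1,1)}(i-j-1)$ for $j<i$. A direct geometric-series estimate shows that for each $i\geq 2$ the softmax places mass $1-\cO(e^{-p^{(1,1)}})$ on $j=i-1$ and mass $\cO(e^{-p^{(1,1)}})$ in total on $j\leq i-2$, uniformly in $L$. Choosing $\bW_V^{(1,1)}$ and $\bW_O^{(1)}$ so that the residual update routes the upper block of the attended token into the lower block yields
\begin{equation*}
\bx_i^{(1)}=(\bx_i^\top,\,\bx_{i-1}^\top)^\top+\bvarepsilon_i,\qquad \|\bvarepsilon_i\|_\infty\leq C_1\, e^{-p^{(1,1)}}\max_s\|\bx_s\|_\infty,\qquad i\geq 2,
\end{equation*}
with $C_1$ an absolute constant.

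\textbf{Second layer.} Set $p^{(2,1)}=0$ and factor $W^\star=U\Sigma V^\top$ by SVD. Take $\bW_Q^{(2,1)}$ to extract $\Sigma^{1/2}U^\top$ from the upper block and $\bW_K^{(2,1)}$ to extract $\Sigma^{1/2}V^\top$ from the lower block (both padded by zeros to size $2d\times 2d$). On the clean first-layer representations this gives $\langle\bW_Q^{(2,1)}\bx_L^{(1)},\bW_K^{(2,1)}\bx_s^{(1)}\rangle=\bx_L^\top W^\star \bx_{s-1}$, and the operator norms are $\|W^\star\|^{1/2}\leq\max\{1,\|W^\star\|_F\}$, matching the theorem's $\cO(1,\|W^\star\|_F)$ budget. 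Choosing $\bW_V^{(2,1)}$ and $\bW_O^{(2)}$ to extract the upper block (where $\bx_s$ sits) propagates the value $\bx_s$, so on the clean inputs the last output token reproduces $\IH(\bX_L)$ over the indices $s\in\{2,\dots,L-1\}$.

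\textbf{Error analysis and main obstacle.} The key technical step is propagating the first-layer error $\epsilon:=e^{-p^{(1,1)}}$ through the second-layer softmax \emph{uniformly in} $L$. Because the per-position first-layer perturbation does not accumulate across positions, the second-layer logits deviate from their ideal values by $\cO(\|W^\star\|_F\cdot\epsilon)$ per index; the dimension-free bound $\|\sm(a)-\sm(b)\|_1\leq 2\|a-b\|_\infty$ then gives an $\ell_1$-perturbation of the attention weights of the same order, and dotting with the bounded values yields an $\ell_\infty$-output error of order $\epsilon$ independent of $L$. The main obstacle is a boundary bookkeeping item: the transformer's second-layer sum runs over $j\in\{1,\dots,L-1\}$ while $\IH$ sums over $s\in\{2,\dots,L-1\}$, so the spurious $j=1$ term (whose lower register was never overwritten by a genuine previous token) could contribute constant-order softmax mass under a naive construction. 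I would resolve this by using the embedding bias $\bb_E$ to inject a fixed sentinel into every lower block: the first-layer copy overwrites it at $j\geq 2$ but it survives at $j=1$, and a compensating adjustment in $\bW_Q^{(2,1)},\bW_K^{(2,1)}$ (still within the theorem's operator-norm budget) keys a large negative logit to the sentinel while leaving the ideal logits at $j\geq 2$ untouched, pushing the $j=1$ softmax weight below $\cO(\epsilon)$ and absorbing it into the final $C/e^{p^{(1,1)}}$ bound.
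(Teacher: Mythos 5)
Your construction matches the paper's proof step for step: the $D=2d$ embedding reserving the lower block, the first-layer residual copy of the previous token with $W_K^{(1,1)}=W_Q^{(1,1)}=0$ and a geometric-tail estimate yielding per-position error $\cO(e^{-p^{(1,1)}})$, the second layer's query/key extraction of the two blocks so the dot product reconstructs $x_L^\top W^\star x_{s-1}$, and propagation of the first-layer error through the second-layer softmax via $\|\sm(a)-\sm(b)\|_1\leq 2\|a-b\|_\infty$ (Lemma~\ref{lemma: softmaxlipschitz}). The SVD split $W^\star=U\Sigma V^\top$ for $W_Q^{(2,1)},W_K^{(2,1)}$ is a cosmetic variant of the paper's block choice and stays within the stated norm budget. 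Your spotting of the $j=1$ boundary slot is also correct: with $p^{(2,1)}=0$ the causal Alibi mask leaves $R_{L,j}$ finite for every $j<L$, so $j=1$ participates in the second-layer attention, yet the paper's proof silently drops it when it asserts the output equals $(x_s)_{s=2}^{L-1}\,\sm\big((x_L^\top W^\star y_s)_{s=2}^{L-1}\big)^\top$.

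The sentinel fix, however, does not actually suppress $j=1$ at $D=2d$. The $j=1$ logit is $\langle W_Q^{(2,1)}z_L^{(1)},\,W_K^{(2,1)}z_1^{(1)}\rangle$, which in your construction reduces to $x_L^\top W^\star c$, and the sign of this bilinear form flips as $x_L$ ranges over the input domain because $z_L^{(1)}$ has no input-independent coordinate (its upper block is $x_L$, its lower block is approximately $x_{L-1}$). Scaling the sentinel $c$ therefore makes the $j=1$ logit large in magnitude but not uniformly negative, and for a large fraction of inputs it increases the $j=1$ mass rather than pushing it below $\cO(e^{-p^{(1,1)}})$. Without a sentinel, $y_1=0$ leaves logit $0$ at $j=1$, and the leftover contribution to $\|\IH-\TF_{-1}\|_\infty$ can be $\Theta(1)$ at small $L$ (take $x_1$ different from $x_2,\dots,x_{L-1}$ and choose $x_L$ so that every $x_L^\top W^\star x_{j-1}$ is negative), uniformly in $p^{(1,1)}$---a term the paper's written argument does not control either. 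A clean keying-based suppression requires a coordinate that is constant in both the query at position $L$ and the key at position $1$, which forces $D>2d$; the only alternative within $D=2d$ is to restrict to $L\geq L_0(W^\star,p^{(1,1)})$ so that the $j=1$ softmax weight, which is $\cO\big(e^{\|W^\star\|_{1,1}}/L\big)$, is already within the claimed tolerance.
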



This theorem shows that single head suffices to approximate the vanilla induction head and moreover,  the approximation efficiency is independent of the sequence length.
The proof  is provided in Appendix \ref{subsecton: thm standard proof}, offering the following  insights into how two-layer single-head transformers without FFNs implement vanilla induction heads:


\begin{itemize}[leftmargin=2em]
    \item {\bf The first layer} aggregates local tokens and  outputs $\bz_s=[x_{s-1},x_{s}]$ for the $s$-th token ($2\leq s\leq L$). This is achieved by using $\SA$ with only RPE (no DP). Specifically, RPE allows $\SA$ to  capture the {\em preceding token} via $x_{s-1}=\sum_{j\geq1} x_{s-j}\rho(j)$ for each token $x_s$, where  $\rho(\cdot)=\bbI\{\cdot=1\}$. Hence, DP in this layer is not essential and can be omitted.
    \item
    {\bf The second layer} extracts the relevant tokens using DP similarity. First, DP computes the similarity  $\<W_Q\bz_L,W_K \bz_{s}\>=\bx_L^\top W^\star \bx_{s-1}$, where $\bz_L=[\bx_{L-1},\bx_L]$ and $\bz_s=[\bx_{s-1},\bx_s]$ represent the hidden tokens output by the first layer.
    This similarity measure enables
    $\SA$ to identify tokens that match $\bx_L$. 
    Subsequently, the value $W_V \bz_s$ extracts $\bx_s$,  effectively copying the subsequent token of $\bx_{s-1}$ and using it as the current prediction. In this layer, RPE is not necessary and can be omitted.
\end{itemize}

\begin{remark}[Alignment with experimental findings]
Our theoretical analysis is consistent with the experimental observations reported in~\citet{elhage2021mathematical}. Specifically, the experiments there demonstrate that  $\SA$ in the first layer attends to adjacent tokens, while  $\SA$ in the second layer  retrieves information related to the current token. Our analysis identifies components responsible for these two operations, and reveals that {\em single-head} transformers suffice to perform them efficiently. Furthermore, we validate our theoretical construction through a {\em probing experiment}, presented in Figure~\ref{fig: approximation: probing} in Appendix~\ref{appendix: experiment: approximation}.
\end{remark}



\subsection{Generalized Induction Heads: In-context \texorpdfstring{$n$}{}-gram and Generic Similarity}\label{subsection: general}


Although the standard induction head defined in Eq.~\eqref{equ: induction head, type I} is intuitive, it exhibits  notable limitations: 
{\bf 1)} it retrieves only a {\em single token}, potentially missing {\em complete local information} and leading to false retrievals; 
{\bf 2)} it relies solely on the {\em dot-product} to measure the similarity between two tokens, which is not sufficiently general.

{\bf Formulation of $\nIH$.}
Motivated by the limitation {\bf 1)} above, we define a generalized induction head:
\begin{equation}\label{equ: induction head, type II}
\begin{gathered}
    \nIH(\bX_L)=\sum_{s=n}^{L-1}\bx_{s}\pi_s,
    \\
    \pi_s={\rm sm}\Big(\big(X_{L-n+2:L}^\top W^\star
    X_{\nu-n+1:\nu-1}\big)_{\nu=n}^{L-1}\Big)_{\nu=s},
\end{gathered}
\end{equation}
where the patch $X_{\nu-n+1:\nu-1}$ and $X_{L-n+2:L}$ incorporate {\em local information} near the previous token $\bx_{\nu-1}$ and the current $\bx_L$, respectively. 
This induction head operates based on the similarity between pairs: $(X_{s-n+1:s-1};X_{L-n+2:L})$ for $s=n,\dots,L-1$.
Notably, $n$ is typically small when extracting {\em local semantics}, so we assume $n\leq 100$ in Eq.~\eqref{equ: induction head, type II}.\footnote{For a given context $X_{1:L}$, Eq.~\eqref{equ: induction head, type II} predicts by considering all past $n$-gram pairs $x_{\nu}|X_{\nu-n+1:\nu-1}$ (for $\nu\leq L-1$), making it a context-dependent statistic, and is therefore called {\em in-context $n$-gram}. Additionally, note that the $n$ used in classic $n$-grams is typical small (e.g., $2,3,4$), so we make a reasonable assumption that $n\leq 100$.}


Integrating richer local information facilitates more accurate information retrieval.
The model~\eqref{equ: induction head, type II} retrieves previous $(n-1)$-token patch that are similar to the current $(n-1)$-token patch, thereby generalizing the vanilla induction head~\eqref{equ: induction head, type I}, which considers only single-token retrieval ($n=2$ in Eq.~\eqref{equ: induction head, type II}).
For example, as depicted in Figure~\ref{fig: example, induction head}, if the current local information is \texttt{The D} (comprising two tokens), and prior local information such as \texttt{Mr D} and \texttt{Mrs D} is identified as similar to \texttt{The D}, transformer would copy and paste their subsequent token, \texttt{urs}, as the prediction.


\begin{theorem}[two-layer multi-head $\TF$ w/o FFNs]\label{theorem: type II}
Let $\nIH$ satisfy Eq.~\eqref{equ: induction head, type II}. 
Then, for any $H\in\bbN^+$ and rate $q\in\bbN^+$, there exists a constant $C_{n,q}>0$ and a two-layer $H$-head transformer $\TF(\cdot)$ (without FFNs), with $D=nd$, such that:
\begin{align*}
    \sup_{L\in\bbN^+}\triplebar{\nIH-\TF}_{L,\infty}\leq \left(\frac{C_{n,q}}{H}\right)^q,
\end{align*}
where $C_{n,q}=\cO(n q^2)$.
\end{theorem}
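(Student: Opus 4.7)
The plan is to extend Theorem~\ref{theorem: standard} from bi-grams to general $n$-grams by (i) widening the residual stream to $D=nd$ so that an entire $n$-token patch fits in one hidden state, and (ii) deploying the $H$ first-layer heads as approximate ``shift-by-$j$'' operators that copy each of the preceding $n-1$ tokens into its own slot. Concretely, I would let the embedding $W_E$ place $x_s$ in slot $n$ (so the residual keeps $x_s$ there throughout), and design $\SA^{(1)}$ so that, at position $s$, slot $n-j$ holds an approximation of $x_{s-j}$ for each $j\in[n-1]$. The second layer then needs only a single active head: $W_Q^{(2)}$ and $W_K^{(2)}$ read slots $2\!:\!n$ and $1\!:\!(n-1)$ respectively, and $W_V^{(2)}$ reads slot $n$, so that the dot-product logit realizes $X_{L-n+2:L}^{\top} W^\star X_{\nu-n+1:\nu-1}$ and the value becomes $x_\nu$, closing the construction exactly as in Theorem~\ref{theorem: standard}'s second layer.

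The core of the proof is the first-layer shift problem. An Alibi head with rate $p$ at position $s$ outputs $\sum_{k\geq 1}(1-\alpha)\alpha^{k-1} x_{s-k}$ with $\alpha=e^{-p}$; stacking $H$ heads with distinct rates $\{\alpha_h\}$ and combining via $W_O^{(1)}$, one realizes any operator $x\mapsto \sum_{k\geq1} w(k)\,x_{s-k}$ with $w:\mathbb{N}\to\mathbb{R}$ in the linear span of $\{k\mapsto(1-\alpha_h)\alpha_h^{k-1}\}_{h\leq H}$. Asking slot $n-j$ to read off $x_{s-j}$ exactly amounts to approximating $\mathbb{I}\{k=j\}$ in $\ell^1(\mathbb{N})$ by such a combination; in the generating-function picture this is Wiener-norm approximation of $z^{j-1}$ on $[0,1]$ by the $H$ rationals $(1-\alpha_h)/(1-\alpha_h z)$. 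Invoking classical rational-approximation theory (for example Pad\'e/Lagrange interpolation at carefully placed nodes), one expects, for any fixed rate $q$, a construction with per-shift error $(Cq^2/H)^q$; allocating $\lfloor H/(n-1)\rfloor$ heads to each of the $n-1$ shifts loses only an $\cO(n)$ factor in the constant, which yields the stated $(C_{n,q}/H)^q$ bound with $C_{n,q}=\cO(nq^2)$.

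Error propagation to the final output should then be routine: under token boundedness and the softmax Lipschitz property (with constant controlled by $\|W^\star\|$), a first-layer per-slot error $\epsilon_1$ in both the query patch (at position $L$) and the key patches (at each $\nu$) translates into logit perturbations of order $\epsilon_1$, hence softmax perturbations of order $\epsilon_1$, and finally an output error of the same order. Translation invariance of the construction then lets us pass to $\sup_L$.

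\textbf{The main obstacle} is the coupled control of (a) the $\ell^1$ approximation rate $(C_{n,q}/H)^q$ and (b) the magnitudes of the coefficients $c_{h,j}$, which set $\|W_O^{(1)}\|$ and hence the Lipschitz constant used downstream. Vandermonde-style inversions in rational approximation are notoriously ill-conditioned once nodes cluster, so the delicate step will be to choose the rates $\{p_h\}$ so that both the error rate and the coefficient norms scale polynomially in $n$ and $q$, preventing a Lipschitz blow-up in the second layer from devouring the $(C_{n,q}/H)^q$ gain.
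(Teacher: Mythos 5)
Your outline follows the paper's proof essentially step for step: widen to $D=nd$, realize the $n-1$ shift-by-$j$ operators in the first layer as exponential memory kernels approximating $\bbI\{\cdot=j\}$ in $\ell^1(\bbN)$, and close with a single dot-product head plus the softmax Lipschitz estimate. The step you leave as a black box---the existence of an $H$-term exponential mixture approximating $\bbI\{\cdot=j\}$ in $\ell^1$ at rate $(Cq^2/H)^q$---is precisely the content of the paper's Lemma~\ref{lemma E.1}, but its proof is not the classical Pad\'e/Lagrange rational approximation you gesture at; it comes from a bump-function Fourier construction (Lemma F.1 of \citet{wang2024understanding}), and the $q^2$ in the base arises concretely from factorial growth of the $q$-th derivative of the standard bump function. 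Without that specific rate in hand, ``classical rational-approximation theory'' does not obviously deliver the $q^2$ dependence, so this is the one place where your sketch is genuinely incomplete rather than merely compressed. Separately, your stated ``main obstacle'' is misdirected: controlling $\|W_O^{(1)}\|$ or the mixture coefficients $c_{h,j}$ is not what prevents a Lipschitz blow-up. What propagates to the second layer is only the $\ell^1$ kernel error (which bounds $\sup_s\|\hat{X}_{s-n+1:s-1}-X_{s-n+1:s-1}\|_\infty$ directly, regardless of the sizes of the individual $c_{h,j}$) together with $\|W^\star\|_{1,1}$; once the $\ell^1$ error is $\le 1$ the hidden tokens are automatically in $[-2,2]^D$, and the logit perturbation is $\cO(\|W^\star\|_{1,1}\,\varepsilon_{\SA}^{(1)})$, so $\|W_O^{(1)}\|$ never appears in the final bound and the Vandermonde conditioning concern dissolves. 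The only other difference is cosmetic: the paper allocates $H_j\propto e^{0.01 j}$ heads per shift rather than uniformly, which tightens the constant slightly but is inessential given $n\le 100$.
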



This theorem demonstrates that two-layer multi-head transformers, even without FFNs,  can {\em efficiently} implement the generalized induction head~\eqref{equ: induction head, type II}. Notably, the approximation error scales as $\cO(H^{-q})$, where $q$ can be arbitrarily large.
Moreover, for a fixed $q$, $H\geq\Omega(n)$ is sufficient to ensure a good approximation. The proof of this theorem is provided in Appendix \ref{subsecton: thm type II proof}.
Additionally, the necessity of the required number of heads $H$ and embedding dimension $D$ is experimentally verified in Figure~\ref{fig: approximation: hardness} in Appendix~\ref{appendix: experiment}.

{\bf The role of multiple heads.}
In Theorem~\ref{theorem: type II}, multiple heads are employed in the first layer to approximate the $n$-gram interaction, represented by the $n-1$ memory kernels $\{\rho_j:=\bbI\{\cdot=j\}\}_{j=1}^{n-1}$.
Thus, $\TF$ can capture $n-1$ {\em preceding tokens} via $x_{s-j}=\sum_{k\geq1} x_{s-k}\rho_j(k)$ for $j\in [n-1]$. 
Intuitively, as $n$ increases, more memory kernels are required for accurate approximation, necessitating more attention heads.
In contrast, Theorem~\ref{theorem: standard} only requires approximating a single memory kernel $\bbI\{\cdot=1\}$, which can be efficiently achieved using a single attention head.

Recently,~\citet{rajaraman2024transformers} explored a generalized induction head similar to Eq.~\eqref{equ: induction head, type II} and showed that multi-layer single-head transformers can implement it. In contrast, our Theorem~\ref{theorem: type II} demonstrates that two layers suffice if multi-head self-attention is adopted.


{\bf Formulation of $\GnIH$.}
Building on the formulation~\eqref{equ: induction head, type II}, and motivated by the limitation {\bf 2)} above, we further consider the following generalized induction head:


\begin{equation}\label{equ: induction head, type III}
\begin{gathered}
    \GnIH(\bX_L)=\sum_{s=n}^{L-1}\bx_s\pi_s,
    \\
    \ 
    \pi_s={\rm sm}\Big(\big(g\big(X_{L-n+2:L};X_{\nu-n+1:\nu-1}\big)\big)_{\nu=n}^{L-1}\Big)_{\nu=s},
\end{gathered}
\end{equation}


where $g:\bbR^{D\times(n-1)}\times\bbR^{D\times(n-1)} \to\bbR$ denotes a generic function measuring the similarity between two $(n-1)$-length patches.


This model retrieves previous relevant multi-token patch $X_{s-n+1:s-1}$ that is similar to the current multi-token patch $X_{L-n+2:L}$ , utilizing the generalized similarity function $g(\cdot,\cdot)$.
This mechanism is more general than Eq.~\eqref{equ: induction head, type II}, which is limited to  dot-product similarities.
For instance, the use of general similarity $g$ enables the model to recognize not only synonymous but also antonymic semantics, thereby improving both the accuracy and diversity of in-context retrievals.





\begin{theorem}[two-layer multi-head $\TF$ with FFNs]\label{theorem: type III}
Let $\GnIH$ satisfy Eq.~\eqref{equ: induction head, type III}. 
Suppose the similarity function $g$ is $\alpha$-well-behaved (see Definition~\ref{definition: POD}).
Then there exist two absolute constants $A,B>0$ (only depending on the properties of $g$) such that: for any $H,M\in\bbN^{+}$ and rate $q\in\bbN^+$, there exists a constant $C_{n,q}>0$ and a two-layer $H$-head transformer $\TF(\cdot)$ with FFNs of width $M$, such that 
\begin{align*}
    \triplebar{\GnIH-\TF}_{L,2}\leq A\left(\frac{C_{n,q}}{H}\right)^q+ B\frac{L^{1/(1+2\alpha)}}{M^{\alpha/(1+3\alpha)}},
\end{align*}
where $C_{n,q}=\cO(n q^2)$.
\end{theorem}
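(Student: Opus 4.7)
My plan is to build on the construction used to prove Theorem~\ref{theorem: type II} and delegate the transition from the bilinear similarity $X^\top W^\star X$ to the generic similarity $g$ to the FFN block. Concretely, I would keep $\SA^{(1)}$ essentially unchanged: it employs $H$ heads to approximate the $n-1$ preceding-token memory kernels $\bbI\{\cdot = j\}$ for $j \in [n-1]$, so that after the first self-attention each position $s$ carries in its residual stream a representation $z_s$ of the local patch $(\bx_{s-n+1},\dots,\bx_s)$ with error at most $(C_{n,q}/H)^q$. Propagating this error through the downstream softmax and value aggregation is exactly the analysis already done for Theorem~\ref{theorem: type II}, and it yields the first additive term $A(C_{n,q}/H)^q$ in the bound.

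The novel ingredient is using $\FFN^{(1)}$ of width $M$, together with the dot-product in $\SA^{(2)}$, to approximate $g$ as an inner product of learnable feature maps. Under the $\alpha$-well-behaved assumption (Definition~\ref{definition: POD}), I expect to obtain, for any truncation level $K$, a decomposition $g(u,v) \approx \sum_{k=1}^{K} \phi_k(u)\,\psi_k(v)$ together with two-layer FFN approximations of each $\phi_k$ and $\psi_k$. I would then use $\FFN^{(1)}$ at every token position to compute approximations of $\phi(z_s) = (\phi_k(z_s))_{k\le K}$ and $\psi(z_s) = (\psi_k(z_s))_{k\le K}$, and write them into disjoint blocks of the residual stream. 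In $\SA^{(2)}$, setting $W_Q^{(2,1)}$ and $W_K^{(2,1)}$ to project onto these two blocks respectively produces attention logits that approximate $\langle \phi(z_L), \psi(z_s)\rangle \approx g(X_{L-n+2:L}; X_{s-n+1:s-1})$, while the value $W_V^{(2,1)} z_s$ extracts $\bx_s$ exactly as in Theorem~\ref{theorem: standard}. The final output is then a softmax-weighted average of $\bx_s$ with the correct logits, matching $\GnIH(\bX_L)$.

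For the error analysis I would collect three sources: (i) the first-layer aggregation error inherited from Theorem~\ref{theorem: type II}, (ii) the truncation error from replacing $g$ by its first $K$ feature terms, and (iii) the pointwise FFN error in approximating each $\phi_k, \psi_k$. A softmax-Lipschitz estimate in $L_2$ over the sequence distribution will convert uniform logit perturbations of size $\eps$ into output perturbations of the same order, and the factor of $L$ enters because the logit errors must be controlled uniformly across the $L$ context positions inside the softmax. Balancing the truncation level $K$ against the FFN width budget allocated per feature (roughly $M/K$), using the approximation rates promised by $\alpha$-well-behavedness, is what I expect to produce the exponents $1/(1+2\alpha)$ in $L$ and $\alpha/(1+3\alpha)$ in $M$.

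The main obstacle, in my view, is twofold. First, extracting a clean feature decomposition of $g$ with the sharp approximation rates demanded by the exponents hinges on the precise form of Definition~\ref{definition: POD}, and the bookkeeping between $\phi$-side error and $\psi$-side error in the bilinear product must be handled carefully. Second, the softmax becomes highly sensitive when logits are large, so I will need either to truncate the feature norms or to argue a uniform bound on $\|\phi(z)\|,\|\psi(z)\|$ to keep the softmax Lipschitz constant under control; trading this truncation against the $L_2$ approximation error over the sequence distribution is what ultimately produces the specific $L^{1/(1+2\alpha)}$ factor.
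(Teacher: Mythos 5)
Your proposal matches the paper's proof essentially step for step: keep the multi-head first-layer self-attention from Theorem~\ref{theorem: type II} to aggregate the local patch (yielding the $A(C_{n,q}/H)^q$ term), use $\FFN^{(1)}$ of width $M$ split into $2K$ sub-networks of width $M/(2K)$ to approximate the rank-$K$ POD factors $\phi_k,\psi_k$ of $g$, reconstruct $g_K=\sum_{k\le K}\sigma_k\phi_k\psi_k$ via the second-layer dot product, decompose the error into (i) first-layer aggregation, (ii) FFN approximation of the factors, and (iii) POD truncation, and finally optimize $K$ (via Young's inequality) against the per-factor width budget to obtain the exponents $1/(1+2\alpha)$ and $\alpha/(1+3\alpha)$. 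The two obstacles you flag are in fact already resolved by Definition~\ref{definition: POD}: the uniform $L^\infty$ bound on $\phi_k,\psi_k$ keeps the logits and hence the softmax Lipschitz constant under control, and the singular-value decay $\sigma_k=\cO(k^{-1-\alpha})$ together with the uniform Barron-norm bound supplies exactly the approximation rates needed for the balance.
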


This theorem establishes that if the similarity function $g$ is well-behaved, two-layer multi-head transformers with FFNs can efficiently implement the generalized induction head~\eqref{equ: induction head, type III}.

{\bf The role of FFNs.} 
In contrast to Theorem~\ref{theorem: type II}, transformer models in Theorem~\ref{theorem: type III} include FFNs. 
These FFN layers are used to approximate the similarity function $g$.
Specifically, we consider the proper orthogonal decomposition (POD) of $g$, which can be viewed as an extension of the matrix singular value decomposition (SVD) applied to functions of two variables. 
For $g:\cI\times\cI\to\bbR$, its POD is $g(\bu,\bv)=\sum_{k=1}^{\infty}\sigma_{k}\phi_{k}(\bu)\psi_{k}(\bv)$, where $\phi_{k},\psi_k$ are orthonormal bases for $L^2(\cI)$ (see Appendix \ref{appendix: lemma} for details).
Intuitively, the FFN in the first layer is used to efficiently approximate $K$ bases ($\phi_i$'s and $\psi_i$'s). 
Then, in the second layer, DP in $\SA$ can approximately reconstruct $g$ by using the truncated sum $g(\bu,\bv)\approx\sum_{k=1}^{K}\sigma_{k}\phi_{k}(\bu)\psi_{k}(\bv)$.
The complete proof is deferred to Appendix \ref{subsection: thm III proof}.



\section{The Transition from Lazy to Rich Mechanisms in Learning Induction Heads}\label{section: optimization}



In this section, we investigate the dynamics of  learning  induction heads using a  transformer, particularly focusing on how this differs from  $n$-gram learning. To facilitate the analysis, we consider  a mixed target function that comprises a $4$-gram component and a vanilla induction head component as defined in Eq.~\eqref{equ: induction head, type I}. Specifically, we study  the gradient flow dynamics of a two-layer multi-head transformer without FFNs on this task.





\subsection{Setups}


\subsubsection{Mixed Target Function}



{\bf Mixed target function.} 
Let the input sequence be $X=(x_1,\cdots,x_L)\in\bbR^{1\times L}$. 
Our mixed target function $f^\star$ contains both a $4$-gram component $f_{\FG}^\star$ and an in-context $2$-gram component $f_{\IH}^\star$:


\begin{equation}\label{target: optimization}
    \!\!\! f^\star(\bX):=\left(\frac{\alpha^\star}{1+\alpha^\star}f_{\FG}^\star(\bX),\frac{1}{1+\alpha^\star}f_{\IH}^\star(\bX)\right)^\top\!\!\!\in\bbR^2,
\end{equation}


where $\alpha^\star>0$ represents the relative weight between the two components: $f_{\FG}^\star(\bX)$ and $f_{\IH}^\star(\bX)$.
Here, $f_{\FG}^\star$ represents a $4$-gram component and $f_{\IH}^\star$ is given by the vanilla induction head~\eqref{equ: induction head, type I} to represent a type of in-context 2-gram information:


\begin{align*}
f_{\FG}^\star(\bX)&:=\bx_{L-2},\\
f_{\IH}^\star(\bX)&:=\sum_{s=2}^{L-1}x_s\ {\rm sm}\Big(\big(x_L {w^\star}^2 x_{\nu-1}\big)_{\nu=2}^{L-1}\Big)_{\nu=s}.    
\end{align*}


Note that $f_{\FG}^\star$ denotes a ``simplest'' 4-gram target, where the next token is predicted according to the conditional probability $p(z|X)=p(z|x_{L},x_{L-1},x_{L-2})=\bbI\{z=x_{L-2}\}$.

\begin{remark}[The reason for considering $4$-gram]
Note that our target includes a  4-gram component rather than simpler 2- or 3-gram components. 
As suggested by the experimental results in~\citet{elhage2021mathematical}, for a learned two-layer transformer that implements vanilla induction head $\IH$, the first layer has extracted both $x_{L}$ and $x_{L-1}$, which can be outputted using the residual block. Thus, the 2- and 3-gram targets: $p(z|X)=\bbI\{z=x_{L}\}$ and $p(z|X)=\bbI\{z=x_{L-1}\}$ must be learned prior to the induction head. Hence we focus on the more challenging 4-gram target to avoid trivializing the learning process, though our analysis extends straightforwardly to the 2- or 3-gram scenarios.
\end{remark}

\begin{remark}[Extension]
    Since the transformer studied in this section does not have FFNs, its expressive power is limited. Consequently, we only consider the  simple but representative mixed target~\eqref{target: optimization}. 
    However, \eqref{target: optimization} can be generalized to $f^\star(X)=F(f_{\FG}^\star(\bX);f_{\IH}^\star(X))$, where $F$ is general nonlinear function.
    Such a form can be efficiently approximated by transformers with FFNs. We leave the dynamics analysis under this general setting for future work.
\end{remark}





\subsubsection{Two-layer Multi-head Transformer with Reparameterization}


{\bf Two-layer multi-head transformer w/o FFNs.}
We consider a simple two-layer multi-head transformer $\TF$, where the first layer contains a single head $\SA^{(1,1)}$, and the second layer contain two heads $\SA^{(2,1)},\SA^{(2,2)}$.
Given an input sequence $X=(x_1,\cdots,x_L)\in\bbR^{1\times L}$, it is first embedded as $X^{(0)}:=(X^\top,0^\top)\in\bbR^{2\times L}$. The model then processes the sequence as follows:
\begin{align*}
    X^{(1)}=&
    \ X^{(0)}+\SA^{(1,1)}(X^{(0)}),
    \\\TF(X)=&\ \SA^{(2,1)}(X^{(1)})+\SA^{(2,2)}(X^{(1)}).
\end{align*}



{\bf Reparameterization.} Despite the simplification, the transformer above is still too complicated for dynamics analysis. To overcome this challenge, we adopt the reparametrization trick used in previous works~\citep{tian2023scan,huang2023context,chen2024unveiling}. Specifically, by Theorem~\ref{theorem: standard} and its proof, {\em the first layer does not require DP, and the second layer does not require RPE}.
Moreover, to express the $4$-gram component $f_{\FG}^\star$, we only need an additional head without DP in the second layer. Therefore, we can reparameterize the model as follows:


\begin{itemize}[leftmargin=2em]
    \item {\bf The first layer.} 
    This layer has only one trainable parameter $p^{(1,1)}$. 
    In the unique head $\SA^{(1,1)}$, DP is removed by setting $W_{Q}^{(1,1)}=W_{K}^{(1,1)}=0$, and we let $W_V^{(1,1)}=\begin{pmatrix}
        0 & 0 \\ 1 & 0
    \end{pmatrix}$. 
    The output sequence of this layer given by $
        X^{(1)}=X^{(0)}+\SA^{(1,1)}(X^{(0)})=\begin{pmatrix}
        x_1,\cdots,x_L
        \\
        y_1,\cdots,y_L
    \end{pmatrix}$, where

    
    \begin{equation}\label{equ: output of 1st layer}
    y_s\!=\!\sum_{\tau=1}^{s-1} x_\tau\ {\rm sm}\left(\!\big(-p^{(1,1)}(s-1-\nu)\big)_{\nu=1}^{s-1}\right)_{\nu=\tau}
    \end{equation}

    
    for $s \in [L]$, where $p^{(1,1)}$, used in RPE~\eqref{equ: RPE}, is the unique trainable parameter in this layer.


    \item {\bf The second layer.} This layer has 5 trainable parameters: $w_V^{(2,1)},w_V^{(2,2)},p^{(2,1)},w_{K}^{(2,2)},w_{Q}^{(2,2)}$ for parametrizing the 
    two heads. 
    The first head $\SA^{(2,1)}$ without DP is responsible to fit $f_{\FG}^\star$, while the second head $\SA^{(2,2)}$ without RPE is responsible to fit $f_{\IH}^\star$. Specifically,

    
    \begin{gather*}
        W_{Q}^{(2,1)}=W_{K}^{(2,1)}=0,
        W_{V}^{(2,1)}=\begin{pmatrix} 0 & w_{V}^{(2,1)}\\ 0 & 0 \end{pmatrix},
        \\
        p^{(2,2)}=0,
        W_{V}^{(2,2)}=\begin{pmatrix} w_{V}^{(2,2)} & 0 \\ 0 & 0 \end{pmatrix}.
    \end{gather*}


    Then the second layer processes $X^{(1)}$ and outputs the last token:
    


    \begin{gather*}
    \!\!\TF_{-1}(\bX;\theta)\!=\!\Big(
        \sum_{s=2}^{L-2} w_V^{(2,1)}y_{s}\pi_s,
        \sum_{s=2}^{L-2} w_V^{(2,2)}x_s\rho_s\!\Big)^\top\!\!,
    \end{gather*}

    
    \begin{equation}\label{equ: output}
    \begin{gathered}
        \pi_s={\rm sm}\left(\big(-p^{(2,1)}(L-1-\nu)\big)_{\nu=2}^{L-2}\right)_{\nu=s},
        \\
        \rho_s={\rm sm}\left(\big(x_L w_Q^{(2,2)} w_K^{(2,2)} x_{\nu-1}\big)_{\nu=2}^{L-2}\right)_{\nu=s},
    \end{gathered}
    \end{equation}

    where $y_s$ is given by~\eqref{equ: output of 1st layer}. $p^{(2,1)},w_V^{(2,1)}$ are trainable parameters in $\SA^{(2,1)}$, while $w_Q^{(2,2)},w_K^{(2,2)},w_V^{(2,2)}$ are trainable parameters in $\SA^{(2,2)}$.
    \end{itemize}

    The set of all six trainable parameters across both layers is denoted by $\theta$.
    



\subsubsection{Gradient Flow on Square Loss}

    
We consider the Gaussian input and square loss, both of which are commonly used in analyzing transformer dynamics and ICL~\citep{akyurek2022learning,huang2023context,wang2024transformers}. 
The loss is defined as:
\begin{equation}\label{equation: loss}
    \!\!\!\!\cL(\btheta)\!=\!\frac{1}{2}\bbE_{\bX\sim\cN(\bzero,\bI_{L\times L})}\!\left[\norm{\TF_{-1}({\bX};\btheta)-f^\star(\bX)}_2^2\right],
\end{equation}
To characterize the learning of $\FG$ and $\IH$, we introduce the following two partial losses:
\begin{align*}
    \cL_{\FG}(\btheta)&=\frac{1}{2}{\bbE}_X\left(\TF_{-1,1}({\bX};\btheta)-f_1^\star(\bX)\right)^2,\\ 
    \cL_{\IH}(\btheta)&=\frac{1}{2}{\bbE}_X\left(\TF_{-1,2}({\bX};\btheta)-f_2^\star(\bX)\right)^2,
\end{align*}
which correspond to the two dimensions in $\TF_{-1}({\bX};\btheta)-f^\star(\bX)\in\bbR^2$, respectively.
It follows that $\cL(\theta)=\cL_{\FG}(\theta)+\cL_{\IH}(\theta)$.

{\bf Gradient flow (GF).} 
We analyze the GF for minimizing the objective~\eqref{equation: loss}:
\begin{equation}\label{equ: GF}
\begin{gathered}
    \frac{\rd \btheta(t)}{\rd t}=-\nabla\cL(\btheta(t)), 
    \\\text{\ starting with\ } \theta(0)=(\sigma_{\rm init},\cdots,\sigma_{\rm init})^\top,
\end{gathered}
\end{equation}
where $0<\sigma_{\rm init}\ll1$ is sufficiently small.
Note that $\sigma_{\rm init}\ne0$ prevents $\nabla\cL(\theta(0))=0$.


{\bf Layerwise training paradigm}.
We consider a layerwise training paradigm in which, during each stage, only one layer is trained by GF. Specifically,


\begin{itemize}[leftmargin=2em]
    \item {\bf Training Stage I:} In this phase, only the parameter in the first layer, i.e., $p^{(1,1)}$,  is trained.
    \item {\bf Training Stage II:} In this phase,the first layer parameter $p^{(1,1)}$ keeps fixed and only parameters in the second layer are trained: $w_V^{(2,1)}, w_V^{(2,2)}, p^{(2,1)}, w_Q^{(2,2)}, w_K^{(2,2)}$. 
\end{itemize}


This type of layerwise training has been widely used to study the training dynamics of neural networks, including FFN networks~\citep{safran2022optimization,bietti2023learning,wang2023learning} and transformers~\citep{tian2023scan,nichani2024transformers,chen2024unveiling}.


\begin{lemma}[Training Stage I]\label{lemma: layer I} For the Training Stage I, $\lim\limits_{t\to+\infty}p^{(1,1)}(t)=+\infty$.
\end{lemma}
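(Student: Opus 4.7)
In Stage~I only $p := p^{(1,1)}$ evolves, with every other parameter frozen at $\sigma_{\rm init}$. The first observation is that, by the explicit formula~\eqref{equ: output}, the second output coordinate $\sum_s w_V^{(2,2)} x_s \rho_s$ depends on $X^{(0)}$ only through $x_s$ and $x_{\nu-1}$, not through the first-layer feature $y_s$; consequently $\cL_{\IH}$ is independent of $p$. The Stage~I dynamics thus reduces to the scalar ODE $\dot p(t) = -\partial_p \cL_{\FG}(p(t))$, and it suffices to establish $\partial_p \cL_{\FG}(p) < 0$ for every $p \ge 0$.

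Next, derive an explicit expression for $\cL_{\FG}(p)$. Since $X\sim\cN(0,I_L)$ and the first output coordinate is linear in $X$, rewrite
\begin{equation*}
    \TF_{-1,1}(X) \;=\; \sigma_{\rm init} \sum_{\tau} r_\tau(p)\, x_\tau, \qquad r_\tau(p) \;:=\; \sum_{s} \pi_s\, q_\tau^{(s)}(p),
\end{equation*}
where $\pi_s$ depends only on the frozen $p^{(2,1)}=\sigma_{\rm init}$ and $q_\tau^{(s)}(p)$ is the first-layer softmax weight from~\eqref{equ: output of 1st layer}. With $\beta := \alpha^\star/(1+\alpha^\star)$, isotropy of the Gaussian yields
\begin{equation*}
    \cL_{\FG}(p) \;=\; \tfrac{1}{2}\beta^2 \;-\; \sigma_{\rm init}\beta\, r_{L-2}(p) \;+\; \tfrac{1}{2}\sigma_{\rm init}^2 \sum_\tau r_\tau(p)^2.
\end{equation*}

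The core technical step is to show $\cL_{\FG}'(p)<0$ for all $p\ge 0$. I plan to verify: (i) $r_{L-2}'(p) > 0$ for every $p\ge 0$---as $p$ grows, the first-layer softmax concentrates on the strict predecessor $\nu=s-1$, monotonically sharpening the map $y_s \mapsto x_{s-1}$ and enlarging the coefficient with which $x_{L-2}$ enters $\TF_{-1,1}$ (via $y_{L-1}$); this follows from a direct computation with the standard softmax derivative formula. (ii) The quadratic remainder is uniformly controlled---from $|\partial_p q_\tau^{(s)}| \le L\, q_\tau^{(s)}$ and $\sum_\tau q_\tau^{(s)}=1$ one obtains $|r_\tau'(p)| \le L\, r_\tau(p)$ and hence $|\sigma_{\rm init}^2 \sum_\tau r_\tau r_\tau'| \le \sigma_{\rm init}^2 L$. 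Choosing $\sigma_{\rm init}$ sufficiently small relative to $\beta$ and $L$, the linear term from~(i) dominates the remainder from~(ii), giving $\cL_{\FG}'(p)<0$.

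Once strict negativity is in hand, $\dot p(t) > 0$ throughout, so $p(t)$ is strictly increasing. A finite limit $p^\star$ would force $\dot p(t) \to -\cL_{\FG}'(p^\star) > 0$ by continuity, contradicting $p(t)\to p^\star$; hence $p(t)\to +\infty$. The main obstacle is the uniform comparison in~(ii): as $p\to\infty$, both $r_{L-2}'(p)$ and the quadratic remainder decay at the same rate $\Theta(e^{-p})$, so one cannot argue by orders of magnitude but must compare prefactors. The smallness of $\sigma_{\rm init}$ relative to $\beta$ (and to the $L$-dependent prefactor of the $e^{-p}$ decay of $r_{L-2}'$) is essential and must be quantified carefully so that strict negativity holds over the entire half-line $[0,\infty)$.
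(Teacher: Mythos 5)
Your decomposition of $\cL_{\FG}$ into a constant, a cross term linear in $r_{L-2}(p)$, and a quadratic remainder $\tfrac12\sigma_{\rm init}^2\sum_\tau r_\tau(p)^2$ is precisely the decomposition the paper uses, and your claim that $r_{L-2}'(p)>0$ is the correct key monotonicity fact for the cross term. However, your planned handling of the quadratic remainder has a genuine gap that you yourself flag but do not close, and as stated it cannot be closed. The bound $|\sigma_{\rm init}^2\sum_\tau r_\tau r_\tau'|\le\sigma_{\rm init}^2 L$ is constant in $p$, while $\sigma_{\rm init}\beta\,r_{L-2}'(p)$ decays like $\Theta(e^{-p})$; therefore no choice of $\sigma_{\rm init}$ small makes the cross term dominate the uniform bound on the whole half-line $[0,\infty)$. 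Your fallback plan of ``comparing prefactors of the common $e^{-p}$ decay'' presupposes that $\sum_\tau r_\tau r_\tau'$ has the adversarial (positive) sign and merely needs to be out-weighed; without knowing its sign, the prefactor comparison is not well posed, and the argument stalls.

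The observation that unblocks the proof — and the one the paper makes — is that the quadratic remainder is not adversarial at all: the map $p\mapsto\sum_\tau r_\tau(p)^2$ is monotone non-increasing, i.e.\ $\sum_\tau r_\tau(p)\,r_\tau'(p)\le 0$ for every $p\ge 0$. (The paper proves this by an explicit, somewhat tedious computation of the derivative of
$q(\tilde p)=\sum_{\tau=1}^{L-2}\Bigl(\sum_{s=\tau+1}^{L-1}\tfrac{e^{-\tilde p(s-1-\tau)}}{\sum_{k=0}^{s-2}e^{-\tilde p k}}\Bigr)^2$,
reducing the sign of $q'$ to the sign of a polynomial expression in $e^{-\tilde p}$ which is manifestly $\le0$.) Once this sign is in hand, both summands of
\begin{equation*}
\cL_{\FG}'(p)\;=\;-\sigma_{\rm init}\beta\,r_{L-2}'(p)\;+\;\sigma_{\rm init}^2\sum_\tau r_\tau(p)\,r_\tau'(p)
\end{equation*}
are nonpositive, with the first strictly negative, so $\dot p=-\cL_{\FG}'(p)>0$ unconditionally — no smallness of $\sigma_{\rm init}$ is needed. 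The paper then lower-bounds the cross-term contribution by $c\,e^{-\tilde p}$ and integrates to get $\tilde p(t)\ge\log(e^{\tilde p(0)}+ct)\to\infty$; your closing argument (monotone increase plus no finite fixed point) is an acceptable alternative once strict negativity of $\cL_{\FG}'$ is established. In short: replace the ``bound the quadratic remainder and make $\sigma_{\rm init}$ small'' step with a proof that the quadratic remainder's $p$-derivative is nonpositive, and the rest of your plan goes through.
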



According to \eqref{equ: output of 1st layer}, this lemma implies that, at the end of Training Stage I, the first layer captures the preceding token $x_{s-1}$ for each token $x_s$, i.e., $y_s=x_{s-1}$. This property is crucial for transformers to implement  induction heads and  aligns with our approximation result in Theorem~\ref{theorem: standard}. The proof of Lemma~\ref{lemma: layer I} is deferred to Appendix \ref{appendix: dynamics: stage I}.



\subsection{Training Stage II: Transition from \texorpdfstring{$4$}{}-gram to Induction Head}\label{subsection: optimization}


In this section, we analyze the dynamics in Training Stage II. We start from the following lemma:

\begin{lemma}[Parameter balance]\label{lemma: parameter balance}
In Training Stage II, it holds that $|{w_{Q}^{(2,2)}}(t)|^2\equiv |{w_{K}^{(2,2)}}(t)|^2$.
\end{lemma}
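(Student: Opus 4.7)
The plan is to exploit the fact that, in Training Stage II, the two parameters $w_Q^{(2,2)}$ and $w_K^{(2,2)}$ enter the model only through their product. Inspecting the expression for the attention weights $\rho_s$ in \eqref{equ: output}, the only place either parameter appears is in the scalar $x_L\, w_Q^{(2,2)} w_K^{(2,2)}\, x_{\nu-1}$ inside the softmax; likewise, $\TF_{-1}(\bX;\btheta)$ and hence $\cL$ depends on these two parameters only through the single scalar product $\beta := w_Q^{(2,2)} w_K^{(2,2)}$. This multiplicative coupling is the standard source of conservation laws under gradient flow (analogous to the well-known balance laws for two-layer linear or bilinear models), and I would extract the statement from it.

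First I would write $\cL(\btheta) = \widetilde{\cL}(\beta; \text{rest})$ for some scalar function $\widetilde{\cL}$ of $\beta$ and the remaining trainable parameters, and apply the chain rule to obtain
\begin{equation*}
    \frac{\partial \cL}{\partial w_Q^{(2,2)}} = w_K^{(2,2)} \cdot G, \qquad \frac{\partial \cL}{\partial w_K^{(2,2)}} = w_Q^{(2,2)} \cdot G,
\end{equation*}
where $G := \partial \widetilde{\cL}/\partial \beta$ is a common scalar (evaluated along the trajectory). Substituting into the gradient flow~\eqref{equ: GF} gives $\dot{w}_Q^{(2,2)} = -w_K^{(2,2)} G$ and $\dot{w}_K^{(2,2)} = -w_Q^{(2,2)} G$. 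Differentiating the squared magnitudes yields
\begin{equation*}
    \frac{\rd}{\rd t}\bigl(|w_Q^{(2,2)}|^2 - |w_K^{(2,2)}|^2\bigr) = 2 w_Q^{(2,2)} \dot{w}_Q^{(2,2)} - 2 w_K^{(2,2)} \dot{w}_K^{(2,2)} = -2 w_Q^{(2,2)} w_K^{(2,2)} G + 2 w_K^{(2,2)} w_Q^{(2,2)} G = 0,
\end{equation*}
so the quantity $|w_Q^{(2,2)}|^2 - |w_K^{(2,2)}|^2$ is conserved along the flow during Training Stage II.

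Finally, I would invoke the initialization specified in \eqref{equ: GF}, namely $w_Q^{(2,2)}(0) = w_K^{(2,2)}(0) = \sigma_{\rm init}$, which makes the conserved difference equal to zero at $t = 0$ and therefore identically zero for all $t \geq 0$ throughout Training Stage II. Since Training Stage II is the only phase in which these two parameters evolve (the first layer is frozen), this completes the argument. The only potentially subtle point — and the one I would be careful about — is justifying that $G$ is a well-defined common scalar (i.e., that the chain-rule factorization holds pointwise along the trajectory and is not obstructed by, say, the softmax or by contributions from other parameters); this is straightforward since $\beta$ enters smoothly inside a softmax and $\cL$ is differentiable, but it is the only step requiring a brief check rather than a one-line observation.
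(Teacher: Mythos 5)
Your proposal is correct and follows essentially the same route as the paper's proof: both use the chain rule to write $\partial\cL/\partial w_Q^{(2,2)}=w_K^{(2,2)}\cdot\partial\cL/\partial(w_Q^{(2,2)}w_K^{(2,2)})$ and $\partial\cL/\partial w_K^{(2,2)}=w_Q^{(2,2)}\cdot\partial\cL/\partial(w_Q^{(2,2)}w_K^{(2,2)})$, conclude $\tfrac{\rd}{\rd t}\bigl(|w_Q^{(2,2)}|^2-|w_K^{(2,2)}|^2\bigr)=0$, and then invoke the symmetric initialization. The extra paragraph justifying that the loss factors through $\beta=w_Q^{(2,2)}w_K^{(2,2)}$ is a fine piece of diligence but not a genuine concern, since $\rho_s$ and hence $\TF_{-1}$ visibly depend on the two parameters only through their product.
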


Lemma~\ref{lemma: parameter balance} is similar to the balance result for homogeneous networks~\citep{du2018algorithmic}, and its proof can be found at the start of Appendix \ref{appendix: dynamics: stage II}.
By this lemma, we can define $w_{KQ}^{(2,2)}:=w_Q\equiv w_K$.
Additionally, Lemma~\ref{lemma: layer I} ensures that $p^{(1,1)}=+\infty$ holds during Stage II.
For simplicity, we denote $w_{V_1}:=w_V^{(2,1)},w_{V_2}:=w_V^{(2,2)}, p:=p^{(2,1)},w_{KQ}:=w_{KQ}^{(2,2)}$. Consequently, the training dynamics are reduced to four parameters


$$
{\theta}=\left(w_{V_1},w_{V_2},p,w_{KQ}\right),
$$


where we still denote the set of parameters as $\theta$ without introducing ambiguity.
It is important to note that the problem remains {\bf highly non-convex} due to the joint optimization of both inner parameters ($p,w_{KQ}$) and outer parameters ($w_{V_1},w_{V_2}$) in the two heads. At this training stage, GF has a {\bf unique fixed point}: 
$$w_{V_1}=\frac{\alpha^\star}{1+\alpha^\star},\, w_{V_2}=\frac{1}{1+\alpha^\star},\, {p}=+\infty,\, w_{KQ}=w^\star,$$ 
which corresponds to a global minimizer of the objective~\eqref{equation: loss}.



\begin{figure}[!ht]
    \centering
    \includegraphics[width=0.28\linewidth]{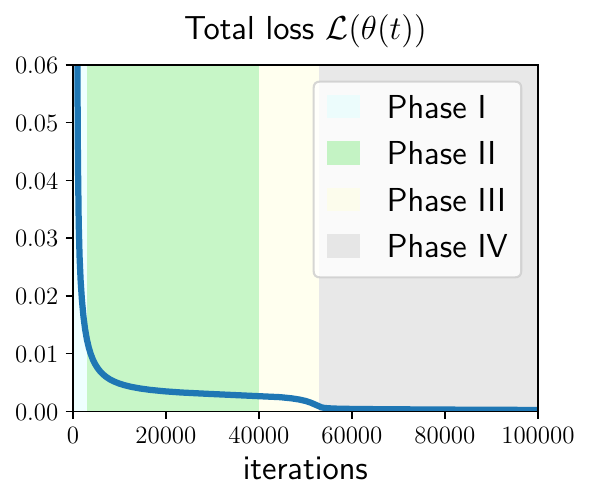}
    \includegraphics[width=0.32\linewidth]{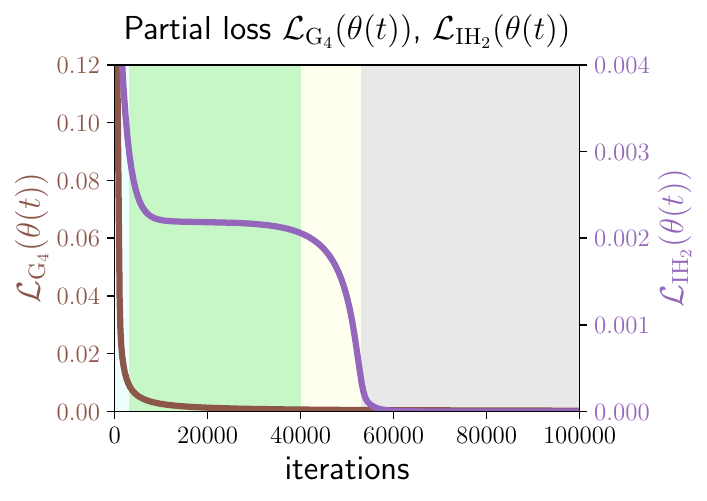}
    \includegraphics[width=0.27\linewidth]{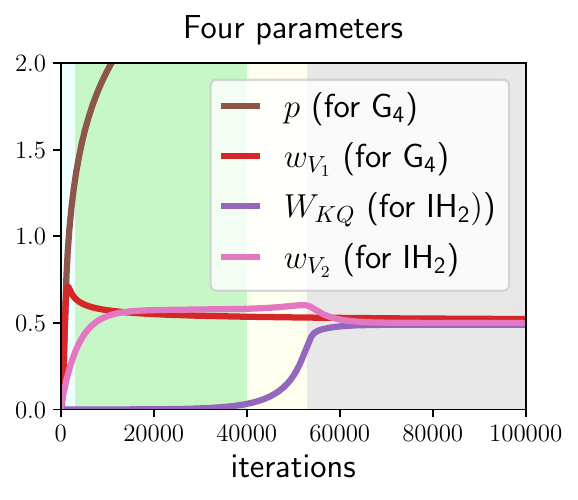}

    
    \caption{\small 
    Visualization of the dynamical behavior of Training Stage II with total loss, partial loss, and the parameter evolution. Here,  $\alpha^\star=1,w^\star=0.49,\sigma_{\rm init}=0.01,L=40$. 
    The is clearly shown  that transformer learns the $4$-gram component first and then, starts to learn the induction head mechanism. Notably, the entire dynamics unfold in four distinct phases, consistent with our theoretical results (Theorem~\ref{thm: optimization}).
    For more experimental details, we refer to Appendix~\ref{sec: experimental-details-fig2}.}
    \label{fig: dynamics}
\end{figure}


As shown in Figure~\ref{fig: dynamics}, a learning transition from the 4-gram mechanism to the induction head mechanism does occur in our setting. 
Moreover, the learning process exhibits a four-phase dynamics. The next theorem provides a precise characterization of the four phases, whose proof can be found in Appendix~\ref{appendix: dynamics: stage II}.

\begin{theorem}[Learning transition and $4$-phase dynamics]\label{thm: optimization}
Let $\alpha^\star=\Omega(1)$ and $w^\star=\cO(1)$, and we consider the regime of small initialization ($0<\sigma_{\rm init}\ll1$) and long input sequences ($L\gg1$).
Then we have the following  results:


\begin{itemize}[leftmargin=2em]
    \item {\bf Phase I (partial learning).} In this phase, most of the $4$-gram component in the mixed target is learned, while a considerable number of induction head component have not yet been learned. Specifically, let $T_{\rm I}=\cO(1)$, then we have the following estimates:
    \begin{align*}
    \cL_{\FG}(\theta(T_{\rm I}))\leq& 0.01\cdot\cL_{\FG}(\theta(0)),\\
    \cL_{\IH}(\theta(T_{\rm I}))\geq& 0.99\cdot\cL_{\IH}(\theta(0)).
    \end{align*}

    
    \item {\bf Phase II (plateau) + Phase III (emergence).} In these two phases, the learning of the induction head first gets stuck in a plateau for $T_{\rm II}$ time, then is learned suddenly. 
    Specifically, denoted by an observation time $T_o=\Theta(L)$, we have the following tight estimate of the duration:
    \begin{align*}
    T_{\rm II}:=&\inf\Big\{t>T_o:\cL_{\IH}(\theta(t))\leq 0.99 \cdot\cL_{\IH}(\theta(T_o))\Big\}
    \\=&\Theta\left((\alpha^\star+1)^2L\log(1/\sigma_{\rm init})/{w^\star}^2\right);
    \\
    T_{\rm III}:=&\inf\Big\{t>T_o:\cL_{\IH}(\theta(t))\leq 0.01\cdot\cL_{\IH}(\theta(T_o))\Big\}
    \\=&\Theta\left((\alpha^\star+1)^2L\log(1/\sigma_{\rm init})/{w^\star}^2\right).
    \end{align*}
    During these phases, the parameter $w_{KQ}$ (for learning $w^\star$ in $\IH$) increases exponentially:
    $$w_{KQ}(t)=\sigma_{\init}\exp\left(\Theta\left(\frac{{w^\star}^2t}{(1+\alpha^\star)^2 L}\right)\right), \ t<T_{\rm III}.$$

     
    \item 
    {\bf Phase IV (convergence).} In this phase, the loss converges toward zero. Specifically, the following convergence rates hold for all $t>T_{\rm III}:$
    \begin{align*}
        {\cL_{\FG}(\theta(t))}
        &=\cO\left(\frac{1}{t}\right),\\
        \cL_{\IH}(\theta(t))&=\cO\left(\exp\left(-\Omega\left(\frac{{w^\star}^2t}{(1+\alpha^\star)^2 L}\right)\right)\right),
    \end{align*}
    and $\cL(\theta(t))=\cL_{\FG}(\theta(t))+\cL_{\IH}(\theta(t))$.
\end{itemize}
\end{theorem}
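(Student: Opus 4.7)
The plan is to exploit the Gaussianity of the inputs together with Lemma~\ref{lemma: layer I} (which guarantees $y_s = x_{s-1}$ throughout Stage~II) to derive closed-form expressions for $\cL_{\FG}(\theta)$, $\cL_{\IH}(\theta)$, and their gradients in the four parameters $(w_{V_1}, p, w_{V_2}, w_{KQ})$. A first key observation is that, because the two heads write to orthogonal output coordinates, the gradient flow decouples into a ``4-gram sub-system'' $(w_{V_1}, p)$ driven by $\cL_{\FG}$ and an ``induction-head sub-system'' $(w_{V_2}, w_{KQ})$ driven by $\cL_{\IH}$. Using Gaussian moments, $\cL_{\FG}$ reduces to a quadratic in $w_{V_1}$ parametrized by the softmax weights $\{\pi_s(p)\}$, and $\cL_{\IH}$ to a bilinear expectation in $\rho_s(w_{KQ})$ and the teacher weights $\rho_s^\star(w^\star)$. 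From these formulas one reads off the two structural features that drive the transition: (a) $\cL_{\FG}$ is linear in $w_{V_1}$ with $\Theta(1)$ coefficients, whereas $\cL_{\IH}$ depends on $w_{KQ}$ only through the softmax and is therefore \emph{flat to second order at} $w_{KQ} = \sigma_{\init}$; (b) the relative weight $\alpha^\star/(1+\alpha^\star)$ sets the amplitudes of the residuals, so the effective driving force for $w_{KQ}$ carries a $1/(1+\alpha^\star)$ prefactor.

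For Phase~I, I would first show that on the $O(1)$ time scale the 4-gram sub-system is essentially a 1D linear regression: conditioned on $p$, the fixed point $w_{V_1}^*(p) = \alpha^\star \pi_{L-2}(p)/((1+\alpha^\star)\sum_s \pi_s(p)^2)$ is attractive with rate $\Theta(1)$ and the gradient in $p$ is bounded away from zero whenever $\cL_{\FG}$ is not yet small, yielding $\cL_{\FG}(\theta(T_{\rm I})) \leq 0.01\,\cL_{\FG}(\theta(0))$ in $T_{\rm I} = O(1)$. During this short window $w_{KQ}(t) \leq \sigma_{\init}\,e^{Ct}$ because $|dw_{KQ}/dt| \leq C\,w_{KQ}$ (the softmax derivative vanishes at $w_{KQ}=0$), so $\rho_s$ stays within $o(1)$ of uniform and $\cL_{\IH}(\theta(T_{\rm I})) \geq 0.99\,\cL_{\IH}(\theta(0))$. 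For Phases~II-III, after an initial equilibration of $w_{V_2}$ to $\approx 1/(1+\alpha^\star)$ on the time scale $T_o = \Theta(L)$ (governed by a linear ODE $dw_{V_2}/dt \propto -(w_{V_2} - 1/(1+\alpha^\star))/L$), the $w_{KQ}$ dynamics is dominated by a cross-term with the teacher. Linearizing $\rho_s$ in $w_{KQ}^2$ and applying Lemma~\ref{lemma: parameter balance} gives $dw_{KQ}/dt = r(\theta)\,w_{KQ} + O(w_{KQ}^3)$, where $r = \Theta(w^{\star 2}/((1+\alpha^\star)^2 L))$: the $1/L$ arises from the uniform-softmax variance, and $1/(1+\alpha^\star)^2$ from the product of the target amplitude and the equilibrated value of $w_{V_2}$. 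Solving the linear ODE yields $w_{KQ}(t) = \sigma_{\init}\,e^{r(t-T_o)}(1+o(1))$ until $w_{KQ}$ reaches $\Theta(1)$, at which point the loss drops through $99\%$ and $1\%$ of its plateau value within one multiplicative $O(1/r)$ window, giving the matching estimates $T_{\rm II}, T_{\rm III} = \Theta((1+\alpha^\star)^2 L\log(1/\sigma_{\init})/w^{\star 2})$.

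For the convergence Phase~IV, I would localize the dynamics near the unique fixed point $(\alpha^\star/(1+\alpha^\star), 1/(1+\alpha^\star), +\infty, w^\star)$. For $\cL_{\IH}$, the Hessian restricted to $(w_{V_2}, w_{KQ})$ is positive definite with smallest eigenvalue $\Omega(w^{\star 2}/((1+\alpha^\star)^2 L))$ (the same linearization as above), so a Polyak--{\L}ojasiewicz argument gives exponential decay at the stated rate. The rate for $\cL_{\FG}$ is only polynomial: once $w_{V_1}$ tracks $w_{V_1}^*(p)$, the residual loss equals $\Theta(1 - \pi_{L-2}(p)) = \Theta(e^{-p})$ while $\partial\cL_{\FG}/\partial p = \Theta(e^{-p})$ as well, producing $dp/dt = \Theta(e^{-p})$, hence $p(t) = \log t + \Theta(1)$ and $\cL_{\FG}(\theta(t)) = \Theta(1/t)$. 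The \textbf{main obstacle} is to rigorously justify the time-scale separation between the two sub-systems during Phases~II-III: one must show that the small but non-zero cross-couplings (e.g.\ the slow drift of $w_{V_1},p$ and the shared-residual effect on $w_{V_2}$) do not disrupt the clean exponential saddle escape of $w_{KQ}$, which requires uniform control of the cross-terms over the long interval $\Theta(L\log(1/\sigma_{\init}))$. Establishing both the upper and lower bounds on $T_{\rm II}$ and $T_{\rm III}$ further demands sharp two-sided expansions of the softmax beyond leading order, and a careful argument that the $O(w_{KQ}^3)$ remainder in the linearized ODE remains negligible until $w_{KQ}$ reaches $\Theta(1)$.
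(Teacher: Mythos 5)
Your high-level plan tracks the paper's proof in its essential structure: you notice the exact decoupling of the two heads, identify the quadratic (vs.~linear) parameter dependence of the dot-product head as the source of the time-scale separation, and analyze each sub-system by a ``monotone escape phase'' followed by a ``convergence phase.'' That is indeed the paper's scaffolding. However, there are several concrete gaps and one false claim.

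First, the paper cannot derive closed-form Gaussian moments for $\cL_{\IH}$ directly, because the softmax normalizer $\sum_\nu \exp(w^2 x_L x_{\nu-1})$ is itself random. It therefore replaces $\sm(z_1,\dots,z_L)$ by $\frac{1}{L}\exp(z_1,\dots,z_L)$ (following \citet{bai2023transformers}), which makes $\cL_{\IH}$ a tractable function of $w^2$ and $h$ involving terms like $(1-4w^4)^{-1/2}$. Your sketch says ``$\cL_{\IH}$ reduces to a bilinear expectation in $\rho_s(w_{KQ})$ and $\rho_s^\star(w^\star)$,'' which silently assumes a closed form that does not exist for the genuine softmax; without this approximation step your program stalls at the very first computation.

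Second, the ``main obstacle'' you flag — ``the small but non-zero cross-couplings'' between the two sub-systems — does not exist in this model. Because the two heads write into orthogonal output coordinates and the target is likewise two-dimensional with a diagonal loss, $\cL(\theta)=\cL_{\FG}(w_{V_1},p)+\cL_{\IH}(w_{V_2},w_{KQ})$ exactly, and the gradient flow is a direct product of two planar systems. There is nothing to control uniformly over $\Theta(L\log(1/\sigma_{\init}))$; having noticed the decoupling in your first paragraph, you should use it. The real technical work lies elsewhere: obtaining matching upper and lower bounds on $T_{\rm II}, T_{\rm III}$ requires two-sided bounds on the nonlinear $w$-ODE $\dot w \propto w\big(h^\star\phi(w^2+{w^\star}^2)-h\,\phi(2w^2)\big)$, which the paper gets by a convexity (Weighted Karamata) argument rather than a ``linearize and control the $O(w_{KQ}^3)$ remainder'' argument; your remainder would not stay uniformly small over the escape window where $w$ grows from $\sigma_{\init}$ all the way to $\Theta(w^\star)$.

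Third, your Phase~IV analysis of $\cL_{\FG}$ contains a quantitative error. At the conditionally optimal $w_{V_1}^*(p)$ one has $\cL_{\FG}=\Theta\!\big((g^\star)^2 e^{-2p}\big)$, not $\Theta(e^{-p})$, and $\dot p=\Theta(e^{-2p})$, so $p(t)=\tfrac12\log t+\Theta(1)$. The final rate $\cL_{\FG}=\Theta(1/t)$ happens to come out the same, but the exponents in your intermediate steps are wrong. The paper instead avoids the conditional-optimality ansatz entirely: it introduces the substitution $u=e^{-p}$, $v=ug^\star+(g^\star-g)$ and shows $\frac{\rd}{\rd t}(u^2+v^2)\lesssim -(u^2+v^2)^2$, giving $u^2,v^2=O(1/t)$ jointly, which tracks $g$ and $p$ simultaneously without assuming $g$ has already equilibrated. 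A similar Lyapunov function in $(w^2,h)$ handles the exponential Phase~IV rate for $\cL_{\IH}$; this is morally the same as your PL/Hessian route, but the paper proves a pointwise dissipation inequality for its Lyapunov function directly, which is tighter than appealing to positive definiteness of a Hessian far from the fixed point.
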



By this theorem, the $4$-gram mechanism is first learned, taking time $T_{\rm I}$. Then, the learning of the induction head mechanism enters a plateau, taking time $T_{\rm II}$,
followed by a sudden emergence of learning, taking time $T_{\rm III}-T_{\rm II}$. Finally, the loss for both components converges to zero.

{\bf  The clear learning transition.}
When any one of $L,\alpha^\star,1/\sigma_{\rm init},1/w^\star$ is sufficiently large, Phase II lasts for $T_{\rm II}\gg 1$. During this phase, the $4$-gram component has been learned well  but the induction head component  remains underdeveloped, demonstrating a distinct learning transition. Moreover,
Theorem~\ref{thm: optimization} and its proof reveal two key factors  that drive this transition:


\begin{itemize}[leftmargin=2em]
    \item {\bf Time-scale separation due to high- and low-order parameter dependence in self attention}. The learning of DP and RPE components differ in their parameter dependencies. DP component exhibits a quadratic dependence on the parameter $w_{KQ}$, while  RPE component shows linear dependence on the parameter $p$. With small initialization $\sigma_{\rm init}\ll1$, a clear time-scale separation emerges: $|\dot{w}_{KQ}|\sim w_{KQ}\ll1$ (DP, slow dynamics) and $|\dot{p}|\sim 1$ (RPE, fast dynamics). Consequently,  the induction head (fitted by DP) is learned much slower than   the 4-gram component (fitted by RPE).  
    This time-scale separation accounts for the term $\log(1/\epsilon_{\rm init})$ in the plateau time $T_{\rm II}$.   
    
    \item {\bf Speed difference due to component proportions in the mixed target.}
  The 4-gram target component and the induction-head component have differing proportions in the mixed target. A simple calculation shows:
$\cL_{\FG}(0)\sim{\alpha^\star}^2/({1+\alpha^\star})^2$;
If $w^\star=\cO(1)$, then $\cL_{\IH}(0)\sim{1}/[{(1+\alpha^\star)^2L}]$.
Notably, $\cL_{\IH}(0)$ is significantly smaller than $\cL_{\FG}(0)$. This proportion disparity accounts for the $(1 + \alpha^\star)^2 L$ term in the plateau time $T_{\rm II}$.
\end{itemize}


    



{\bf Proof idea.} 
We highlight that our fine-grained analysis of  entire learning process is guided by two key observations: 1) the dynamics of the two heads can be decoupled; 2)  there exist a distinct transition point in the dynamics of each head, as shown in Figure~\ref{fig: dynamics} (right). These insights lead us to divide the analysis of each head into two phases: a monotonic phase and a convergence phase. 
Particularly, for the convergence phase, we introduce a novel Lyapunov function that leverages the unique dynamical structure of self-attention. This Lyapunov function may be of independent interest and offers potential for studying broader issues in self-attention dynamics.

\begin{remark}
    We conduct additional experiments to validate our theoretical insights into the training dynamics and learning transition across a wider range of scenarios. These includes using data distribution (Figure~\ref{fig: dynamics: boolean}) and optimization algorithms (Figure~\ref{fig: dynamics: Adam}) in high-dimensional settings, as well as training real-world transformers on natural language datasets (Figure~\ref{fig: dynamics: wiki}).
\end{remark}

\section{Experimental Validation}


Our theoretical analysis provide insights into the approximation power and optimization dynamics of transformers.
To further validate these insights, we conduct a series of experiments, ranging from simple toy models to real-world  natural language training tasks. Due to space limitations, the detailed experimental results and setups are provided in Appendix~\ref{appendix: experiment}.


\section{Conclusion}


In this work, we present a comprehensive theoretical analysis of how transformers implement induction heads, examining both the approximation and optimization aspects. From the approximation standpoint, we identify the distinct roles of each transformer component in implementing induction heads of varying complexity. On the optimization side, we analyze a toy setting, where we clearly characterize how learning transitions from $n$-grams to induction heads. Looking forward, an important direction for future research is to investigate the dynamics of learning general induction heads, which are crucial for realizing stronger ICL capabilities.

\section*{Acknowledgments}

Mingze Wang is supported by Young Scientists (PhD) Fund of the National Natural Science Foundation of China (No.~124B2028) and the National Key Basic Research Program of China (No.~2015CB8560).
Lei Wu is supported by the National Key R\&D Program of China (No.~2022YFA1008200) and National Natural Science Foundation of China (No.~12288101). 
We thank Dr. Hongkang Yang for insightful discussions and Mingyu Xu, as well as the anonymous reviewers, for their valuable suggestions.






\newpage

\appendix

\begin{center}
    \noindent\rule{\textwidth}{4pt} \vspace{-0.2cm}
    \LARGE \textbf{Appendix} 
    \noindent\rule{\textwidth}{1.2pt}
\end{center}

\startcontents[sections]
\printcontents[sections]{l}{1}{\setcounter{tocdepth}{2}}



\vspace{1.cm}

\section{Related Works}
\label{section: related works}


{\bf Empirical observations of induction head.}
The induction head mechanism was first identified by~\citet{elhage2021mathematical} in studying how two-layer transformers perform language modeling. Subsequently, 
\citet{olsson2022context}  conducted a  more systematic investigation, revealing two key findings: 1) induction head emerges abruptly during training, and 2) induction head plays a critical role in the development of in-context learning capabilities. 
To obtain a fine-grained understanding of how induction head emerges during training, 
recent studies have developed several synthetic settings~\citep{reddy2023mechanistic,edelman2024evolution,bietti2024birth}. Particularly,~\citet{bietti2024birth} successfully reproduced the fast learning of (global) bigrams and the slower development of induction head. Despite these efforts, a comprehensive theoretical understanding of how the induction head operates in two-layer transformers and how it is learned during training remains elusive.

{\bf Expressiveness of transformers.}
Theoretically,~\citet{dehghani2018universal,perez2021attention,wei2022statistically} explored the Turing-completeness of transformers;~\citet{yun2019transformers} established the universal approximation property of transformers.
Subsequent studies examined the efficiency of transformers in representing specific functions or tasks, such as sparse functions~\citep{edelman2022inductive}, targets with nonlinear temporal kernels~\citep{jiang2023approximation}, practical computer
programs~\citep{giannou2023looped}, long but sparse memories~\citep{wang2024transformers}, induction head~\citep{sanford2024one,sanford2024transformers,rajaraman2024transformers}, and memorization and reasoning~\citep{chen2024can}.
Besides, many studies suggest that transformers achieve in-context learning by approximating gradient-based iterations across various layers 
\citep{garg2022can,akyurek2022learning,von2023transformers,mahankali2023one,bai2023transformers,shen2023pretrained}.
Besides, several studies explored the limitation of transformer's expressivity, particularly in modeling formal languages or simulating circuits~\citep{hahn2020theoretical,weiss2021thinking,bhattamishra2020ability,merrill2022saturated,merrill2023expresssive}.
Among all these works, the most closely related to ours are~\citet{rajaraman2024transformers}, which examined a generalized induction head similar to our Eq.~\eqref{equ: induction head, type II}. Specifically, they showed that  multi-layer  transformers with single-head attention can implement this mechanism. In contrast, we prove that two-layer transformers are sufficient if multihead attention is used.


{\bf Training dynamics of transformers.}
To gain insights into the dynamics of training transformers, several studies have analyzed simplified transformers on toy tasks. These tasks include learning distinct/common tokens~\citep{tian2023scan}, leaning balance/inblanced features~\citep{huang2023context}, linear regression task~\citep{zhang2023trained,ahn2024transformers}, multi-task linear regression~\citep{chen2024training}, binary classification~\citep{li2024training}, transformer with diagonal weights~\citep{abbe2024transformers}, 
learning causal structure~\citep{nichani2024transformers}, sparse token selection task~\citep{wang2024transformers}, and learning $n$-gram Markov chain~\citep{chen2024unveiling}.
Additionally, studies such as those by~\citet{ataee2023max}, \citet{tarzanagh2023transformers} and~\citet{vasudeva2024implicit} have analyzed scenarios where transformers converge to  max-margin solutions. Furthermore, \citet{thrampoulidis2024implicit} has examined the implicit bias of next-token prediction.
Among these works, the most closely related to ours are \citet{nichani2024transformers} and \citet{chen2024unveiling}, which proved that two-layer transformers can converge to induction head solutions. 
In this work, we explore a setting where the target is a mixture of  4-gram and induction head. We show that two-layer transformers can effectively converge to this mixed target and provide a precise description of the learning process associated with each component. Importantly, we are able to capture the {\em abrupt transition} from learning 4-gram patterns to mastering the induction head mechanism---a critical phase  in the learning of induction heads, as highlighted in the seminal works \citep{elhage2021mathematical,olsson2022context}.

Now we discuss the relationship between our work and two closely related studies~ \citep{bietti2024birth,edelman2024evolution}.

{\bf Comparison with~\citet{bietti2024birth}.}
\begin{itemize}[leftmargin=2em]
    \item {\bf Approximation analysis:}
    
    \citet{bietti2024birth} focus primarily on the implementation of the vanilla induction head. In contrast, our study extends this analysis by investigating not only how two-layer transformers achieve vanilla induction heads (Eq.~\eqref{equ: induction head, type I}) but also how they implement generalized induction heads, i.e., in-context n-grams (Eqs.~\eqref{equ: induction head, type II} and~\eqref{equ: induction head, type III}).
    
    Furthermore, our work provides explicit approximation rate results, offering insights into the distinct roles of multiple heads, positional encoding, dot-product structure, and FFNs in implementing these induction heads. 

    \item {\bf Optimization analysis:}

    {\em Study objective:} While \citet{bietti2024birth} examines the transition from 2-gram to induction head, our work focuses on the transition from 4-gram to induction head.
    
    {\em study methods:} \citet{bietti2024birth} conducts extensive experiments supported by partial theoretical properties but does not fully characterize the training dynamics theoretically. In contrast, our study provides {\bf a precise theoretical analysis of the entire training process} in a toy model, uncovering the sharp transition from 4-gram to induction head.

    {\em Main insights:} \citet{bietti2024birth} emphasizes the the role of weight matrices as associative memories and the impact of data distributional properties. Our analysis, on the other hand, identifies two primary drivers of the transition: (1) the time-scale separation due to low- and high-order parameter dependencies in self-attention; (2) the speed differences caused by the relative proportions of the two components in the mixed target.
    
\end{itemize}

{\bf Comparison with~\citet{edelman2024evolution}.}
The primary connection between~\citet{edelman2024evolution} and our work lies in the optimization analysis. Specifically,~\citet{edelman2024evolution} focuses on the transition from uni-gram to bi-gram mechanisms in Markov Chain data. In contrast, our study investigates the transition from 4-gram to in-context 2-gram mechanisms (induction head).
Additionally, we theoretically identify two primary drivers of the transition: (1) the time-scale separation due to low- and high-order parameter dependencies in self-attention; (2) the speed differences caused by the relative proportions of the two components in the mixed target.

\vspace{1.cm}

\section{Experiments}\label{appendix: experiment}

\subsection{Experiments supporting approximation results}
\label{appendix: experiment: approximation}

\underline{1. Supporting our {\bf construction} in Theorem~\ref{theorem: type II}.}

{\bf Setup.} We linear probing experiments~\citep{alain2016understanding} on the transformers with $H=D=8$ trained in the above experiment (Figure~\ref{fig: approximation: hardness}). For each checkpoint model ${\rm TF}$, we denote its output in the first layer on the input sequence $X$ as ${\rm TF}^{(1)}(X)$. 
The probing loss is measured by $\text{dist}\left(X_{\cdot-n+1:\cdot};{\rm TF}^{(1)}(X)\right)=\min\limits_{P\in\bbR^{D\times n}}:\sum\limits_{s=n}^{L}\norm{X_{s-n+1:s}-{\rm TF}_s^{(1)}(X) P}$, where $n=4$, $L=10$, and $X=(x_1,\cdots,x_L)$ is generated by $x_i\overset{i.i.d.}{\sim}{\rm Unif}(\{\pm1\})$ with testing batch $1000$. The results are shown in Figure~\ref{fig: approximation: probing}.

\begin{figure}[!ht]
    \centering
     \includegraphics[width=0.4\linewidth]{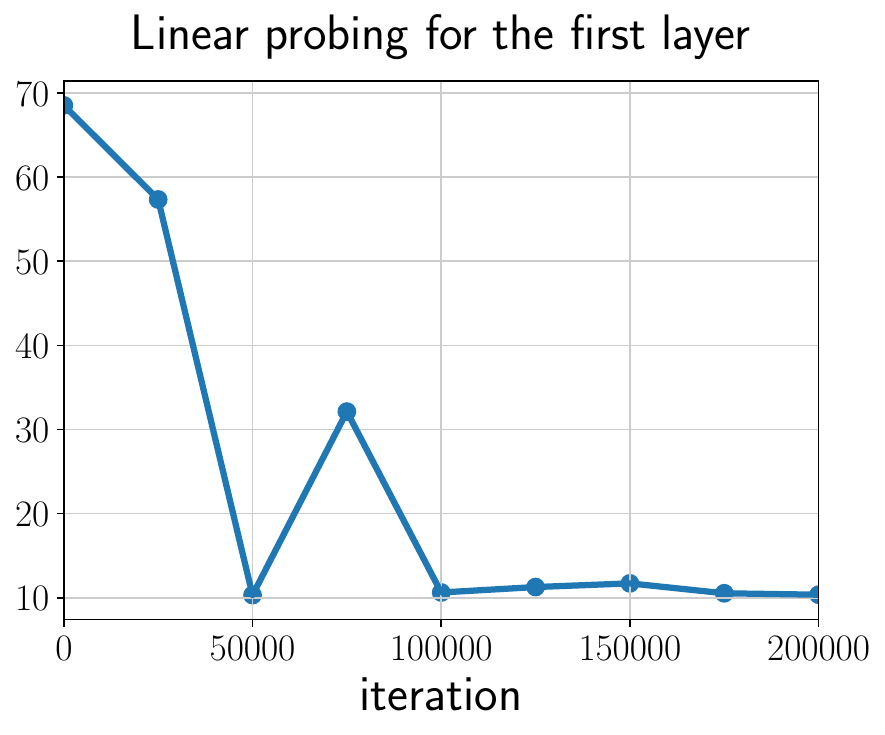}
    \caption{Probing results supporting our {\bf construction} in Theorem~\ref{theorem: type II}. 
    First, we train a two-layer two-layer transformer with head $H=8$ and embedding dimension $D=8$ to learn Eq.~\eqref{equ: induction head, type II} with $n=4$, and the checkpoints are stored during training.
    For each checkpoint model ${\rm TF}$, we denote its output in the {\em first layer} on the input sequence $X$ as ${\rm TF}^{(1)}(X)$. 
    To validate whether it encodes the semantic information $X_{s-n+2:s}$ near each $x_s$, as predicted by our construction, we conduct a standard linear probing experiment~\citep{alain2016understanding}. Specifically, we measured $\text{dist}\left(X_{\cdot-n+1:\cdot};{\rm TF}^{(1)}(X)\right)=\min\limits_{P\in\bbR^{D\times n}}:\sum\limits_{s=n}^{L}\norm{X_{s-n+1:s}-{\rm TF}_s^{(1)}(X) P}$.
    As the results shown, the probing loss decreases significantly during training, confirming our key construction in Theorem 4.3: {\bf the first layer is responsible for extracting local semantic information $X_{s-n+2:s}$ near each $x_s$}, enabling the second layer to generate the final output.}
    \label{fig: approximation: probing}
\end{figure}

\vspace{2.cm}

\underline{2. Supporting the {\bf necessity} of the required $H$ and $D$ in Theorem~\ref{theorem: type II}.}

{\bf Setup.} We train two-layer transformers (without FFN layers) with varying $H$ and $D$ to learn the generalized induction head~\eqref{equ: induction head, type II} with $n=4$.
The input sequence $X=(x_1,\cdots, x_L)$ is boolean, with $x_i\overset{i.i.d.}{\sim}{\rm Unif}(\{\pm1\})$ and $L=10$.  
Each model is trained for 200,000 iterations using squared loss and (online) Adam optimizer with learning rate \texttt{5e-4} and batch size $B=100$. Both layers are trained simultaneously. The results for the models with $D=H=8$ and $D=H=2$ are presented in Figure~\ref{fig: dynamics: wiki}.

\begin{figure}[!ht]
    \centering
    \includegraphics[width=0.4\linewidth]{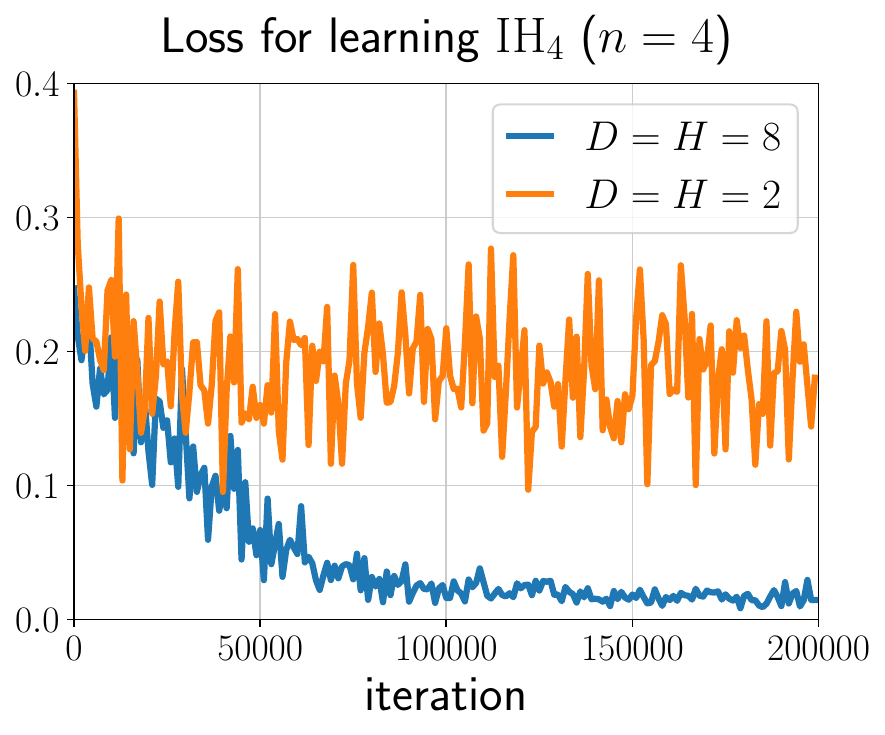}
    \caption{Results supporting the {\bf necessity} of the required number of heads $H$ and embedding dimension $D$ in Theorem~\ref{theorem: type II}. 
    We train two-layer transformers with varying $H$ and $D$ to learn the target in Eq.~\eqref{equ: induction head, type II} with $n=4$. 
    The results indicate that the transformer with $H=D=8$ ($>n$) successfully expresses this task, while the transformer with $H=D=2$ ($<n$) fails. 
    These results confirm that the sufficient conditions provided in Theorem~\ref{theorem: type II} ($H\gtrsim n$ and $D\geq nd$, where $d=1$ in our setting) are also nearly necessary.}
    \label{fig: approximation: hardness}
\end{figure}

\subsection{Additional experiments supporting optimization dynamics}
\label{appendix: experiment: dynamics}

\underline{1. {\bf Standard transformers on real-world natural language dataset.}}

{\bf Setup.} We train a two-layer two-head {\bf standard transformer} with RPE~\eqref{equ: RPE} (without any simplification) on the {\bf wikitext-2} dataset, a natural language dataset~\citep{merity2016pointer}.
The transformer has an embedding dimension $D=128$ and FFN width $W=512$. For this dataset, the input dimension is $d=33278$. We use a context length $L=200$ and batch size $B=32$. The parameters are initialized with the scale $0.01$.
The model is trained for 1,500 epochs on 1 H100, using cross-entropy loss and SGD with learning rate $0.1$, and the initialization scale is $0.01$. It is important to note that  {\bf both layers are trained simultaneously}. The results are presented in Figure~\ref{fig: dynamics: wiki}.

\begin{figure}[!ht]
    \centering
    \includegraphics[width=0.32\linewidth]{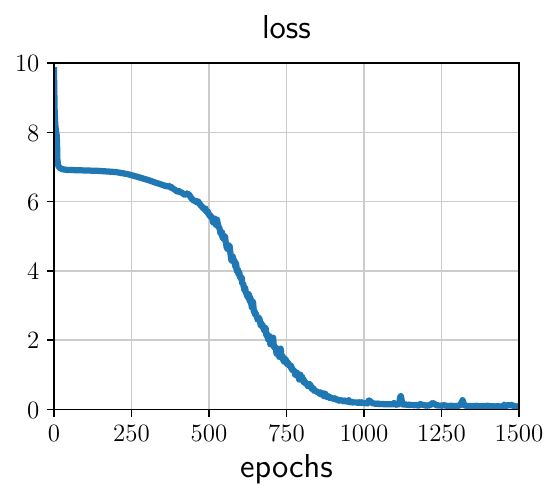}
     \includegraphics[width=0.39\linewidth]{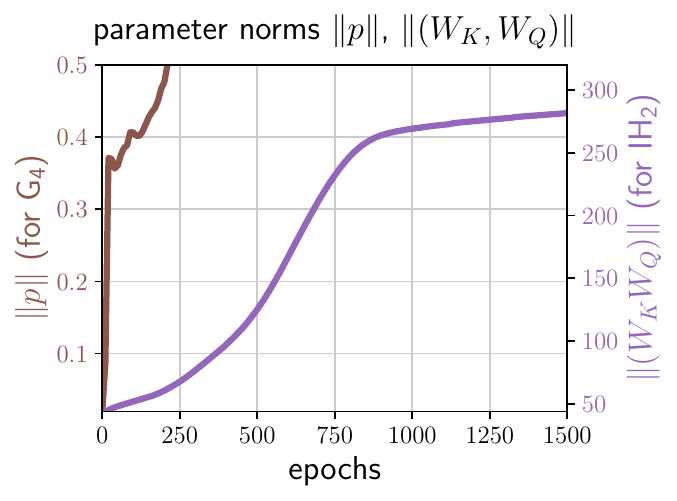}
    \caption{The loss and parameters for the experiment training a two-layer two-head {\bf standard transformer} (without any simplification) on the {\bf wikitext-2} dataset~\citep{merity2016pointer}. Here, $\|p\|$ and $\|(W_K,W_Q)\|$ denote the Frobenius norms of all positional encoding parameters and all $W_K,W_Q$ parameters across layers and heads, respectively,
    The results show that: the loss exhibits a clear plateau; position encoding $p$'s are learned first; and the dot-product structure $W_K,W_Q$ are learned slowly at the beginning, resembling an exponential increase; additionally, as $W_K,W_Q$ are learned, the loss escapes that plateau. These findings closely resemble the behavior observed in our toy model (Figure~\ref{fig: dynamics}). This experiment provides further support for our theoretical insights regarding the {\bf time-scale separation} between the learning of positional encoding and the dot-product structure.
    }
    \label{fig: dynamics: wiki}
\end{figure}

\vspace{.2cm}

\underline{2. {\bf Discrete token distribution in toy setting.}}

{\bf Setup.} We modified the Gaussian input distribution used in the setup for Figure~\ref{fig: dynamics} to a boolean input distribution, where each input token, where each input token $x_i\stackrel{iid}{\sim} {\rm Unif}(\{\pm1\})$ for $i\in [L]$, 
All other experimental setups remain the same as in the setup for Figure~\ref{fig: dynamics}. The training dynamics of Stage (ii) are presented in Figure~\ref{fig: dynamics: boolean}. We can see clearly that the dynamical behavior of the learning process is nearly the same as the one observed for Gaussian inputs in Figure \ref{fig: dynamics}.

\begin{figure}[!ht]
    \centering
    \includegraphics[width=0.305\linewidth]{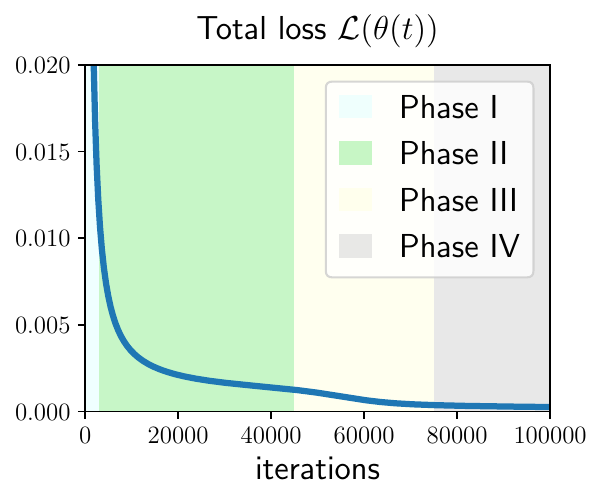}
    \includegraphics[width=0.38\linewidth]{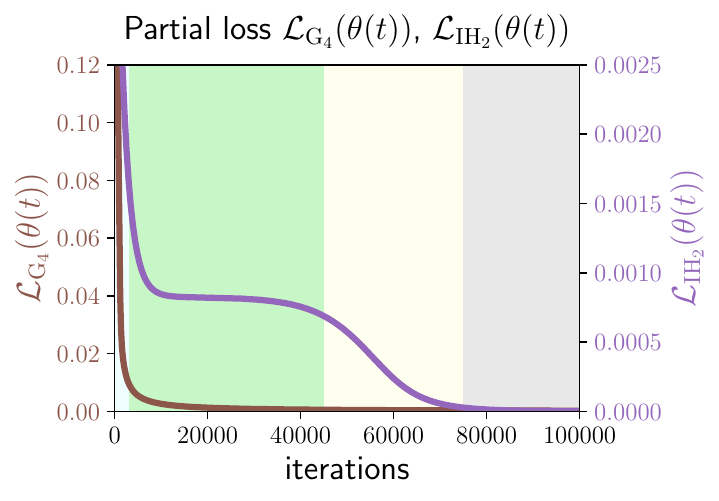}
    \includegraphics[width=0.295\linewidth]{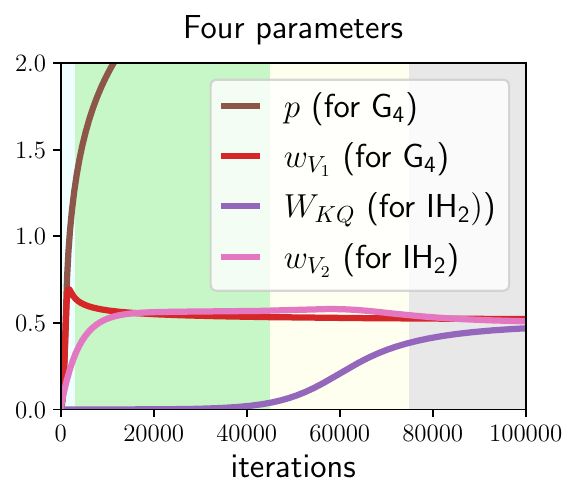}
    \caption{Visualization of the total loss, partial loss, and the parameter dynamics, for the experiment on {\bf discrete token distribution} (Boolean, $X\sim{\rm Unif}(\{\pm1\}^L)$) in our toy setting with $\alpha^\star=1,w^\star=0.49,\sigma_{\rm init}=0.01,L=40$. 
    The figure clearly shows that transformer learns the $4$-gram component first and then, starts to learn the induction head mechanism. Notably, the entire dynamics exhibit four phases. These results are  {\bf extremely similar} to that observed with Gaussian inputs, as shown in Figure~\ref{fig: dynamics}.}
    \label{fig: dynamics: boolean}
\end{figure}

\vspace{.2cm}

\underline{3. {\bf Adam in high-dimensional toy setting.}}

{\bf Setup.} We modified the setup for Figure~\ref{fig: dynamics} to employ a high-dimensional model $(D=100)$. Specifically, the target is $w^\star=0.49 I_D/D$, the dot-produce parameters are $W_K,W_Q\in\bbR^{D\time 1}$, initialized such that $\|W_K\|_F,\|W_Q\|_F=\sigma_{\rm init}$. Additionally, for the Adam optimizer, we use learning rate \texttt{5e-4}. All other experimental setups remain the same as in the setup for Figure~\ref{fig: dynamics}. 

The training dynamics are depicted in Figure~\ref{fig: dynamics: Adam}, where, for comparison, results using GD are also presented. In both scenarios, the learning process begins with the 4-gram pattern, followed by a gradual learning phase of the induction head mechanism. Notably, within the given number of iterations, GD remains stuck in the plateau, whereas Adam successfully escapes that plateau.
\begin{figure}[!ht]
    \centering
    \includegraphics[width=0.4\linewidth]{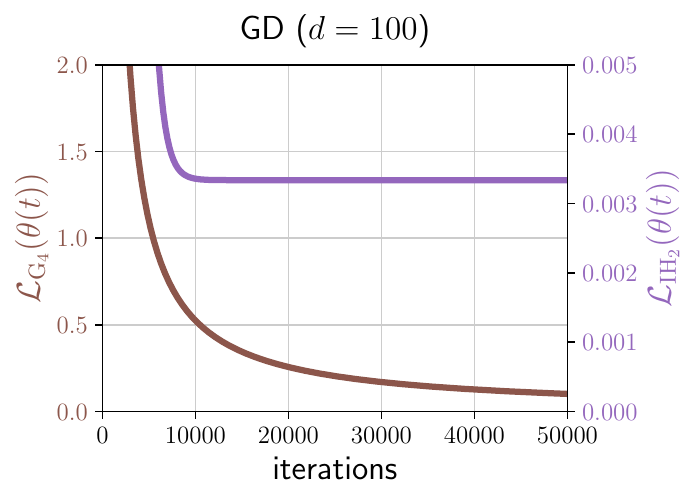}
     \includegraphics[width=0.4\linewidth]{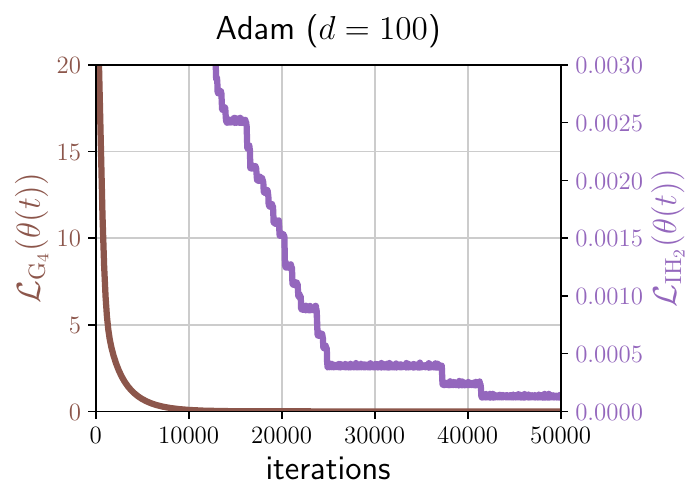}
    \caption{Partial loss for the experiment comparing {\bf GD v.s. Adam optimizer} in high-dimensional settings ($D=100$).
    In this setting, a larger $D$ increases the difficulty of the transition from the lazy regime (learning $4$-gram) to the rich regime (learning induction head).
    The results indicate that: (1) GD learns the 4-gram component first but becomes stuck in a plateau when learning induction head; (2) Adam, while eventually transitioning from the lazy regime (learning 4-gram) to the rich regime (learning induction head), experiences a {\bf challenging} transition characterized by {\bf multiple plateaus} during learning induction heads. This finding closely resembles the dynamics for GD.}
    \label{fig: dynamics: Adam}
\end{figure}

\subsection{Experimental details for  Figure~\ref{fig: dynamics}}
\label{sec: experimental-details-fig2}

In line with our theoretical setting, we examine a simplified two-layer transformer, as  described in~\eqref{equ: output}. Specifically, the first layer only contains RPE~\eqref{equ: RPE} and the second layer consists of two heads: one uses only RPE and the other  employs only dot-product structure. The target function is specified by~\eqref{target: optimization} with
$\alpha^\star=1,w^\star=0.49,\sigma_{\rm init}=0.01,L=40$, and the  distribution of each token is Gaussian, i.e., $x_i\stackrel{iid}{\sim}\cN(0,1)$ for $i\in [L]$. Training is conducted by minimizing the squared loss~\eqref{equation: loss} using online SGD with learning rate $0.1$ and batch size $B=1,000$.
Following our theoretical analysis, the two layers are trained sequentially:
\begin{itemize}[leftmargin=2em]
\item Training Stage I: only the first layer is trained for 100,000 iterations; 
\item Training Stage II: Subsequently, only the second layer undergoes training for another 100,000 iterations.
\end{itemize}
 The dynamical behavior of the Training Stage II is visualized in Figure~\ref{fig: dynamics}.

\vspace{1.cm}

\section{Proofs in Section \ref{section: approximation}}
\label{appendix: approximation}



\subsection{Proof of Theorem \ref{theorem: standard}}\label{subsecton: thm standard proof}
{\begin{equation}\label{equ: induction head, type I, restate}
    \IH(\bX_L)=(\bx_{s})_{s=2}^{L-1}\ {\sm}    \Big(\big(x_{L}^\top W^\star x_{s-1}\big)_{s=2}^{L-1}\Big)^\top,
\end{equation}}

\begin{theorem}[Restatement of Theorem \ref{theorem: standard}]\label{theorem: standard restate}
Let $\IH$ satisfy Eq.~\eqref{equ: induction head, type I, restate}. 
Then, there exists a constant $C>0$ and a two-layer single-head transformer $\TF$ (without FFNs), with $D=2d$, $W_K^{(1,1)}=W_Q^{(1,1)}=0$,
$p^{(2,1)}=0$, and $\|W_K^{(2,1)}\|,\|W_Q^{(2,1)}\|\leq\cO(1,\|W^\star\|_{F})$, such that
\begin{align*}
\sup_{L\in\bbN^{+}}\triplebar{\IH-\TF}_{L,\infty}\leq \frac{C}{e^{p^{(1,1)}}}.
\end{align*}
\end{theorem}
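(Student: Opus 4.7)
The plan is to give an explicit construction realising the two--step ``read the previous token, then retrieve'' algorithm described after Theorem~\ref{theorem: standard}, and to show that the only error---coming from the softmax in the first layer not being exactly a delta---decays like $e^{-p^{(1,1)}}$ uniformly in $L$.

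For the first layer, take $D=2d$ and embed tokens as $x_s^{(0)}=(x_s^\top,0^\top)^\top$. Setting $W_K^{(1,1)}=W_Q^{(1,1)}=0$ kills the dot--product contribution, so the attention logits reduce to the Alibi RPE $\phi(i-j;p^{(1,1)})=-p^{(1,1)}(i-j-1)$ for $j<i$. The softmax weights for query $s$ are then $w_\tau^{(s)}\propto e^{-p^{(1,1)}(s-1-\tau)}$; a direct geometric--series computation gives
\[
w_{s-1}^{(s)}\geq 1-\frac{e^{-p^{(1,1)}}}{1-e^{-p^{(1,1)}}},\qquad \sum_{\tau<s-1}w_\tau^{(s)}=\mathcal{O}(e^{-p^{(1,1)}}).
\]
Choosing $W_V^{(1,1)}=\bigl(\begin{smallmatrix}0&0\\ I_d&0\end{smallmatrix}\bigr)$ writes the attention output into the second slot, so after the residual the first--layer output is $x_s^{(1)}=(x_s^\top,\tilde x_{s-1}^\top)^\top$ with $\|\tilde x_{s-1}-x_{s-1}\|_\infty\leq C_1 e^{-p^{(1,1)}}$ uniformly in $s$ (the constant $C_1$ absorbs a bound on $\|x_\tau\|_\infty$ taken essentially in the $\triplebar{\cdot}_{L,\infty}$ norm).

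For the second layer, put $p^{(2,1)}=0$ (no positional contribution) and
\[
W_Q^{(2,1)}=\begin{pmatrix}{W^\star}^\top&0\\ 0&0\end{pmatrix},\quad W_K^{(2,1)}=\begin{pmatrix}0&I_d\\ 0&0\end{pmatrix},\quad W_V^{(2,1)}=\begin{pmatrix}I_d&0\\ 0&0\end{pmatrix},
\]
together with an output projection $W_O^{(2)}$ that reads the top-$d$ slot. Then $\langle W_Q^{(2,1)}x_L^{(1)},W_K^{(2,1)}x_s^{(1)}\rangle=x_L^\top W^\star\tilde x_{s-1}$ and the value extracts $x_s$, so the last-token output is $\sum_{s=2}^{L-1}x_s\,\mathrm{sm}\bigl((x_L^\top W^\star\tilde x_{\nu-1})_{\nu=2}^{L-1}\bigr)_s$. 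This matches $\IH(X_L)$ up to replacing $x_{\nu-1}$ by $\tilde x_{\nu-1}$ inside the softmax. The operator norms are $\|W_Q^{(2,1)}\|=\|W^\star\|$ and $\|W_K^{(2,1)}\|=1$, matching the stated bound.

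To convert these pointwise logit perturbations into an output error, I would use two standard facts: softmax is $1$-Lipschitz from $\ell_\infty$ on logits to $\ell_1$ on probabilities, and for any probability vectors $\pi,\pi'$ one has $\|\sum_s x_s(\pi_s-\pi'_s)\|_\infty\leq (\max_s\|x_s\|_\infty)\|\pi-\pi'\|_1$. Combining these with
\[
\max_{\nu}\bigl|x_L^\top W^\star(\tilde x_{\nu-1}-x_{\nu-1})\bigr|\leq \|W^\star\|\,\|x_L\|\,C_1 e^{-p^{(1,1)}}
\]
produces the desired bound $\triplebar{\IH-\TF}_{L,\infty}\leq C e^{-p^{(1,1)}}$. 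The main point to check is that this bound is $L$-independent: the softmax Lipschitz constant does not depend on the number of logits, and because attention weights always sum to one, the $L$-fold convex combination of $x_s$'s contributes only through $\max_s\|x_s\|_\infty$ rather than $L\cdot\max_s\|x_s\|_\infty$. I expect this uniformity---plus the slightly delicate handling of the $\ell_\infty$-norm in the measure $\triplebar{\cdot}_{L,\infty}$, which forces us to work with essentially bounded inputs or absorb tail bounds into $C$---to be the only non-routine part of the argument.
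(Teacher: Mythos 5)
Your proposal is correct and follows essentially the same route as the paper's proof: kill the dot-product in the first layer and use Alibi RPE to soft-select $x_{s-1}$ into the second block of the embedding; then use the second layer's dot-product (with DP logits $x_L^\top W^\star \tilde x_{s-1}$) plus a value matrix that reads off $x_s$; finally control the error via the $\ell_\infty$-to-$\ell_1$ Lipschitz continuity of softmax, which is what makes the bound $L$-uniform. The only cosmetic differences are that you swap the roles of $W_Q^{(2,1)}$ and $W_K^{(2,1)}$ (putting $W^\star$ in the query rather than the key, which is equivalent), and you quote a Lipschitz constant of $1$ for softmax where the paper's Lemma~\ref{lemma: softmaxlipschitz} gives $2$ — a harmless constant discrepancy absorbed into $C$.
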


\begin{proof}

We consider two-layer single-head transformer without FFN, where the first layer has the residual block, while the second layer does not have the residual block.

We first embed each token into $\mathbb{R}^{D}$ as 
$
\begin{pmatrix}
x_s
\\0
\end{pmatrix}
$ and take $W_V^{(1)}=\begin{pmatrix}0&0\\I_{d\times d}&0\end{pmatrix}$
, then the $s$-th output token of the first layer is 
\begin{align*}
\begin{pmatrix}
x_s
\\y_s
\end{pmatrix}
=\begin{pmatrix}
x_s
\\ (x_{\tau})_{\tau=1}^{s-1}\ {\sm}\left(\big(-p^{(1,1)}(s-1-\tau)\big)_{\tau=1}^{s-1}\right)^\top
\end{pmatrix}.
\end{align*}

Then for the second layer, we choose $p^{(2,1)}=0$,
\begin{align*}
W_Q^{(2,1)}=\begin{pmatrix}0&0\\I_{d\times d}&0\end{pmatrix},\ W_K^{(2,1)}=\begin{pmatrix}0&0\\0&W^\star\end{pmatrix},\ 
W_V^{(2,1)}=\begin{pmatrix}
I_{d\times d} & 0
\\
0 & 0
\end{pmatrix}\in\bbR^{D\times D},
\end{align*}
and the projection $W_O^{(2)}=\begin{pmatrix}
    I_{d\times d} & 0_{d\times d}
\end{pmatrix}\in\bbR^{d\times D}$.

Then the last output token of the second layer is 
 $$(x_s)_{s=2}^{L-1}\ {\sm}\Big(\big(x_L^{\top}W^\star y_s\big)_{s=2}^{L-1}\Big)^\top.$$

By Lemma \ref{lemma: softmaxlipschitz} 
, for any $L\in\mathbb{N}^+$ 
\begin{align*}
&\triplebar{\IH-\TF}_{L,\infty}
\\=&\sup_{X_L}\|\textsf{IH}(X_L)-\TF_{-1}(X_L)\|_{\infty}
\\=& \left\|{ (x_s)_{s=2}^{L-1}\ \sm \left(\big(x_L^{\top}W^\star y_s\big)_{s=2}^{L-1}\right)^\top}-{ (x_s)_{s=2}^{L-1}\ \sm\left(\big(x_L^{\top}W^\star x_{s-1}\big)_{s=2}^{L-1}\right)^\top}\right\|_{\infty}
\\\leq& \|(x_s)_{s=2}^{L-1}\|_{\infty,\infty} \left\|{ \sm\left(\big(x_L^{\top}W^\star y_{s}\big)_{s=2}^{L-1}\right)^\top}-{\sm\left(\big(x_L^{\top}W^\star x_{s-1}\big)_{s=2}^{L-1}\right)^\top}\right\|_1
\\\leq& 2 \sup_{2\leq s\leq L-1} \left|x_L^{\top}W^\star y_s-x_L^{\top}W^\star x_{s-1}\right|
\\\leq& 2\|x_L^{\top}W^\star \|_1\sup_s\|y_s-x_{s-1}\|_{\infty}
\\\leq& 2 \sum_{i,j}|W_{i,j}^\star |\sup_s\left\|(\bx_\tau)_{\tau=1}^{s-1}\ \sm\left(\left(-p^{(1,1)}(s-1-\tau)\right)_{\tau=1}^{s-1}\right)^\top-x_{s-1}\right\|_{\infty}
\\\leq& 2\|W^\star \|_{1,1}\sup_s\left\|\sm\left(\left(-p^{(1,1)}(s-1-\tau)\right)_{\tau=1}^{s-1}\right)^\top-\be_{s-1}\right\|_1
\\=& 2\|W^\star\|_{1,1} \sup_{2\leq s\leq L-1}\frac{\sum_{\tau=1}^{s-2} \exp\left(-p^{(1,1)}(s-1-\tau)\right)}{\sum_{\tau=1}^{s-1} \exp\left(-p^{(1,1)}(s-1-\tau)\right)}
\\\leq&2\|W^\star\|_{1,1} \lim_{s\to+\infty}\frac{\sum_{\tau=1}^{s-2} \exp\left(-p^{(1,1)}(s-1-\tau)\right)}{\sum_{\tau=1}^{s-1} \exp\left(-p^{(1,1)}(s-1-\tau)\right)}
=2\|W^\star\|_{1,1} e^{-p^{(1,1)}}.
\end{align*}

\end{proof}


\subsection{Proof of Theorem \ref{theorem: type II}}\label{subsecton: thm type II proof}

\begin{equation}\label{equ: induction head, type II, restate}
    \nIH(\bX_L)=(x_s)_{s=n}^{L-1}\ \sm\Big(\big(X_{L-n+2:L}^\top W^\star X_{s-n+1:s-1}\big)_{s=n}^{L-1}\Big)^\top,
\end{equation}

\begin{theorem}[Restatement of Theorem \ref{theorem: type II}]
Let $\nIH$ satisfy Eq.~\eqref{equ: induction head, type II, restate}. 
Then for any  $H\in\bbN^{+}$ and rate $q\in\bbN^+$, there exists a constant $C_{n,q}>0$ and a two-layer $H$-head transformer $\TF(\cdot)$ (without FFNs), with $D=nd$, such that:
\begin{align*}
    \sup_{L\in\bbN^+}\triplebar{\nIH-\TF}_{L,\infty}\leq \left(\frac{C_{n,q}}{H}\right)^q,
\end{align*}
where $C_{n,q}=\cO(n q^2)$.
\end{theorem}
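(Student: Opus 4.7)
My plan is to reuse the two-layer blueprint of Theorem~\ref{theorem: standard}: the first layer uses pure-RPE attention to collect local context, and the second layer uses pure-DP attention to retrieve and copy. Tokens are embedded into $\bbR^{nd}$ by placing $x_s$ in the bottom $d$-block and zeros elsewhere, so that after the first layer the $j$-th block at position $s$ can store an approximation of $x_{s-j}$ for $j=1,\ldots,n-1$, producing a hidden token $z_s \approx [x_{s-n+1};\ldots;x_{s-1};x_s]\in\bbR^{nd}$. With exact $z_s$'s, the second layer mirrors Theorem~\ref{theorem: standard}: I turn off RPE, use a single active head whose $W_V^{(2,1)}$ projects onto the $x_s$-slot, and choose block-structured $W_Q^{(2,1)},W_K^{(2,1)}$ (essentially $W_K^{(2,1)}$ embeds $W^\star$ into the first $(n-1)d$ slots and $W_Q^{(2,1)}$ shifts the last $n-1$ blocks of $z_L$ to match) so that $\langle W_Q^{(2,1)}z_L,W_K^{(2,1)}z_s\rangle = X_{L-n+2:L}^\top W^\star X_{s-n+1:s-1}$. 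This reproduces $\nIH$ exactly, so only the first-layer error has to be tracked.

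The real work lies in the first layer. A head with $W_Q^{(1,h)}=W_K^{(1,h)}=0$ and RPE parameter $p^{(1,h)}$ emits $\sum_{\tau<s} x_\tau\, \mathrm{sm}(-p^{(1,h)}(s-1-\tau))_\tau$, so after picking block-structured $W_V^{(1,h)}$ and combining via $W_O^{(1)}$, one obtains in the $j$-th output block an arbitrary linear combination $\sum_h c_h^{(j)} \sum_\tau x_\tau K_{p^{(1,h)}}(s-\tau)$, where $K_p(k):=\mathrm{sm}(-p(\cdot))_k$ is the softmax kernel over the distance $k\geq 1$. I would then reduce the construction to the following memory-kernel approximation lemma: for every $j\in[n-1]$ and every $H\in\bbN^+$, there exist $\{c_h^{(j)}\}_{h=1}^H$ and $\{p^{(1,h)}\}_{h=1}^H$ such that
\[
\sup_{k\geq 1}\Big|\sum_{h=1}^{H} c_h^{(j)}\, K_{p^{(1,h)}}(k) - \bbI\{k=j\}\Big| \leq (C_{n,q}/H)^{q}.
\]
Once this holds, the $j$-th block at position $s$ equals $\sum_\tau x_\tau\bbI\{s-\tau=j\}=x_{s-j}$ up to the kernel error, as required.

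The main obstacle is proving this kernel lemma. Under the change of variables $y_h=e^{-p^{(1,h)}}$, $K_p(k)$ becomes (asymptotically in $s$) the geometric distribution $(1-y)y^{k-1}$, so the task reduces to approximating the $j$-th indicator on $\bbN^+$ by a span of $H$ geometric sequences. The rate $(C/H)^q$ for arbitrary $q$ is characteristic of analytic/exponential-sum approximation: I would represent $\bbI\{k=j\}$ as a contour integral of $y^{k-1}$ against a suitable density and discretize with an $H$-node generalized Gaussian quadrature in the spirit of~\citet{yarvin1998generalized}, delivering $C_q=O(q^2)$; handling all $n-1$ offsets uniformly multiplies the constant by $O(n)$, yielding $C_{n,q}=O(nq^2)$. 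Error propagation to the output is then routine: by softmax Lipschitzness (Lemma~\ref{lemma: softmaxlipschitz}), an $L^\infty$ error $\varepsilon$ in each $z_s$ translates to $O(\|W^\star\|_F\,\varepsilon)$ in the second-layer DP logits and, after the second softmax, to an $L^\infty$ output error of the same order; taking $\varepsilon=(C_{n,q}/H)^q$ completes the bound, and uniformity in $L$ follows because the kernel approximation is uniform in $k$.
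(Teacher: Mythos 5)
Your high-level architecture matches the paper's: embed into $\bbR^{nd}$, use multi-head pure-RPE attention in layer one to reconstruct $[x_{s-n+1};\ldots;x_s]$ block by block, use a single pure-DP head in layer two with block-structured $W_Q,W_K,W_V$ to compute $\hat{X}_{L-n+2:L}^\top W^\star \hat{X}_{s-n+1:s-1}$ and copy $x_s$, then propagate the first-layer error through the softmax via Lemma~\ref{lemma: softmaxlipschitz}. The paper's Appendix C.2 does exactly this.

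However, there is a concrete gap in the norm of your kernel lemma. You ask for $\sup_{k\geq 1}\big|\sum_h c_h^{(j)}K_{p^{(1,h)}}(k)-\bbI\{k=j\}\big|\leq(C_{n,q}/H)^q$, and then assert that ``the $j$-th block at position $s$ equals $x_{s-j}$ up to the kernel error.'' That step does not follow from a sup-norm kernel bound: the $j$-th block is $\sum_{\tau<s}x_\tau\big(\phi_j(s-\tau)-\bbI\{s-\tau=j\}\big)$, and with $\|x_\tau\|_\infty\le 1$ the per-position error is controlled by $\sum_{k\geq1}|\phi_j(k)-\bbI\{k=j\}|$, i.e.\ the $\ell_1(\bbN)$ norm of the kernel residual, not its $\ell_\infty$ norm. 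A sup bound alone lets the error grow like $(s-1)\varepsilon$, which ruins the $\sup_L$ uniformity you need. The paper's auxiliary result (Lemma~\ref{lemma E.1}, drawn from Wang and E, 2024) is stated and used precisely in $\ell_1(\bbN)$ for this reason. Your lemma needs to be upgraded to an $\ell_1$ bound; once you do that, the rest of your error-propagation argument is sound.

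Two smaller points worth noting. First, the route you propose for proving the kernel lemma (contour integral of $y^{k-1}$ plus generalized Gaussian quadrature \`a la Yarvin--Rokhlin) is genuinely different from the paper's, which imports the exponential-sum approximation of a bump-function-based construction from prior work rather than re-deriving it; if your quadrature argument is carried out carefully it would be a self-contained alternative, but as written it is a sketch rather than a proof, and in particular it must deliver the $\ell_1$ control discussed above. Second, you wave at the $s$-dependence of the softmax normalizer (``asymptotically in $s$''); the paper handles this explicitly by folding the normalizer into the value-matrix scaling and by optimizing the per-offset head allocation $\{H_i\}$ so that $\sum_i e^{0.01(q+1)i}/H_i^q$ is minimized, which is what produces the clean $C_{n,q}=\cO(nq^2)$ constant. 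Your ``uniform allocation, multiply by $\cO(n)$'' heuristic lands in the same place but glosses over both details.
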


\begin{proof}

We consider two-layer multi-head transformer without FFN, where the first layer has the residual block, while the second layer does not have the residual block.

First, we choose the embedding dimension $D = nd$, and parameters in the embedding map 
$$\bW_E = \begin{pmatrix}
    I_{d\times d}\\
    0_{(D-d)\times d}
\end{pmatrix}\in\mathbb{R}^{D\times d},\quad b_E=0\in\mathbb{R}^D,$$

then each token $\bx_s^{(0)}$ after embedding is
$$\bx_s^{(0)} = \bW_Ex_s + \bb_E = \begin{pmatrix}
    \bx_s\\0
\end{pmatrix}\in\mathbb{R}^{D}.$$


This proof can be summarized as the following process for $\TF_{-1}$:
\begin{align*}
    	(x_s)_{s=n}^{L-1}\ \sm\Big(\big(\hat{X}_{L-n+2:L}^\top & W^\star \hat{X}_{s-n+1:s-1}\big)_{s=n}^{L-1}\Big)^\top
	\\
	\text{Step II. 2-st Attn}\ &\uparrow
	\\ 
        (\bx_L^\top,\hat{\bx}_{L-1},\ldots,&\hat{\bx}_{L-n+1})^\top
	\\
	\text{Step I. 1-st Attn}\ &\uparrow
	\\
	(\bx_{L}^\top,&0^\top)^\top
\end{align*}

Additionally, in this proof, the following projection matrices are used:
$$P_i:=\left(0_{d\times (i-1)d},I_{d\times d},0_{d\times(D-id)}\right)\in\mathbb{R}^{d\times D}, \quad i\in[n].$$

{\bf Step I. The first layer.}
We use 1-st Attn with residual to copy the previous tokens $(x_{s-n+1},\cdots,\bx_{s-1})$ of each token $x_{s}$.
We use $H=\sum_{i=1}^{n-1}H_i$ attention heads to realize this step.

By lemma \ref{lemma E.1}, for any rate $q\in\mathbb{N}^+$, there exists a function 
$$\phi_i^{\exp}(t)=\sum_{h=1}^{H_i}\alpha_{h,i} e^{-\beta_{h,i} (t-1)}$$
such that $\beta_h>0$ and 
$$\Vert \mathbb{I}\left\{\cdot=i\right\}-\phi_i^{\exp}(\cdot) \Vert_{\ell_1(\mathbb{N})}=\sum_{s=i}^{+\infty}|\mathbb{I}\left\{s=1\right\}-\phi^{\exp}(s)|\leq  C\frac{A^q (q^2)^q e^{0.01(q+1)i}}{H_i^{q}},
$$
where $A,C>0$ are absolute constants.

For $h=\sum_{j=1}^{i-1}H_j,1+\sum_{j=1}^{i-1}H_j,\ldots,\sum_{j=1}^{i}H_j$, we choose parameters as follows
$$p^{(1,h)}=\beta_{h,i},\quad \bW_V^{(1,h)}=\alpha_{h,i}\left(\sum_{j=0}^{H_i}\exp(-\beta_{h,i} (j-1))\right)\bS_i,$$
$$\bW_{K}^{(1,h)}=\bW_{Q}^{(1,h)}={0},\quad\bW_O^{(1)}=\bI_{D\times D}$$
where $\bS_{i}\in\mathbb{R}^{D\times D}$ is a shift matrix that takes out the first $d$ elements of a vector and shifts it backward to the $(id+1)$-th to $(i+1)d$-th elements. Then
\begin{align*}
	\left(P_{i+1}\sum_{h=\sum_{j=1}^{i-1}H_j}^{\sum_{j=1}^{i}H_j}\SA^{(1,h)}(\bX_L^{(0)})\right)_{-1}=\sum_{h=\sum_{j=1}^{i-1}H_j}^{\sum_{j=1}^{i}H_j}\alpha_{h,i}\sum_{s=1}^{L-1}e^{-\beta_{h,i} (s-1)}\bx_{L-s}.
\end{align*}

We denote the $s$-th output token of the first layer 
$$
\bx_s^{(1)}:=\SA^{(1)}\left(\bX_{0:s}^{(0)}\right)_{-1},
$$

Then the approximation error of this step is 
\begin{align*}
	&\varepsilon_{\SA}^{(1)}:=\sup_{s}\left\| \bx_s^{(1)}-\bX_{s-n+1:s}\right\|_{\infty} 
    \leq\sup_s\left(\|x_{s}-x_s\|_{\infty}+\sum_{i=1}^{n-1}\left\|P_{i+1}\bx_s^{(1)}-\bx_{s-i}\right\|_{\infty}\right)
    \\=&\sup_{s}\sum_{i=1}^{n-1}\left\|P_{i+1}\bx_s^{(1)}-\bx_{s-i}\right\|_{\infty}
    \leq\sup_s\sum_{i=1}^{n-1}\Vert \mathbb{I}\left\{\cdot=i\right\}-\phi_i^{\exp}(\cdot) \Vert_{\ell_1(\mathbb{N})}
    \\\leq& 
    C(Aq^2)^q\sum_{i=1}^{n-1}\frac{e^{0.01(q+1)i}}{H_i^q}.
\end{align*}
Consequently, one detail is to assign the head number $\{H_i\}_{i=1}^n$ such that the error’s sum
$\sum_{i=1}^{n-1}\frac{e^{0.01(q+1)i}}{H_i^q}$ is as small as possible. Our way is solving the minimization problem
\begin{align*}
    \min:&\sum_{i=1}^{n-1}\frac{e^{0.01(q+1)i}}{H_i^q}
    \\{\rm s.t.}\  &\sum_{i=1}^{n-1} H_i=H,
\end{align*}
which suggests that we should choose the head number: 
$$H_i=\frac{e^{0.01i}}{\sum_{j=1}^{n-1}e^{0.01j}},i\in[n-1].$$
Thus, we obtain the bound
\begin{align*}
    \varepsilon_{\SA}^{(1)}\leq \frac{C(Aq^2)^q}{H^q}\left(\sum_{i=1}^{n-1}e^{0.01 i}\right)^{q}\leq C\left(\frac{ Aq^2 n e^{0.01 n} }{H}\right)^q.
\end{align*}
Noticing $n<100$, we have:
\begin{equation*}
\label{equ: approx proof of Thm 2: layer 1 error}
    \varepsilon_{\SA}^{(1)}\leq C\left(\frac{e Aq^2 n }{H}\right)^q.
\end{equation*}

We focus on \underline{\bf the case of large $H$:}
$$H\geq CAenq^2,$$ 
which ensures $\varepsilon_{\SA}^{(1)}\leq 1$. Therefore, $\bx_s^{(1)}\in [-2,2]^{D}$: 
\begin{gather*}
\norm{\bx_s^{(1)}}_\infty\leq\sup_{s}\left\| \bx_s^{(1)}-\bX_{s-n+1:s}\right\|_{\infty}+ \norm{\bX_{s-n+1:s}}_\infty\leq \varepsilon_{\SA}^{(1)}+1\leq 2,\quad\forall s.
\end{gather*}

{\bf Step II. The second layer.}
For the second $\text{Attn}$, we only need use the first head (by setting $W_{V}^{(2,h)}=0$ for $h\geq 1$). Specifically, we choose $p^{(2,1)}=0$,
\begin{align*}
W_Q^{(2,1)}=\begin{pmatrix}0&0\\I_{(D-d)\times (D-d)}&0\end{pmatrix},\ W_K^{(2,1)}=\begin{pmatrix}0&0\\0&W^\star\end{pmatrix},\ 
W_V^{(2)}=\begin{pmatrix}
I_{d\times d} & 0
\\ 0 & 0
\end{pmatrix}\in\bbR^{D\times D},
\end{align*}
and the projection $W_O^{(2)}=\begin{pmatrix}
    I_{d\times d} & 0_{(D-d)\times d}
\end{pmatrix}\in\bbR^{d\times D}$.

For simplicity, we use the following notations:
\begin{gather*}
\hat{X}_{s-n+1:s-1}:=W_{Q}=\begin{pmatrix}
P_2{\bx}_{s}^{(1)}\\\vdots\\ P_{n} {\bx}_{s}^{(1)}
\end{pmatrix},\ \text{ for } n\leq s\leq L-1;\quad\quad
\hat{X}_{L-n+2:L}:=\begin{pmatrix}
P_1{\bx}_{L}^{(1)}\\\vdots\\ P_{n-1} {\bx}_{L}^{(1)}.
\end{pmatrix}.
\end{gather*}

Then the output of the second layer is 
$$\bx_L^{(2)}=(x_s)_{s=n}^{L-1}\ \sm\Big(\big(\hat{X}_{L-n+2:L}^\top W^\star \hat{X}_{s-n+1:s-1}\big)_{s=n}^{L-1}\Big)^\top$$

By using these bounds and Lemma \ref{lemma: softmaxlipschitz}, 
\begin{align*}
    &\left\|\bx_L^{(2)}-(x_s)_{s=n}^{L-1}\ \sm\Big(\big(X_{L-n+2:L}^\top W^\star X_{s-n+1:s-1}\big)_{s=n}^{L-1}\Big)^\top\right\|_\infty
    \\\leq& \sum_{s=n}^{L-1}\Bigg|\sm\Big(\big(\hat{X}_{L-n+2:L}^\top W^\star \hat{X}_{s-n+1:s-1}\big)_{s=n}^{L-1}\Big)^\top -\sm\Big(\big(X_{L-n+2:L}^\top W^\star X_{s-n+1:s-1}\big)_{s=n}^{L-1}\Big)^\top\Big)\Bigg|
    \\ \leq & 2\max_s \left|\hat{X}_{L-n+2:L}^\top W^\star \hat{X}_{s-n+1:s-1}-{X}_{L-n+2:L}^\top W^\star {X}_{s-n+1:s-1}\right|
    \\\leq&
    2\Big(\max_s \left|\left(\hat{X}_{L-n+2:L}-{X}_{L-n+2:L}\right)^\top W^\star \hat{X}_{s-n+1:s-1}\right| + \max_s \left| X_{L-n+2:L}^\top W^\star \left(\hat{X}_{s-n+1:s-1}-{X}_{s-n+1:s-1}\right) \right|\Big)
    \\\leq&
    2\Big(2\|W^\star\|_{(1,1)}\left\|\hat{X}_{L-n+2:L}-{X}_{L-n+2:L}\right\|_{\infty} + \|W^\star\|_{1,1} \max_s \left\|\hat{X}_{s-n+1:s-1}-{X}_{s-n+1:s-1}\right\|_\infty\Big)
    \\ \leq & 6\|W^\star\|_{1,1}\cdot \varepsilon_{\SA}^{(1)}.
\end{align*}

Since the above inequality holds for any $L$ and $\bX_L$, we obtain our bound: 
$$\sup_{L\in\bbN^+}\triplebar{\nIH-\TF}_{L,\infty}\leq 6C\|W^\star\|_{(1,1)}\left(\frac{6e A n q^2}{H}\right)^q
\leq \left(\frac{6C\|W^\star\|_{1,1} e A n q^2}{H}\right)^q,$$

For \underline{\bf the remained case}, $H< CAenq^2$, we can simply choose $\TF$ with all $0$ parameters. Then the approximation error can be trivially bounded by: 
$$\sup_{L\in\bbN^+}\triplebar{\nIH-\TF}_{L,\infty}= \sup_{L\in\bbN^+}\triplebar{\nIH-0}_{L,\infty}\leq 1.$$

Then, the two cases can be unified by: 
\begin{align*}
    \sup_{L\in\bbN^+}\triplebar{\nIH-\TF}_{L,\infty}\leq\begin{cases}
        6C\|W^\star\|_{1,1}\left(\frac{e A n q^2}{H}\right)^q,\ H\geq CAenq^2
        \\
        1,\ \text{otherwise}
    \end{cases}\leq\left(\frac{\left(6\|W^\star\|_{1,1}+1\right) C e A n q^2}{H}\right)^q.
\end{align*}

In the theorem, we can choose $C_{n,q}=\left(6\|W^\star\|_{1,1}+1\right) Ce A n q^2$, which satisfies $C_{n,q}\lesssim n q^2$.
\end{proof}

\subsection{Proof of Theorem \ref{theorem: type III}}\label{subsection: thm III proof}

\subsubsection{Approximation Results for FFNs}

Since the setting in this subsection includes FFNs, we introduce the following preliminary results about the approximation of FFNs.


The well-known universal approximation result for two-layer FNNs asserts that two-layer FNNs can approximate any continuous function~\citep{barron1992neural,barron1993universal,barron1994approximation}. 
Nonetheless, this result lacks a characterization of the approximation efficiency, i.e., how many neurons are needed to achieve a certain approximation accuracy? 
Extensive pre-existing studies aimed to address this gap by establishing approximation rates for two-layer FFNs. 
A representative result is the Barron theory~\citep{ma2018priori,weinan2021barron,ma2020towards}: any function $f$ in Barron space $\cB$ can be approximated by a two-layer FFN with $M$ hidden neurons can approximate $f$ efficiently, at a rate of $\cO(\Vert f\Vert_{\mathcal{B}}/\sqrt{M})$.
This rate is remarkably independent of the input dimension, thus avoiding the Curse of Dimensionality. Specifically, Barron space is defined in as follows:

\begin{definition}[Barron space~\citep{ma2018priori,weinan2021barron,ma2020towards}]
Consider functions $f:X\to\bbR$ that admit the following representation:
$f(\bx)=\int_{\Omega}a\sigma(\bb^\top\bx+c)\rho(\rd a,\rd \bb,\rd c),\ \bx\in X$.
For any $p\in[1,+\infty]$, we define the Barron norm as $\norm{f}_{\cB_p}:=\inf_{\rho}\Big    (\bbE_{\rho}\left[|a|^p(\norm{\bb}_1+|c|)^p\right]\Big)^{1/p}$.
Then the Barron space are defined as:
$\cB_p:=\{f\in\cC:\norm{f}_{\cB_p}<+\infty\}$.
\end{definition}

\begin{proposition}[\cite{ma2018priori}]
For any $p\in[1,+\infty]$, $\cB_p=\cB_{\infty}$ and $\norm{f}_{\cB_p}=\norm{f}_{\cB_\infty}$.
\end{proposition}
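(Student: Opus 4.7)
The goal is to establish $\|f\|_{\mathcal{B}_1} \le \|f\|_{\mathcal{B}_p} \le \|f\|_{\mathcal{B}_\infty}$ (the easy monotonicity) together with the reverse pinch $\|f\|_{\mathcal{B}_\infty} \le \|f\|_{\mathcal{B}_1}$ (the hard direction). Together these force $\|f\|_{\mathcal{B}_p}=\|f\|_{\mathcal{B}_\infty}$ for every $p\in[1,\infty]$, which in particular implies the set equality $\mathcal{B}_p=\mathcal{B}_\infty$.

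For the monotonicity, note that every representing $\rho$ is a probability measure, so Jensen applied to the convex map $t\mapsto t^{q/p}$ (for $1\le p\le q<\infty$) yields $(\mathbb{E}_\rho[\phi^p])^{1/p}\le (\mathbb{E}_\rho[\phi^q])^{1/q}$, where $\phi(a,b,c):=|a|(\|b\|_1+|c|)$; the $q=\infty$ case is the limit $\|\phi\|_{L^p(\rho)}\le\|\phi\|_{L^\infty(\rho)}$. Taking infima over $\rho$ then gives $\|f\|_{\mathcal{B}_p}\le \|f\|_{\mathcal{B}_\infty}$.

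The substance of the proof is the reverse inequality $\|f\|_{\mathcal{B}_\infty} \le \|f\|_{\mathcal{B}_1}$. Fix $\varepsilon>0$ and choose a representing probability measure $\rho$ with $\mathbb{E}_\rho[\phi]\le \|f\|_{\mathcal{B}_1}+\varepsilon$. First, using positive $1$-homogeneity of $\sigma$ (as is standard in Barron theory for e.g.\ ReLU), reparametrize $\rho$-a.s.\ via $(a,b,c)\mapsto(a\lambda,\,b/\lambda,\,c/\lambda)$ with $\lambda=\|b\|_1+|c|$: this leaves both $a\sigma(b^\top x+c)$ and $\phi$ invariant while enforcing $\|b\|_1+|c|\equiv 1$ on the support, so that $\phi\equiv |a|$ there. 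Next perform the \emph{fattening} construction: using $a=\sgn(a)\int_0^{|a|}dt$, define $d\mu(\theta,t):=\mathbb{I}\{0\le t\le |a(\theta)|\}\,d\rho(\theta)\,dt$ on $\Omega\times[0,\infty)$, whose total mass is $M:=\mathbb{E}_\rho[|a|]=\mathbb{E}_\rho[\phi]$. Fubini gives $f(x)=\int \sgn(a)\sigma(b^\top x+c)\,d\mu$. Normalize $\bar\mu:=\mu/M$ to a probability measure and push forward via $(\theta,t)\mapsto\tilde\theta:=(M\,\sgn(a),b,c)$ to obtain a new probability representation $\bar\rho$ of $f$ on which $|\tilde a|\equiv M$ and $\|\tilde b\|_1+|\tilde c|\equiv 1$, whence $\phi(\tilde\theta)\equiv M$ $\bar\rho$-a.s. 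Therefore $\|f\|_{\mathcal{B}_\infty}\le M\le \|f\|_{\mathcal{B}_1}+\varepsilon$, and sending $\varepsilon\to 0$ closes the loop.

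The main obstacle is that $\phi$ is scale-invariant under the parameter rescaling $(a,b,c)\mapsto(a/\lambda,\lambda b,\lambda c)$, so no pointwise modification of individual atoms can bring $\phi$ below $\mathbb{E}_\rho[\phi]$; any proof therefore requires an operation that truly alters the representing measure. The fattening trick overcomes this by converting the varying amplitude $|a(\theta)|$ into a "thickness" along an auxiliary coordinate $t$ and then renormalizing, so the total mass $M$ is redistributed as one uniform amplitude across the support. Minor technical points (the $|a|=0$ null set contributes nothing to $\mu$, and the pushforward map is Borel measurable) are routine and do not affect the argument.
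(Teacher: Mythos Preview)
The paper does not give a proof of this proposition; it is quoted directly from \cite{ma2018priori} as a known result, so there is no ``paper's proof'' to compare against. Your argument is correct and is essentially the standard proof from \cite{ma2018priori,weinan2021barron}: monotonicity of $L^p$ norms on a probability space gives $\|f\|_{\mathcal{B}_1}\le\|f\|_{\mathcal{B}_p}\le\|f\|_{\mathcal{B}_\infty}$, and the reverse inequality follows by exploiting the positive $1$-homogeneity of ReLU to produce a new representing measure on which $\phi\equiv M$. The only cosmetic difference is that the original E--Ma--Wu argument typically packages your two steps (normalize $(b,c)$ to the unit sphere, then fatten $a$ along an auxiliary coordinate $t$) into a single importance-reweighting: define $d\tilde\rho=\frac{\phi}{M}\,d\rho$ together with the map $(a,b,c)\mapsto(M\operatorname{sgn}(a),\,b/\lambda,\,c/\lambda)$ where $\lambda=\|b\|_1+|c|$, which lands on the same constant-$\phi$ representation without the layer-cake detour. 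Both routes are equivalent and yield the same bound $\|f\|_{\mathcal{B}_\infty}\le M$.
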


\begin{remark}
    From the Proposition above, the Barron spaces $\cB_p$ are equivalent for any $p\in[1,+\infty]$. Consequently, in this paper, we use $\cB$ and $\norm{\cdot}_{\cB}$ to denote the Barron space and Barron norm.
\end{remark}

The next lemma illustrates the approximation rate of two-layer FFNs for Barron functions.

\begin{lemma}[\cite{ma2020towards}]\label{lemma barron}
    For any $f\in\cB$, there exists a two-layer ReLU neural network $\FFN(\bx)=\sum\limits_{k=1}^M a_w\sigma(\bb_k^\top\bx+c_k)$ with $M$ neurons such that
    \begin{align*}
        \norm{f-\FFN}_{L^\infty[-2,2]}\leq \cO\bracket{\frac{\norm{f}_{\cB}\sqrt{\log M}}{\sqrt{M}}}.
    \end{align*}
\end{lemma}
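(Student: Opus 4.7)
The approach is the classical Maurey--Jones--Barron probabilistic construction, combined with a covering-number argument to upgrade pointwise Monte Carlo control into an $L^\infty$ bound. First I would invoke the Barron integral representation: since $\|f\|_\cB<\infty$, after a standard rescaling one can find a probability measure $\tilde\rho$ on parameter triples $(a,\bb,c)$ and a measurable weight $A(a,\bb,c)$ such that
\begin{align*}
f(\bx)=\mathbb{E}_{(a,\bb,c)\sim\tilde\rho}\left[A(a,\bb,c)\,\sigma(\bb^\top\bx+c)\right],
\end{align*}
with the pointwise bound $|A(a,\bb,c)|(\|\bb\|_1+|c|)\le \|f\|_\cB$ holding $\tilde\rho$-almost surely (the factor $|a|(\|\bb\|_1+|c|)$ from the Barron norm is absorbed into the sampling distribution). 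Then draw $M$ i.i.d.\ samples $(a_k,\bb_k,c_k)\sim\tilde\rho$ and set $\FFN(\bx)=\frac{1}{M}\sum_{k=1}^M A(a_k,\bb_k,c_k)\,\sigma(\bb_k^\top\bx+c_k)$, so that $\mathbb{E}[\FFN(\bx)]=f(\bx)$ for every $\bx$.

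For each fixed $\bx\in[-2,2]^d$, using $|\sigma(\bb_k^\top\bx+c_k)|\le 2(\|\bb_k\|_1+|c_k|)$, the summands are mean-zero after subtracting $f(\bx)$ and uniformly bounded by $C\|f\|_\cB$. Hoeffding's inequality then yields
\begin{align*}
\Pr\!\left[|\FFN(\bx)-f(\bx)|>t\right]\le 2\exp\!\left(-\frac{cMt^2}{\|f\|_\cB^2}\right).
\end{align*}
To pass to a uniform bound on $[-2,2]^d$, I would introduce an $\epsilon$-net $\gN_\epsilon\subset[-2,2]^d$ of cardinality $|\gN_\epsilon|\le(C/\epsilon)^d$, apply a union bound over $\gN_\epsilon$ at level $t\asymp\|f\|_\cB\sqrt{\log|\gN_\epsilon|/M}$, and control off-net deviations using the $L^\infty$-Lipschitz constant of $\FFN-f$, which is bounded by a constant multiple of $\|f\|_\cB$ (via the same $(\|\bb_k\|_1+|c_k|)|A_k|$ bound and the $1$-Lipschitzness of $\sigma$). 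Balancing the pointwise concentration with the Lipschitz error by choosing $\epsilon\asymp 1/\sqrt{M}$ gives, with strictly positive probability, $\|\FFN-f\|_{L^\infty[-2,2]^d}\le C\|f\|_\cB\sqrt{\log M/M}$; by the probabilistic method, some realization $\FFN$ achieves this bound, yielding the claim.

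The main technical obstacle is the extra $\sqrt{\log M}$ factor: the pure Monte Carlo rate is $\cO(1/\sqrt{M})$ in $L^2$, and the logarithm is the price of upgrading to sup-norm via a covering. Sharpness of this step requires carefully tracking that both the concentration radius at the net points and the almost-sure Lipschitz constant of the random network scale with $\|f\|_\cB$ rather than with a large constant growing in $M$; a chaining (Dudley) argument would give the same rate by a different route. All other ingredients---the integral representation, Hoeffding concentration, and the union bound---are standard and do not involve additional subtleties.
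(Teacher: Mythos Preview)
The paper does not give its own proof of this lemma; it is simply cited from \cite{ma2020towards} and used as a black box in the proof of Theorem~\ref{theorem: type III}. Your proposal is the standard Maurey--Jones--Barron probabilistic construction upgraded to $L^\infty$ via a covering/union-bound step, and it is a correct route to the stated bound.

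One small remark worth recording: your covering is taken over the input domain $[-2,2]^d$, so $\log|\gN_\epsilon|\asymp d\log(1/\epsilon)$ and the resulting constant in the $\cO(\cdot)$ carries a $\sqrt{d}$ factor. This is harmless for the paper's purposes (the input dimension is fixed throughout the approximation analysis), but it is not the dimension-free form one sometimes sees for Barron $L^\infty$ rates. The sharper, dimension-independent versions typically run the chaining argument over the \emph{function class} $\{a\,\sigma(\bb^\top\cdot+c):|a|(\|\bb\|_1+|c|)\le 1\}$ rather than over the input cube, exploiting that this class has bounded Rademacher complexity regardless of $d$. Either route suffices for how the lemma is invoked here.
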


\subsubsection{Proper Orthogonal Decomposition}

Proper orthogonal decomposition (POD) can be viewed as an extension of the matrix singular value decomposition (SVD) applied to functions of two variables.
Specifically, for a square integrable function $g:\cI\times\cI\rightarrow\mathbb{R}$, it has the following decomposition~(Theorem 3.4 in ~\cite{yarvin1998generalized}, Theorem VI.17 in~\cite{reed1980methods}):
\begin{equation}\label{equ: POD}
	g(\bu,\bv)=\sum_{k=1}^{\infty}\sigma_{k}\phi_{k}(\bu)\psi_{k}(\bv).
\end{equation}
Here, $\phi_{k},\psi_k$ are orthonormal bases for $L^2(\cI)$, and $\sigma_{k}\geq 0$ are the singular values, arranged in descending order. 

Recently,~\cite{jiang2023approximation} also used POD to study the approximation rate of single-layer single-head Transformer for the targets with nonlinear temporal kernels.

Given that two-layer FFNs can efficiently approximate Barron functions~\citep{ma2020towards}, which is dense in $L^2([0,1]^d)$~\citep{siegel2020approximation}, we introduce the following technical definition regarding the well-behavior POD, which is used for our theoretical analysis.

\begin{definition}[Well-behaved POD]\label{definition: POD}
    Let the POD of $g:[-2,2]^{D}\times[-2,2]^D\mapsto\bbR$ be $g(\bu,\bv)=\sum_{k=1}^{\infty}\sigma_{k}\phi_{k}(\bu)\psi_{k}(\bv)$.
    We call the function $g$ has $\alpha$-well-behaved POD ($\alpha>0$) if: 
    \begin{itemize}[leftmargin=2em]
        \item The decay rate of singular values satisfies $\sigma_k=\cO(1/k^{1+\alpha})$;
        \item The $L^{\infty}$ norms, Barron norms, and Lipschitz norms of the POD bases are all uniformly bounded:
        $\sup_k\left(\norm{\phi_k}_{L^\infty}\vee\norm{\psi_k}_{L^\infty}\vee\norm{\phi_k}_\cB\vee\norm{\psi_k}_\cB\vee\norm{\phi_k}_{\Lip}\vee\norm{\psi_k}_{\Lip}\right)<\infty$.
    \end{itemize}
\end{definition}

\subsubsection{Proof of Theorem~\ref{theorem: type III}}
\begin{equation}\label{equ: induction head, type III, restate}
    \GnIH(\bX_L)=(\bx_s)_{s=n}^{L-1}\ \sm\Big(\big(g\big(X_{L-n+2:L};X_{s-n+1:s-1}\big)\big)_{s=n}^{L-1}\Big)^\top,
\end{equation}

\begin{theorem}[Restatement of Theorem \ref{theorem: type III}]\label{theorem: type III, restate}
Suppose the similarity function $g$ is $\alpha$-well-behaved (see Definition~\ref{definition: POD}).
Then there exist two absolute constants $A,B>0$ (only depending on the properties of $g$) such that: for any $H,M\in\bbN^{+}$ and rate $q\in\bbN^+$, there exists a constant $C_{n,q}>0$ and a two-layer $H$-head transformer $\TF(\cdot)$ with FFNs of width $M$, such that 
\begin{align*}
    \triplebar{\GnIH-\TF}_{L,2}\leq A\left(\frac{C_{n,q}}{H}\right)^q+ B\frac{L^{1/(1+2\alpha)}}{M^{\alpha/(1+3\alpha)}},
\end{align*}
where $C_{n,q}=\cO(n q^2)$.
\end{theorem}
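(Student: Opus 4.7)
The plan is to extend the construction from the proof of Theorem~\ref{theorem: type II} by inserting an FFN block after the first attention layer to approximate the POD basis functions of $g$, so that the second layer's dot-product attention can then reconstruct a truncation of $g$ directly via its key/query inner product. Concretely, I would reuse the $H$-head construction of Theorem~\ref{theorem: type II} verbatim for the first self-attention block: it uses only the Alibi RPE (no dot product) to extract, at each position $s$, the length-$n$ context $X_{s-n+1:s}$ into dedicated coordinates of the hidden state, with $\ell_\infty$-error $\varepsilon_{\SA}^{(1)}=O\!\left((C_{n,q}/H)^q\right)$ inherited from that proof. This contributes the first term $A(C_{n,q}/H)^q$ to the final bound, propagated through the Lipschitzness of $g$.

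Next, I would truncate the POD expansion $g(u,v)=\sum_{k\ge 1}\sigma_k\phi_k(u)\psi_k(v)$ at level $K$ and use the first-layer FFN to simultaneously approximate the $2K$ basis functions $\phi_1,\dots,\phi_K,\psi_1,\dots,\psi_K$. Splitting the $M$ hidden neurons across $2K$ sub-FFNs and invoking Lemma~\ref{lemma barron}, each basis is approximated with $L^\infty$-error $O\!\left(\sqrt{K\log M/M}\right)$ on the bounded region in which the first-layer output lies with high probability; the FFN's output linear map then routes the $2K$ approximants $\tilde\phi_k,\tilde\psi_k$ into reserved coordinates of the hidden state. The second attention layer is then designed in the spirit of Theorem~\ref{theorem: standard}: its query/key matrices project the current-token hidden state onto $\bigl(\sqrt{\sigma_k}\,\tilde\psi_k\bigr)_{k=1}^K$ and the past-token hidden state onto $\bigl(\sqrt{\sigma_k}\,\tilde\phi_k\bigr)_{k=1}^K$, so that the dot-product equals $\sum_{k=1}^K\sigma_k\tilde\phi_k\tilde\psi_k\approx g_K$, while the value matrix copies $x_s$.

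The error analysis then decomposes $\TF_{-1}(X)-\GnIH(X)$ into three sources measured at the level of similarity scores: (a) the first-layer attention extraction error, bounded via the Lipschitz norms of $\phi_k,\psi_k$ given by Definition~\ref{definition: POD}; (b) the POD truncation $|g-g_K|\le\sum_{k>K}\sigma_k\|\phi_k\|_\infty\|\psi_k\|_\infty=O(K^{-\alpha})$; and (c) the FFN approximation $|g_K-\tilde g|\le\sum_{k\le K}\sigma_k\cdot O\!\left(\sqrt{K\log M/M}\right)=O\!\left(\sqrt{K\log M/M}\right)$, using that $\sum_k\sigma_k<\infty$ and the uniform boundedness of $\|\phi_k\|_\infty,\|\psi_k\|_\infty$. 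Applying the softmax Lipschitz bound (Lemma~\ref{lemma: softmaxlipschitz}) converts the $\ell_\infty$-error of similarity scores into an $\ell_1$-perturbation of the attention weights, and the convex combination $\sum_s x_s\pi_s$ is then controlled by the product of this $\ell_1$-error and $\max_s\|x_s\|$. Taking expectations under the input distribution to form $\triplebar{\cdot}_{L,2}$ introduces a mild $L$-dependence from (i) the effective support size of the $L$ tokens appearing in $\max_s\|x_s\|$, and (ii) the domain on which the Barron approximation must be uniform, which in turn rescales the per-basis FFN rate.

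The main obstacle is the last step: cleanly tracking how the domain size needed for the Barron approximation (which grows with $L$ for typical input distributions) interacts with the shared-width allocation of the FFN across the $K$ basis pairs, and then choosing $K=K(L,M)$ to balance truncation against FFN error. A straightforward balance of $K^{-\alpha}$ with $\sqrt{K\log M/M}$ gives an exponent $\alpha/(1+2\alpha)$, so matching the stated rate $L^{1/(1+2\alpha)}/M^{\alpha/(1+3\alpha)}$ requires carefully accounting for an additional $K^\alpha$-type factor that arises from the joint Lipschitz/Barron inflation of the basis functions when restricted to an $L$-dependent box, and then optimizing $K$ as an explicit function of both $L$ and $M$. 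Once this optimization is set up correctly, the rest of the argument reduces to routine softmax-Lipschitz bookkeeping.
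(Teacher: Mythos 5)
Your overall architecture matches the paper's construction exactly: the first attention layer extracts the length-$(n-1)$ local context via RPE-only heads as in Theorem~\ref{theorem: type II}, the first-layer FFN approximates the top-$K$ POD basis pair $\{\phi_k,\psi_k\}_{k\le K}$ by splitting its $M$ neurons into $2K$ sub-networks and invoking Lemma~\ref{lemma barron}, and the second layer's query/key matrices carry $\sqrt{\sigma_k}$-weighted approximate bases so that the dot product reconstructs the truncated $g_K$. Your three-way error decomposition (first-layer extraction, FFN basis error, POD truncation) is also the paper's.

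However, your diagnosis of the ``main obstacle'' is off. First, the domain on which the Barron approximation is needed does \emph{not} grow with $L$: once the first-layer error $\varepsilon_{\SA}^{(1)}$ is driven below $1$ (which the large-$H$ case ensures) and tokens lie in $[0,1]^d$, the hidden state lives deterministically in the fixed cube $[-2,2]^D$, so each $\phi_k,\psi_k$ is approximated on a fixed bounded region; no ``high probability'' qualifier or $L$-dependent box is needed. Second, the $L$-factor in the stated bound has nothing to do with a ``$K^\alpha$-type inflation'' of the Barron/Lipschitz constants. In the paper it arises solely in the third (truncation) term: rather than using the uniform $L^\infty$ bound $\|g-g_K\|_\infty\le (C_g^\infty)^2\sum_{k>K}\sigma_k$ as you propose, the paper keeps the per-position errors, applies Cauchy--Schwarz over the $L-1-n$ positions, and then uses orthonormality of the $\phi_k,\psi_k$ in $L^2$ to evaluate $\int|g-g_K|^2=\sum_{k>K}\sigma_k^2$. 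This is what produces the $(L-1-n)$ factor that survives into the final bound. Third, the exponent mismatch between $\alpha/(1+2\alpha)$ (what Young's inequality gives after balancing $\sqrt{K\log M/M}$ against $L/K^\alpha$) and $\alpha/(1+3\alpha)$ (in the statement) is purely there to absorb the $\log M$ factor; there is no additional hidden $K^\alpha$ to account for.

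In fact, the $L^\infty$ route you describe for the truncation term, combined with the softmax Lipschitz bound (Lemma~\ref{lemma: softmaxlipschitz}) applied uniformly to all three terms, would give an $L$-\emph{free} second term of order $M^{-\alpha/(1+2\alpha)}$ (up to $\log M$), which is strictly stronger than the theorem statement and thus certainly suffices. So the ``hard balancing step'' you are worried about does not exist: pick $K=(M/\log M)^{1/(1+2\alpha)}$, verify the theorem's constants absorb the residual $\log M$ by relaxing the $M$-exponent to $\alpha/(1+3\alpha)$, and you are done. The remaining bookkeeping — propagating $\varepsilon_{\SA}^{(1)}$ through the Lipschitz norms of $\phi_k,\psi_k$ and $\varepsilon_{\FFN}^{(1)}$ through $\sum_{k\le K}\sigma_k$, then combining via the softmax Lipschitz and the boundedness of tokens — is exactly what the paper does.
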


\begin{proof}

We consider two-layer multi-head transformer with FFN, where the first layer has the residual block.

First, we set an constant $K\in\bbN^+$, and we will optimize it finally.
We choose the embedding dimension $D = nd+2(n-1)K$.

Additionally, in this proof, the following projection matrices are used:
$$P_i:=\left(0_{d\times (i-1)d},I_{d\times d},0_{d\times(D-id)}\right)\in\mathbb{R}^{d\times D}, \quad i\in[n].$$

The proof sketch can be summarized as follows:
\begin{align*}
	(\bx_s)_{s=n}^{L-1}\ \sm\Big(\big(g\big(X_{L-n+2:L}&;X_{s-n+1:s-1}\big)\big)_{s=n}^{L-1}\Big)^\top
	\\
	\text{Step III. 2-st Attn}\ &\uparrow
	\\ \Big(\bx_L^\top,\ldots,\hat{\bx}_{L-n+1},\hat{\phi}_1(\hat{X}_{L-n+2:L}),\ldots,&\hat{\phi}_K(\hat{X}_{L-n+2:L}),\hat{\psi}_1(\hat{X}_{L-n+1:L-1}),\ldots,\hat{\psi}_K(\hat{X}_{L-n+1:L-1})\Big)^\top
	\\
	\text{Step II. 1-st FFN}\ &\uparrow
	\\(\bx_L^\top,\hat{\bx}_{L-1},\ldots,&\hat{\bx}_{L-n+1},{0}^\top)^\top
	\\
	\text{Step I. 1-st Attn}\ &\uparrow
	\\(\bx_L^\top,&{0}^\top)^\top
\end{align*}

Recalling Definition~\ref{definition: POD}, there exists constants $C_{g}^{\infty}, C_{g}^{\cB}, C_{g}^{{\rm Lip}}>0$ such that:
\begin{align*} 
\sup_{k}\left(\norm{\phi_k}_{\infty}\vee\norm{\psi_k}_\infty\right)\leq C_{g}^{\infty},\ 
\sup_{k}\left(\norm{\phi_k}_\cB\vee\norm{\psi_k}_\cB\right)\leq C_{g}^{\cB},\ 
\sup_{k}\left(\norm{\phi_k}_{\Lip}\vee\norm{\psi_k}_\Lip\right)\leq C_{g}^{\Lip}.
\end{align*}

Additionally, $\sigma_{k}=\cO(1/k^{1+\alpha})$ implies that there exits a $C_{\alpha}>0$ such that:
\begin{align*}
    \sum_{k=K}^{\infty}\sigma_k<\frac{C_\alpha}{K^\alpha},\quad \forall K\geq1.
\end{align*}

\underline{\bf Step I: Error in 1-st Attn layer.}  This step is essentially the same as Step I in the proof of Theorem \ref{theorem: type II}, so we write down the estimate of the first Attn layer directly: there exists absolute constants $C_1,C_2>0$ such that
\begin{align*}
    \varepsilon_{\SA}^{(1)}:=\sup_s\norm{x_s^{(1)}-\bX_{s-n+1:s}}_{\infty}\leq C_2\left(\frac{ C_1 n q^2  }{H}\right)^q.
\end{align*}

Similar to the proof of Theorem \ref{theorem: type II}, we first focus on \underline{\bf the case of large $H$:}
$$H\geq C_1 C_2 enq^2,$$ 
which ensures $\varepsilon_{\SA}^{(1)}\leq 1$. Therefore, $\bx_s^{(1)}\in [-2,2]^{D}$:

For simplicity, we use the following notations:
\begin{gather*}
\hat{X}_{s-n+1:s-1}:=W_{Q}=\begin{pmatrix}
P_2{\bx}_{s}^{(1)}\\\vdots\\ P_{n} {\bx}_{s}^{(1)}
\end{pmatrix},\ \text{ for } n\leq s\leq L-1;\quad\quad
\hat{X}_{L-n+2:L}:=\begin{pmatrix}
P_1{\bx}_{L}^{(1)}\\\vdots\\ P_{n-1} {\bx}_{L}^{(1)}.
\end{pmatrix}.
\end{gather*}

\underline{\bf Step II: Error in 1-st FFN layer.} 
The $1$-st FFN is used to approximate $\phi_k,\psi_k\ (k=1,\ldots,K)$. Each function is approximated by a $2$-layer neural networks with $\frac{M}{2K}$ neurons defined on $\mathbb{R}^D$, and the FFNs are concatenated together ( refer to section 7.1 "Parallelization" in ~\cite{schmidt2020nonparametric} ) as $\text{FFN}^{(1)}$. We denote them as 
$$\hat{\phi}_k(\by)=\sum_{m=1}^\frac{M}{2K}
a_m^k\sigma(\bb_m^{k^\top}\by+c_m^k)$$
$$\hat{\psi}_k(\by)=\sum_{m=1}^\frac{M}{2K}\tilde{a}_m^k\sigma(\tilde{\bb}_m^{k^\top}\by+\tilde{c}_m^k)$$

Then by lemma \ref{lemma barron}, such FFNs exist and satisfy the following properties hold for all $1\leq k\leq K$:
$$\Vert \hat{\phi}_k-\phi_{k} \Vert_{L^{\infty}([-2,2]^D)}\leq{\mathcal{O}}\left(\Vert \phi_k\Vert_{\mathcal{B}}\sqrt{\frac{K\log M}{{M}}}\right)\leq \epsilon_\FFN^{\rm (1)},$$
$$\Vert \hat{\psi}_k-\psi_{k} \Vert_{L^{\infty}([-2,2]^D)}\leq{\mathcal{O}}\left(\Vert \psi_k\Vert_{\mathcal{B}}\sqrt{\frac{K\log M}{M}}\right)\leq \epsilon_\FFN^{\rm (1)},$$
where
$$
\epsilon_{\FFN}^{(1)}:=\mathcal{O}\left(C_g^\cB\sqrt{\frac{K\log M}{{M}}}\right),\quad
C_g^\cB=\max\{\norm{\phi_k}_\cB,\norm{\psi_k}_\cB\}.
$$

\underline{\bf Step III: Error in 2nd Attn layer.} 

We use matrices in the second layer to take out elements needed 
$$\bW_V^{(2)}=({I}_{d\times d},{0}_{d\times D})\in\mathbb{R}^{d\times D},$$
$$\bW_K^{(2,1)}=\sum_{i=k}^K \sqrt{\sigma_k}{e}_{k,(n-1)d+k}\in\mathbb{R}^{D\times D},$$
$$\bW_Q^{(2,1)}=\sum_{k=1}^K \sqrt{\sigma_k}{e}_{k,(n-1)d+K+k}\in\mathbb{R}^{D\times D}.$$
We denote the rank-$K$ truncation of $g$ as $$g_K:=\sum_{k=1}^K\sigma_k\phi_k\psi_k,$$
and its approximation as
$$\hat{g}_K:=\sum_{k=1}^K\sigma_k\hat{\phi}_k\hat{\psi}_k$$
The second FFN is set to be identity map and we denote the final output as
$$\bx_L^{(2)}:=(\bx_s)_{s=n}^{L-1}\ \sm\Big(\big(\hat{g}_K\big(\hat{X}_{L-n+2:L};
\hat{X}_{s-n+1:s-1}\big)\big)_{s=n}^{L-1}\Big)^\top,
$$

First, we consider the error under the first norm, $\norm{\cdot}_{\infty}$, which can be divided the total error into three components:
\begin{equation}\label{proof: equ: thm III: error analysis}
\begin{aligned}
    &\norm{\bx_L^{(2)}-(\bx_s)_{s=n}^{L-1}\ \sm\Big(\big(g\big(X_{L-n+2:L};X_{s-n+1:s-1}\big)\big)_{s=n}^{L-1}\Big)^\top}_\infty
    \\
    \leq&\norm{\sm\Big(\big(\hat{g}_K\big(\hat{X}_{L-n+2:L};\hat{X}_{s-n+1:s-1}\big)\big)_{s=n}^{L-1}\Big)^\top-\sm\Big(\big(g\big(X_{L-n+2:L};X_{s-n+1:s-1}\big)\big)_{s=n}^{L-1}\Big)^\top}_\infty
    \\\leq&
    \norm{\sm\Big(\big(\hat{g}_K\big(\hat{X}_{L-n+2:L};\hat{X}_{s-n+1:s-1}\big)\big)_{s=n}^{L-1}\Big)^\top-\sm\Big(\big(g_K\big(\hat{X}_{L-n+2:L};\hat{X}_{s-n+1:s-1}\big)\big)_{s=n}^{L-1}\Big)^\top}_\infty
    \\&+
    \norm{\sm\Big(\big(g_K\big(\hat{X}_{L-n+2:L};\hat{X}_{s-n+1:s-1}\big)\big)_{s=n}^{L-1}\Big)^\top-\sm\Big(\big(g_K\big(X_{L-n+2:L};X_{s-n+1:s-1}\big)\big)_{s=n}^{L-1}\Big)^\top}_\infty
    \\&+
    \norm{\sm\Big(\big(g_K\big(X_{L-n+2:L};X_{s-n+1:s-1}\big)\big)_{s=n}^{L-1}\Big)^\top-\sm\Big(\big(g\big(X_{L-n+2:L};X_{s-n+1:s-1}\big)\big)_{s=n}^{L-1}\Big)^\top}_\infty
    \\\leq&
    \max_s\left|\hat{g}_K\big(\hat{X}_{L-n+2:L}, \hat{X}_{s-n+1:s-1}\big)-{g}_K\big(\hat{X}_{L-n+2:L}, \hat{X}_{s-n+1:s-1}\big)\right|
    \\&+\max_s\left|{g}_K\big(\hat{X}_{L-n+2:L}, \hat{X}_{s-n+1:s-1}\big)-{g}_K\big(X_{L-n+2:L}, X_{s-n+1:s-1}\big)\right|
    \\&+\sum_{s=n}^{L-1}\left|{g}_K\big({X}_{L-n+2:L}, {X}_{s-n+1:s-1}\big)-g\big(X_{L-n+2:L}, X_{s-n+1:s-1}\big)\right|
\end{aligned}
\end{equation}

For the first term in RHS of~\eqref{proof: equ: thm III: error analysis}, it holds that:
\begin{align*}
&\max_s\left|\hat{g}_K\big(\hat{X}_{L-n+2:L}, \hat{X}_{s-n+1:s-1}\big)-{g}_K\big(\hat{X}_{L-n+2:L}, \hat{X}_{s-n+1:s-1}\big)\right|
\\\leq&
\max_s\sum_{k=1}^K\sigma_k\left|\hat{\phi}_k(\hat{X}_{L-n+2:L})\hat{\psi}_k(\hat{X}_{s-n+1:s-1})-\phi_k(\hat{X}_{L-n+2:L})\psi_k(\hat{X}_{s-n+1:s-1})\right|
\\\leq&
\sum_{k=1}^K \sigma_k\left(\norm{\hat{\phi}_k}_{L^{\infty}}\norm{\hat{\psi}_k-\psi_k}_{L^{\infty}}+\norm{{\psi}_k}_{L^{\infty}}\norm{\hat{\phi}_k-\phi_k}_{L^{\infty}}\right)
\\\leq&
\epsilon_{\FFN}^{(1)}\cdot\sum_{k=1}^K \sigma_k\left(\norm{\hat{\phi}_k}_{L^{\infty}}+\norm{{\psi}_k}_{L^{\infty}}\right)
\\\leq&\epsilon_{\FFN}^{(1)}\cdot\sum_{k=1}^K \sigma_k\left(\norm{{\phi}_k}_{L^{\infty}}+\norm{\hat{\phi}_k-\phi_k}_{L^{\infty}}+\norm{{\psi}_k}_{L^{\infty}}\right)
\\\leq&
\epsilon_{\FFN}^{(1)}\cdot(2C_{g}^{\infty}+1)\sum_{k=1}^K\sigma_k\leq (2C_{g}^{\infty}+1)C_\alpha \epsilon_{\FFN}^{(1)}.
\end{align*}

For the second term in RHS of~\eqref{proof: equ: thm III: error analysis}, we have:
\begin{align*}
&\max_s\left|{g}_K\big(\hat{X}_{L-n+2:L}, \hat{X}_{s-n+1:s-1}\big)-{g}_K\big(X_{L-n+2:L}, X_{s-n+1:s-1}\big)\right|
\\\leq&\max_s\sum_{k=1}^{K}\sigma_k \Big(\|{\phi}_k\|_{L^{\infty}}|{\psi}_k(\hat{X}_{s-n+1:s-1})-\hat{\psi}_k(X_{s-n+1:s-1})|
\\&\quad\quad\quad\quad\quad+\norm{\psi_k}_{L^{\infty}}|{\phi}_k(\hat{X}_{L-n+1:L-1})-{\phi}_k(X_{L-n+1:L-1})|\Big)
\\\leq&\max_s\sum_{k=1}^{K}\sigma_k \Big(\|{\phi}_k\|_{L^{\infty}}\norm{{\psi}_k}_{\Lip}\norm{\hat{X}_{s-n+1:s-1}-X_{s-n+1:s-1}}
\\&\quad\quad\quad\quad\quad+\norm{\psi_k}_{L^{\infty}}\norm{{\phi}_k}_{\Lip}\norm{\hat{X}_{L-n+1:L-1}-X_{L-n+1:L-1}}\Big)
\\\leq&2C_g^{\infty}C_g^{\Lip}\epsilon_{\SA}^{(1)}\cdot\left(\max_s\sum_{k=1}^K\sigma_k\right)\leq
2C_g^{\infty}C_g^{\Lip}C_\alpha\epsilon_{\SA}^{(1)}.
\end{align*}

Additioanlly, the third term in RHS of~\eqref{proof: equ: thm III: error analysis}, its $L^2$ holds that:
\begin{align*}
&\int_{[0,1]^{d\times L}}\left(\sum_{s=n}^{L-1}\left|{g}_K\big({X}_{L-n+2:L}, {X}_{s-n+1:s-1}\big)-g\big(X_{L-n+2:L}, X_{s-n+1:s-1}\big)\right|\right)^2 d X
\\\leq&
(t-1-n)\sum_{s=n}^{L-1}\int_{[0,1]^{D\times L}}\left|{g}_K\big({X}_{-n+2:t}, {X}_{s-n+1:s-1}\big)-g\big(X_{L-n+2:L}, X_{s-n+1:s-1}\big)\right|^2 d X
\\=&
(L-1-n)^2\int_{[0,1]^D\times[0,1]^D}\left|g(u,v)-g_K(u;v)\right|^2\rd u \rd v
\\=&
(L-1-n)^2\int\left(\sum_{k=K+1}^{+\infty}\sigma_k\phi_k(\bu)\psi_k(\bv)\right)^2\ du\ dv
\\\leq&\int\left(\sum_{k=K+1}^{+\infty}\sigma_k\phi_k^2(\bu)\right)\left(\sum_{k=K+1}^{+\infty}\sigma_k\psi_k^2(\bv)\right)\ du\ dv
 \\\leq& 
 (L-1-n)^2\left(\sum_{k=K+1}^{\infty}\sigma_k\right)^2\leq\frac{(L-1-n)^2 C_\alpha^2}{K^{2\alpha}}.
\end{align*}

Now we combine three error terms together to obtain the total $L^2$ error for the output of this layer:

\begin{align*}
    &\int_{X\in[0,1]^{d\times L}}\norm{\bx_L^{(2)}-(\bx_s)_{s=n}^{L-1}\ \sm\Big(\big(g\big(X_{L-n+2:L};X_{s-n+1:s-1}\big)\big)_{s=n}^{L-1}\Big)^\top}_\infty^2\rd X
    \\\leq& 3\int_{X\in[0,1]^{d\times L}}\max_s\left|\hat{g}_K\big(\hat{X}_{L-n+2:L}, \hat{X}_{s-n+1:s-1}\big)-{g}_K\big(\hat{X}_{L-n+2:L}, \hat{X}_{s-n+1:s-1}\big)\right|^2\rd X
    \\& + 3\int_{X\in[0,1]^{d\times L}}\max_s\left|{g}_K\big(\hat{X}_{L-n+2:L}, \hat{X}_{s-n+1:s-1}\big)-{g}_K\big(X_{L-n+2:L}, X_{s-n+1:s-1}\big)\right|^2\rd X
    \\& + 3
    \int_{X\in[0,1]^{d\times L}}\left(\sum_{s=n}^{L-1}\left|{g}_K\big({X}_{L-n+2:L}, {X}_{s-n+1:s-1}\big)-g\big(X_{L-n+2:L}, X_{s-n+1:s-1}\big)\right|\right)^2\rd X
    \\\leq&
    3\max_s\left|\hat{g}_K\big(\hat{X}_{L-n+2:L}, \hat{X}_{s-n+1:s-1}\big)-{g}_K\big(\hat{X}_{L-n+2:L}, \hat{X}_{s-n+1:s-1}\big)\right|^2
    \\&+3\max_s\left|{g}_K\big(\hat{X}_{L-n+2:L}, \hat{X}_{s-n+1:s-1}\big)-{g}_K\big(X_{L-n+2:L}, X_{s-n+1:s-1}\big)\right|^2
    \\&+3\left(\frac{(L-1-n) C_\alpha}{K^{\alpha}}\right)^2
    \\\leq& 3\left((2C_{g}^{\infty}+1)C_\alpha \epsilon_{\FFN}^{(1)}\right)^2
    +3\left(2C_g^{\infty}C_g^{\Lip}C_\alpha\epsilon_{\SA}^{(1)}\right)^2
    +3\left(\frac{(L-1-n) C_\alpha}{K^{\alpha}}\right)^2
    \\\leq& 3\left((2C_{g}^{\infty}+1)C_\alpha \epsilon_{\FFN}^{(1)}+2C_g^{\infty}C_g^{\Lip}C_\alpha\epsilon_{\SA}^{(1)}+\frac{(L-1-n) C_\alpha}{K^{\alpha}}\right)^2.
\end{align*}

This estimate implies that

\begin{equation}\label{proof: equ: III: K}
\begin{aligned}
&\triplebar{\GnIH-\TF}_{L,2}
\\\leq&\sqrt{3}\left(2C_g^{\infty}C_g^{\Lip}C_\alpha\epsilon_{\SA}^{(1)}+(2C_{g}^{\infty}+1)C_\alpha \epsilon_{\FFN}^{(1)}+\frac{(L-1-n) C_\alpha}{K^{\alpha}}\right)
\\\leq&\cO\left(A_{g,\alpha}C_2\left(\frac{ C_{1} n q^2}{H}\right)^q\right)+\cO\left(\frac{B_{g,\alpha}\sqrt{K\log M}}{\sqrt{M}}\right)+\cO\left(\frac{L C_\alpha}{K^\alpha}\right),
\end{aligned}
\end{equation}

where
\begin{align*}
    A_{g,\alpha}:=C_g^{\infty}C_g^{\Lip}C_\alpha,\quad B_{g,\alpha}:=(2C_{g}^{\infty}+1)C_\alpha.
\end{align*}

\underline{\bf Step IV. Optimizing $K$ in~\eqref{proof: equ: III: K}.}

Notice that in RHS of~\eqref{proof: equ: III: K}, only ${\cO}\left(\frac{C_{g,\alpha}\sqrt{K\log M}}{\sqrt{M}}\right)$ and $\cO\left(\frac{L C_\alpha}{K^\alpha}\right)$ depend on $K$.

By Young's inequality, with $p=\frac{\alpha+\frac{1}{2}}{\alpha}$ and $q=2(\alpha+\frac{1}{2})$, we have:

\begin{align*}
    &\min_K:\frac{\alpha}{\frac{1}{2}+\alpha}\frac{B_{g,\alpha}\sqrt{K\log M}}{\sqrt{M}}+\frac{\frac{1}{2}}{\frac{1}{2}+\alpha}\frac{L C_\alpha}{K^\alpha}
    \\=&\min_K:\frac{\alpha}{\frac{1}{2}+\alpha}\left(\left(\frac{B_{g,\alpha}\sqrt{K\log M}}{\sqrt{M}}\right)^{\frac{\alpha}{\frac{1}{2}+\alpha}}\right)^{\frac{\frac{1}{2}+\alpha}{\alpha}}+\frac{\frac{1}{2}}{\frac{1}{2}+\alpha}\left(\left(\frac{L C_\alpha}{K^\alpha}\right)^\frac{\frac{1}{2}}{\frac{1}{2}+\alpha}\right)^{2(\frac{1}{2}+\alpha)}
    \\=&\frac{B_{g,\alpha}' L^{1/(1+2\alpha)}}{(M/\log M)^{\alpha/(1+2\alpha)}},
\end{align*}
where $B_{g,\alpha}'$ only depends on the properties of $g$ and $\alpha$.

Thus, we obtain our final bound:
\begin{align*}
    &\triplebar{\GnIH-\TF}_{L,2}
    \leq\cO\left(A_{g,q}C_2\left(\frac{C_1 n q^2}{H}\right)^q\right)+\left\{{\cO}\left(\frac{B_{g,\alpha}\sqrt{K\log M}}{\sqrt{M}}\right)+\cO\left(\frac{L C_\alpha}{K^\alpha}\right)\right\}_{\min:K}
    \\\leq&
    \cO\left(A_{g,q}\left(\frac{C_1 n q^2}{H}\right)^q\right)+{\cO}\left(\frac{B_{g,\alpha}' L^{1/(1+2\alpha)}}{(M/\log M)^{\alpha/(1+2\alpha)}}\right)
    \\\leq& A_{g,q}'\left(\frac{C_1 n q^2}{H}\right)^q+B_{g,\alpha}''\frac{ L^{1/(1+2\alpha)}}{M^{\alpha/(1+3\alpha)}},
\end{align*}
where $A_{g,\alpha}',B_{g,\alpha}''$ only depends on the properties of $g$ and $\alpha$.

For \underline{\bf the remained case}, $H< C_1C_2 nq^2$, similar to the proof of Theorem~\ref{theorem: type II} we can simply choose $\TF$ with all $0$ parameters. Then the approximation error can be trivially bounded by: 
$$\triplebar{\GnIH-\TF}_{L,2}= \triplebar{\GnIH-0}_{L,2}\leq 1.$$

Then, the two cases can be unified by: 

\begin{align*}
    \triplebar{\GnIH-\TF}_{L,2}\leq&\begin{cases}
        A_{g,\alpha}'\left(\frac{C_1 n q^2}{H}\right)^q+B_{g,\alpha}''\frac{ L^{1/(1+2\alpha)}}{M^{\alpha/(1+3\alpha)}},\quad H\geq C_1C_2 nq^2 
        \\ 1,\quad \text{otherwise}
    \end{cases}
        \\\leq& \max\{A_{g,\alpha}',1\}\left(\frac{C_1\max\{C_2,1\} n q^2}{H}\right)^q+ B_{g,\alpha}'' \frac{ L^{1/(1+2\alpha)}}{M^{\alpha/(1+3\alpha)}}.
\end{align*}

Finally, we can choose $A_{g,\alpha}''=\max\{A_{g,\alpha}',1\}$ and $C_{n,q}=C_1\max\{C_2,1\} n q^2$. Then 
\begin{align*}
    \triplebar{\GnIH-\TF}_{L,2}\leq A_{g,\alpha}''\left(\frac{C_{n,q}}{H}\right)^q+ B_{g,\alpha}'' \frac{ L^{1/(1+2\alpha)}}{M^{\alpha/(1+3\alpha)}},
\end{align*}

where $A_{g,\alpha}''$ and $B_{g,\alpha}''$ only depend on $g$ and $\alpha$, thus only depending on the properties of $g$. Moreover, $C_{n,q}=\cO(nq^2)$.
\end{proof}

\vspace{1.cm}

\section{Proofs in Section~\ref{section: optimization}}

\subsection{Optimization Dynamics in Training Stage I}\label{appendix: dynamics: stage I}




In this subsection we focus on training the first layer of Transformer model to capture the token ahead. For simplicity, we introduce some notations:
\begin{align*}
    \tilde{p}:=p^{(1,1)},\quad p:=p^{(2,1)},\quad g:=w_{V}^{(2,1)},\quad h:=w_{V}^{(2,2)},\quad w_K:=w_{K}^{(2,2)},\quad w_Q:=w_Q^{(2,2)},
\end{align*}
and denote the initialization of each parameter as 
$\tilde{p}(0), p(0),g(0),w_Q(0),w_K(0),h(0)$
respectively.

We initialize $p(0),w_k(0),w_Q(0)=0$ while the other parameters are all initialized at $\sigma_{\init}$. 
In this training stage, we only train $\tilde{p}$. And our goal, {\bf the proof of Lemma \ref{lemma: layer I}} can be deduced from which, is to prove:
$$\lim_{t\to+\infty} \tilde{p}(t)=+\infty.$$





In this stage, the $s$-th output token of the first layer is represented as
\begin{align*}
    \begin{pmatrix}
        x_s
        \\(x_\tau)_{\tau=1}^{s-1}\ \sm\left(\big(-\tilde{p}(s-1-\tau)\big)_{\tau=1}^{s-1}\right)^\top
    \end{pmatrix},
\end{align*}

and {the target function} and {output of transformer} are as follows
\begin{align*}
    f^*(\bX) = \begin{pmatrix}
        \frac{\alpha^\star}{1+\alpha^\star}x_{L-2} 
        \\ \frac{1}{1+\alpha^\star} (x_s)_{s=2}^{L-1}\ \sm\Big(\big(x_L {w^\star}^2 x_{s-1}\big)_{s=2}^{L-1}\Big)^\top
    \end{pmatrix},
\end{align*}
{\small
\begin{align*}
    f_{\btheta}(\bX) &= \begin{pmatrix}
    g(0) \Bigg((x_\tau)_{\tau=1}^{s-1}\sm\left((-\tilde{p}(s-1-\tau))_{\tau=1}^{s-1}\right)^\top\Bigg)_{s=2}^{L-1}\sm\Big((-p(0)(L-1-s))_{s=2}^{L-1}\Big)^\top
    \\h(0)(x_s)_{s=2}^{L-2} \sm\Bigg(\Big(w_K(0)w_Q(0)x_L\cdot (x_\tau)_{\tau=1}^{s-1}\ \sm\left((-\tilde{p}(s-1-\tau))_{\tau=1}^{s-1}\right)^\top\Big)_{s=2}^{L-2}\Bigg)^\top
     \end{pmatrix}
     \\&=\begin{pmatrix}
         g(0)\frac{1}{L-2}\sum_{\tau=1}^{L-2}\left(\sum_{s=\tau+1}^{L-1}\sm\Big((-\tilde{p}(s-1-t))_{t=1}^{s-1}\Big)\right)_{t=\tau} x_{\tau}
         \\h(0)\frac{1}{L-2}\sum_{s=2}^{L-2}x_s
     \end{pmatrix}.
\end{align*}}



Since we only focus on $\tilde{p}$ and the other parameters remain the initialization value, the loss function can be simplified as 
\begin{align*}
    \mathcal{L}(\btheta)&=\mathop{\bbE}\limits_{\bX\sim\bbN(0,1)^L}\Bigg[   \frac{{\alpha^\star}^2}{(1+\alpha^\star)^2}x_{L-2}^2 +\frac{g(0)^2}{(L-2)^2}\sum_{\tau=1}^{L-2}\left(\sum_{s=\tau+1}^{L-1}\sm\Big((-\tilde{p}(s-1-t))_{t=1}^{s-1}\Big)_{t=\tau}\right)^2x_\tau^2
    \\&\quad\quad\quad\quad\quad\quad+\frac{2g(0)}{L-2}\frac{\alpha^\star}{1+\alpha^\star}\sm\Big((-p(0)(L-1-s))_{s=2}^{L-1}\Big)_{s=L-1}x_{L-2}^2    \Bigg]+ C(w^\star,\alpha^\star,w(0),h(0))
\end{align*}

where the second term $C(w^\star,\alpha^\star,w(0),h(0))$ is a constant depends on $w^\star,\alpha^\star,w(0)$ and $h(0)$, produced by calculating the error of the second head, i.e., loss of induction head, while the first term is $4$-gram loss.



We first define several functions that will be useful for calculation in this stage and the second one:

{\em Function I.} This function is purely defined for the calculation of $\frac{\rd p}{\rd t}$. Denoted by 
    $q(\tilde{p}):= \sum_{\tau=1}^{L-2}\left(\sum_{s=\tau+1}^{L-1}\frac{e^{\tilde{p}(s-1-\tau)}}{\sum_{k=0}^{s-2}e^{-\tilde{p}k}}\right)^2$, we first prove $\frac{\rd q}{\rd \tilde{p}}\leq 0$.
\begin{align*}
    q(\tilde{p}) & := \sum_{\tau=1}^{L-2}\left(\sum_{s=\tau+1}^{L-1}\frac{e^{\tilde{p}(s-1-\tau)}}{\sum_{k=0}^{s-2}e^{-\tilde{p}k}}\right)^2
    \\& = \sum_{\tau=1}^{L-2}\left(\sum_{s=\tau+1}^{L-1}\frac{e^{-\tilde{p}(s-1-\tau)}}{1-e^{-\tilde{p}(s-1)}}(1-e^{-\tilde{p}})\right)^2
    \\& = (1-e^{-\tilde{p}})^2\sum_{\tau=1}^{L-2}\left(\sum_{s=\tau+1}^{L-1}\frac{e^{-\tilde{p}(s-1-\tau)}}{1-e^{-\tilde{p}(s-1)}}\right)^2
    \\& = (1-e^{-\tilde{p}})^2\sum_{\tau=1}^{L-2}e^{2\tilde{p}\tau}\left(\sum_{s=\tau+1}^{L-1}\frac{e^{-\tilde{p}(s-1)}}{1-e^{-\tilde{p}(s-1)}}\right)^2
    \\& = (1-e^{-\tilde{p}})^2\sum_{\tau=1}^{L-2}e^{2\tilde{p}\tau}\left(\sum_{s=\tau+1}^{L-1}\frac{1}{e^{\tilde{p}(s-1)}-1}\right)^2
\end{align*}
Then we take its derivative of $\tilde{p}$
\begin{align*}
    \frac{\rd q}{\rd \tilde{p}} & = 2(1-e^{-\tilde{p}})e^{-\tilde{p}}\sum_{\tau=1}^{L-2}e^{2\tilde{p}\tau}\left(\sum_{s=\tau+1}^{L-1}\frac{1}{e^{\tilde{p}(s-1)}-1}\right)^2
    \\& + (1-e^{-\tilde{p}})^2\sum_{\tau=1}^{L-2}2\tau e^{2\tilde{p}\tau}\left(\sum_{s=\tau+1}^{L-1}\frac{1}{e^{\tilde{p}(s-1)}-1}\right)^2
    \\& + (1-e^{-\tilde{p}})^2\sum_{\tau=1}^{L-2}2e^{2\tilde{p}\tau}\left(\sum_{s=\tau+1}^{L-1}\frac{1}{e^{\tilde{p}(s-1)}-1}\right)\left(\sum_{s=\tau+1}^{L-1}\frac{-(s-1)e^{\tilde{p}(s-1)}}{(e^{\tilde{p}(s-1)}-1)^2}\right)
    \\& = 2(1-e^{-\tilde{p}})\sum_{\tau=1}^{L-2}e^{2\tilde{p}\tau}\left(\sum_{s=\tau+1}^{L-1}\frac{1}{e^{\tilde{p}(s-1)}-1}\right)\left(\sum_{s=\tau+1}^{L-1}\frac{e^{-\tilde{p}}+\tau(1-e^{-\tilde{p}})}{e^{\tilde{p}(s-1)}-1}-\frac{(s-1)e^{\tilde{p}(s-1)}}{(e^{\tilde{p}(s-1)}-1)^2}\right)
\end{align*}


$\frac{\rd q}{\rd \tilde{p}}$'s last factor can be formed as  
\begin{align*}
    & \frac{\left(\tau-(\tau-1)e^{-\tilde{p}}\right)\left(e^{\tilde{p}(s-1)}-1\right)-(s-1)e^{\tilde{p}(s-1)}}{e^{\tilde{p}(s-1)}-1)^2}
    \\ =& \frac{(\tau+1-s)t^{s-1}-(\tau-1)t^{s-2}-\tau+\frac{\tau-1}{t}}{e^{\tilde{p}(s-1)}-1)^2} 
\end{align*}
where $t=e^{-\tilde{p}}\geq 1$. Since $s\geq \tau+1$, $\frac{\rd q}{\rd \tilde{p}}\leq 0$.

{\em Function II. } For simplicity, we define $M(p)$ and its derivative $m(p)$:
    $$M(p) := \sum_{s=2}^{L-1}\exp(-p(L-1-s)) = \sum_{s=0}^{L-3}\exp{-ps} = \frac{1-e^{-p(L-2)}}{1-e^{-p}},$$
    $$m(p) := \sum_{s=1}^{l-3}s\exp(-ps) = \frac{e^{-p}-(L-2)e^{-p(L-2)}+(L-3)e^{-p(L-1)}}{(1-e^{-p})^2}.$$

{\em Function III. } The third function is derivative of softmax. By straightfoward calculation, we obtain:
    $$\frac{\rd}{\rd p}\sm\Big((-p(L-1-t))_{t=2}^{L-1}\Big)_{t=L-1-s} = \frac{\rd}{\rd p}\frac{\exp(-ps)}{\sum_{\tau=0}^{L-3}\exp(-p\tau)} = \frac{-s\exp(-ps)M(p)+\exp(-ps)m(p)}{M(p)^2}.$$


Through the quantities and their properties above, we obtain the dynamic of $\tilde{p}$
\begin{align*}
    \frac{\rd \tilde{p}}{\rd t}&=-\frac{g(0)^2}{(L-2)^2}q'(\tilde{p})+\frac{2\alpha^\star g(0)}{(1+\alpha^\star)(L-2)}\frac{m(p)}{M(p)^2}
    \\&\geq \frac{2\alpha^\star g(0)}{(1+\alpha^\star)(L-2)}e^{-\tilde{p}},
\end{align*}
which implies:
\begin{align*}
    \lim_{t\to+\infty}\tilde{p}(t)=+\infty.
\end{align*}


\vspace{1.cm}

\subsection{Optimization Dynamics in Training Stage II}\label{appendix: dynamics: stage II}

In this training stage, the first layer is already capable of capturing the token ahead i.e. $y_s=x_{s-1}$. And we train the parameters $w_{V_1},w_{V_2},p,w_{KQ}$ in the second layer.
    

 We start from proving the parameter balance lemma:
 \begin{lemma}[Restate of Lemma \ref{lemma: parameter balance}]\label{lemma: parameter balance restate}
In Training Stage II, it holds that ${w_{Q}^{(2,2)}}^2(t)\equiv {w_{K}^{(2,2)}}^2(t)$.
\end{lemma}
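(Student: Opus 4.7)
The plan is to exploit the fact that $w_Q^{(2,2)}$ and $w_K^{(2,2)}$ enter the model in a multiplicatively homogeneous way, in the spirit of the classical balance property for homogeneous networks~\citep{du2018algorithmic}. Concretely, inspecting the second-layer dynamics in~\eqref{equ: output}, one sees that $w_Q^{(2,2)}$ and $w_K^{(2,2)}$ appear in the output $\TF_{-1}(\bX;\btheta)$ only through the scalar product $w_{KQ}^{(2,2)} := w_Q^{(2,2)} w_K^{(2,2)}$ inside the dot-product score $x_L w_Q^{(2,2)} w_K^{(2,2)} x_{\nu-1}$. Hence the loss $\cL(\btheta)$ factorizes as a function of $w_{KQ}^{(2,2)}$ (together with the other trainable parameters), and the chain rule gives
\begin{equation*}
\frac{\partial \cL}{\partial w_Q^{(2,2)}} \;=\; \frac{\partial \cL}{\partial w_{KQ}^{(2,2)}}\, w_K^{(2,2)}, \qquad \frac{\partial \cL}{\partial w_K^{(2,2)}} \;=\; \frac{\partial \cL}{\partial w_{KQ}^{(2,2)}}\, w_Q^{(2,2)}.
\end{equation*}

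Next I would compute the time derivative of the squared difference $\Delta(t) := \bigl(w_Q^{(2,2)}(t)\bigr)^2 - \bigl(w_K^{(2,2)}(t)\bigr)^2$ along the gradient flow~\eqref{equ: GF}. Substituting the two identities above yields
\begin{equation*}
\frac{\rd \Delta}{\rd t} \;=\; -2 w_Q^{(2,2)}\, \frac{\partial \cL}{\partial w_Q^{(2,2)}} + 2 w_K^{(2,2)}\, \frac{\partial \cL}{\partial w_K^{(2,2)}} \;=\; -2\,\frac{\partial \cL}{\partial w_{KQ}^{(2,2)}}\!\left(w_Q^{(2,2)} w_K^{(2,2)} - w_K^{(2,2)} w_Q^{(2,2)}\right) \;=\; 0.
\end{equation*}
Therefore $\Delta(t)$ is conserved throughout Training Stage II.

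Finally, I would evaluate $\Delta$ at the start of Stage II. By the initialization in~\eqref{equ: GF}, both $w_Q^{(2,2)}(0) = w_K^{(2,2)}(0) = \sigma_{\rm init}$, and these two parameters are not updated during Stage I (which only trains $p^{(1,1)}$ by assumption). Consequently $\Delta(0) = 0$, and the conservation law gives $\bigl(w_Q^{(2,2)}(t)\bigr)^2 \equiv \bigl(w_K^{(2,2)}(t)\bigr)^2$ throughout Stage II, as claimed. There is no real obstacle here: the only subtle point is to verify that the chain-rule identities apply globally (i.e., that $\cL$ is genuinely a smooth function of $w_{KQ}^{(2,2)}$), which is immediate from the explicit formula~\eqref{equ: output} since the softmax and the remaining parameters enter smoothly and $w_{KQ}^{(2,2)}$ appears as a scalar multiplier inside the exponent.
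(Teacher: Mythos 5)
Your proof is correct and follows essentially the same argument as the paper's: both observe that $w_Q^{(2,2)}$ and $w_K^{(2,2)}$ enter the loss only through their product $w_{KQ}^{(2,2)}$, apply the chain rule to compute $\frac{\rd}{\rd t}\big({w_Q^{(2,2)}}^2 - {w_K^{(2,2)}}^2\big) = 0$, and then use the equal initialization (together with the fact that these parameters are frozen during Stage I) to conclude the conserved quantity is identically zero. Your write-up is slightly more explicit about why the chain rule applies and about the Stage I freezing, but the mathematical content is identical.
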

\begin{proof}
Notice that
\begin{align*}
    &\frac{\rd }{2\rd t}\left({w_{Q}^{(2,2)}}^2(t)-{w_{K}^{(2,2)}}^2(t)\right)=-w_{Q}^{(2,2)}\frac{\partial \cL}{\partial w_{Q}^{(2,2)}}+w_{K}^{(2,2)}\frac{\partial \cL}{\partial w_{K}^{(2,2)}}
    \\=&- w_{Q}^{(2,2)}w_{K}^{(2,2)}\frac{\partial\cL}{\partial\left(w_{Q}^{(2,2)}w_{K}^{(2,2)}\right)}+ w_{K}^{(2,2)}w_{Q}^{(2,2)}\frac{\partial\cL}{\partial\left(w_{Q}^{(2,2)}w_{K}^{(2,2)}\right)}\equiv0.
\end{align*}

Thus, we have:
\begin{align*}
    {w_{Q}^{(2,2)}}^2(t)-{w_{K}^{(2,2)}}^2(t)
    \equiv
    {w_{Q}^{(2,2)}}^2(0)-{w_{K}^{(2,2)}}^2(0)=0.
\end{align*}
\end{proof}

For simplicity, we still use the following notations:
\begin{align*}
    p:=p_{1},\quad g:=w_{V_1},\quad w:=w_{KQ},\quad h:=w_{V_2}.
\end{align*}
and notations for initialization $p(0),g(0),w(0),h(0)$. Then the target function and output of Transformer can be formed as follows
\begin{align*}
    f^\star(\bX) &= \begin{pmatrix}
        \frac{{\alpha^\star}}{1+{\alpha^\star}}x_{L-2} 
        \\ \frac{1}{1+{\alpha^\star}}(x_s)_{s=2}^{L-1}\sm\Big( \left({w^\star}^2 x_L x_{s-1}\right)_{s=2}^{L-1}\Big)^\top
    \end{pmatrix},
    \\
    \TF(\bX;\theta) &= \begin{pmatrix}
        g\cdot(x_{s-1})_{s=2}^{L-1}\sm\Big((-p(L-1-s))_{s=2}^{L-1}\Big)^\top
        \\ h\cdot(x_s)_{s=2}^{L-1}\sm\Big(\left(w^2 x_Lx_{s-1}\right)_{s=2}^{L-1}\Big)^\top
    \end{pmatrix}.
\end{align*}

And the loss function is expressed as:
{\small
\begin{align*}
    &\mathcal{L}(\btheta) = \frac{1}{2} \mathop{\bbE}\limits_{\bX\sim\bbN(0,1)^L} \left[\Vert f^\star(\bx)-\TF(x;\theta) \Vert^2\right]
    \\&= \frac{1}{2} {\bbE}_{\bX}\Bigg[\left(\frac{{\alpha^\star}}{1+{\alpha^\star}}x_{L-2}-g\cdot(x_{s-1})_{s=2}^{L-1}\sm\Big((-p(L-1-s))_{s=2}^{L-1}\Big)^\top\right)^2\Bigg]
    \\&\quad+\frac{1}{2} {\bbE}_{\bX}\Bigg[\left(\frac{1}{1+{\alpha^\star}} (x_s)_{s=2}^{L-1}\sm\Big( \left({w^\star}^2 x_L x_{s-1}\right)_{s=2}^{L-1}\Big)^\top-h\cdot(x_s)_{s=2}^{L-1}\sm\Big(\left(w^2 x_Lx_{s-1}\right)_{s=2}^{L-1}\Big)^\top\right)^2\Bigg].
\end{align*}}

The total loss can naturally be divided into two parts:
\begin{align*}
    \cL(\theta)=\cL_{\FG}(\theta)+\cL_{\IH}(\theta),
\end{align*}

where
\begin{align*}
    \cL_{\FG}(\theta)&=\cL_{\FG}(p,g)
    \\&=\frac{1}{2} {\bbE}_{\bX}\Bigg[\left(\frac{{\alpha^\star}}{1+{\alpha^\star}}x_{L-2}-g\cdot(x_{s-1})_{s=2}^{L-1}\sm\Big((-p(L-1-s))_{s=2}^{L-1}\Big)\right)^\top)^2\Bigg],
\end{align*}

{\small
\begin{align*}
    \cL_{\IH}(\theta)&=\cL_{\IH}(w,h)
    \\&=\frac{1}{2} \bbE_{\bX}\Bigg[\left(\frac{1}{1+{\alpha^\star}} (x_s)_{s=2}^{L-1}\sm\Big( \left({w^\star}^2 x_L x_{s-1}\right)_{s=2}^{L-1}\Big)^\top-h\cdot(x_s)_{s=2}^{L-2}\sm\Big(\left(w^2 x_Lx_{s-1}\right)_{s=2}^{L-1}\Big)^\top\right)^2\Bigg].
\end{align*}}

Notably, the dynamics of $(p,g)$ and $(w,h)$ are {\bf decoupled}, which allows us to analyze them separately.

Additionally, we denote the optimal values of the parameters as:
\begin{align*}
    p^{\star}=+\infty,\quad 
    g^\star= \frac{{\alpha^\star}}{1+\alpha^\star},\quad
    w^\star:=w^\star,\quad
    h^\star=\frac{1}{1+\alpha^\star}.
\end{align*}

For the initialization scale and the sequence length, we consider the case:
\begin{align*}
    \sigma_{\init}=\cO(1)\ll1,\quad L=\Omega(1/\sigma_{\init})\gg1.
\end{align*}


\subsubsection{Dynamics of the parameters for \texorpdfstring{$4$}{}-gram}
\label{subsection: dynamics 4-gram}


First, we define two useful auxiliary functions:
\begin{align*}
    M(p) &:=\frac{1-e^{-p(L-2)}}{1-e^{-p}},\\ 
    m(p) &:=\frac{e^{-p}-(L-2)e^{-p(L-2)}+(L-3)e^{-p(L-1)}}{(1-e^{-p})^2}.
\end{align*}

Then, a straightforward calculation, combined with Lemma~\ref{lemma: calculate Gaussian} and Lemma~\ref{lemma: calculate softmax}, yields the explicit formulation of $\mathcal{L}_{\FG}(\btheta)$ and the GF dynamics of $p$ and $g$: 
\begin{equation}\label{proof: equ: FG loss}
    \mathcal{L}_{\FG}(\btheta) = \frac{1}{2}\left(\frac{\alpha^\star}{1+\alpha^\star}\right)^2 + \frac{1}{2}g^2\frac{M(2p)}{M(p)^2} - \frac{\alpha^\star g}{1+\alpha^\star}\frac{1}{M(p)}.
\end{equation}
\begin{align*}
    \frac{\rd p}{\rd t} &= -\frac{\partial\cL}{\partial p}= -\frac{\partial\cL_{\FG}}{\partial p}=\frac{m(p)}{M(p)^2}\left[ g^2\frac{m(2p)}{m(p)}-g^2\frac{M(2p)}{M(p)}+\frac{\alpha^\star g}{1+\alpha^\star} \right],
    \\
    \frac{\rd g}{\rd t} &= -\frac{\partial\cL}{\partial g}= -\frac{\partial\cL_{\FG}}{\partial g}=\frac{\alpha^\star}{1+\alpha^\star}\frac{1}{M(p)}-g\frac{M(2p)}{M(p)^2},
\end{align*}

Equivalently, the dynamics can be written as:
\begin{align*}
    \frac{\rd p}{\rd t} &=\frac{m(p) g}{M(p)^2}\left( g^\star-g\frac{M(2p)}{M(p)}+g\frac{m(2p)}{m(p)}\right),
    \\
    \frac{\rd g}{\rd t}&=\frac{1}{M(p)}\left(g^\star-g\frac{M(2p)}{M(p)}\right).
\end{align*}


Notice that at the initialization, it holds that $\frac{\rd p}{\rd t}|_{t=0}>0$ and $\frac{\rd g}{\rd t}|_{t=0}>0$.
Then we first define a hitting time:
\begin{align*}
    T_{1}^g:=\inf\{t>0:g(t)>g^\star\}.
\end{align*}

Noticing $g(0)=\sigma_{\rm init}\ll g^\star$ and the continuity, $T_{1}^g>0$.

Our subsequent proof can be divided into {\bf two phases}: a monotonic phase $t<T_{1}^g$, and a stable convergence phase $t>T_{1}^g$.

\underline{\bf Part I. Analysis for the monotonic phase $t<T_{1}^g$.}

\begin{align*}
    \frac{\rd p}{\rd t} &=\frac{m(p) g}{M(p)^2}\left( g^\star-g\frac{M(2p)}{M(p)}+g\frac{m(2p)}{m(p)}\right)=\frac{m(p) g}{M(p)^2}\left( g^\star-g\frac{1+e^{-p(L-2)}}{1+e^{-p}}+g\frac{m(2p)}{m(p)}\right),
    \\
    \frac{\rd g}{\rd t}&=\frac{1}{M(p)}\left(g^\star-g\frac{M(2p)}{M(p)}\right)=\frac{1}{M(p)}\left(g^\star-g\frac{1+e^{-p(L-2)}}{1+e^{-p}}\right).
\end{align*}
It is easy to see that $p,g$ are monotonically increasing for $t<T_1^g$. We can choose sufficiently large 
\begin{align*}
    L=\Omega(1/p(0))=\Omega(1/\sigma_{\init})
\end{align*} 

such that:
\begin{align*}
    (L-3)e^{-(L-3)p(t)},\ e^{-(L-5)p(t)}< 0.0001,\quad \forall p>\sigma_{\init}.
\end{align*}

Then we can calculate the following three terms in the dynamics:
\begin{gather*}
    \frac{m(p)}{M^2(p)}=\frac{e^{-p}\left(1-(L-2)e^{-p(L-3)}+(L-3)e^{-p(L-2)}\right)}{1-e^{-p(L-2)}}=\frac{e^{-p}(1+\xi_1(p))}{1+\xi_2(p)},
    \\
    \frac{1}{M(p)}=\frac{1-e^{-p(L-2)}}{1-e^{-p}}=\frac{1+\xi_3(p)}{1-e^{-p}},
\end{gather*}
\begin{align*}
    \frac{m(2p)}{m(p)}&=\frac{e^{-p}\left(1-(L-2)e^{-2p(L-3)}+(L-3)e^{-2p(L-2)}\right)}{(1+e^{-p})^2\left(1-(L-2)e^{-p(L-3)}+(L-3)e^{-p(L-2)}\right)}
    \\&=\frac{e^{-p}(1+\xi_4(p))}{(1+e^{-p})^2(1+\xi_5(p))},
\end{align*}

where the error functions satisfy:
\begin{align*}
    |\xi_1(p)|,\cdots,|\xi_5(p)|\leq 0.0001,\ \forall t > T_1^g.
\end{align*}

Then the dynamics satisfy:
\begin{align*}
    \frac{\rd p}{\rd t} &=\frac{e^{-p} g (1+\xi_1(p))}{1+\xi_2(p)}\left(g^\star-g\frac{1+e^{-p(L-2)}}{1+e^{-p}}+\frac{ge^{-p}(1+\xi_3(t))}{(1+e^{-p})^2(1+\xi_5(t))}\right),
    \\
    \frac{\rd g}{\rd t}&=\frac{1+\xi_3(p)}{1-e^{-p}}\left(g^\star-g\frac{1+e^{-p(L-2)}}{1+e^{-p}}\right).
\end{align*}

 When $g<\frac{1}{2}\frac{\alpha^\star}{1+\alpha^\star}$, we have
 $$\frac{\rd p}{\rd g}\leq 2\left(e^{-p}-e^{-2p}\right)g.$$
By define $T_{1/2}^g:=\inf\{t>0:g(t)>g^\star/2\}$ and $\tilde{p}:=p(T_{1/2}^g)$, we have
 $$\ln(e^{\tilde{p}}-1)\leq \frac{1}{4}{g^\star}^2-g(0)^2+e^{p(0)}-1+\ln(e^{p(0)}-1)$$
then $\tilde{p}\leq\mathcal{O}(\sqrt{p(0)})$, from which we infer that p barely increases when $t\leq T_{1/2}^g$.

For $0\leq t\leq T_{1/2}^g$,
$$\frac{\rd g}{\rd t}\geq\frac{1}{1-e^{-p(0)}}\left[g^\star-\frac{g}{1+e^{-p(0)}}\right]$$
$$g\geq g^\star(1+e^{-p(0))})+\left[g(0)-g^\star(1+e^{-p(0)})\right]\exp\left(\frac{-t}{1-e^{-2p(0)}}\right)$$
so
$$T_{1/2}^g\leq (1-e^{-2p(0)})\ln\left(\frac{g^\star(1+e^{-p(0)})-g(0)}{g^\star\left((1+e^{-p(0)})-\frac{1}{2}\right)}\right)=\mathcal{O}\left(2p(0)\right)$$

For $T_{1/2}^g\leq t\leq T_1^g$, let $p_1:=p(T_1^g)$,
\begin{align*}
    \frac{\rd p}{\rd g}&\leq 1.01 e^{-p}(1-e^{-p})g\left(1+\frac{\frac{g}{1+e^{-p}}-\frac{g}{(1+e^{-p})^2}}{\frac{\alpha^\star}{1+\alpha^\star}-\frac{g}{1+e^{-p}}}\right)
    \\&\leq \frac{1.01}{4}\frac{\alpha^\star}{1+\alpha^\star} (1+e^{-p_1})
\end{align*}
then 
$$p_1-p(0)\leq \frac{1.01}{4}\left(\frac{\alpha^\star}{1+\alpha^\star}\right)^2 (1+e^{p_1}),$$
$$p_1\leq\frac{1}{2\left(\frac{\alpha^\star}{1+\alpha^\star}\right)^2-1},$$
and we take $\alpha^\star>1$.

Since for $T_{1/2}^g\leq t\leq T_1^g$,
$$\frac{\rd p}{\rd t}\leq 2e^{-p}g^\star\left(g^\star-\frac{1}{8}g^\star\right),$$
$$\frac{\rd p}{\rd t}\geq \frac{1}{2}e^{-p}g^\star\left(g^\star-\frac{1}{1+e^{-p_1}}g^\star\right),$$
we have
$$T_1^g-t_1\leq\mathcal{O}\left((e^{2p_1}-1)\left(\frac{1+\alpha^\star}{\alpha^\star}\right)^2\right).$$

Hence, putting the two part of time together we have
\begin{equation}\label{proof: equ: time t}
\begin{aligned}
T_1^g&\leq\mathcal{O}\left(p(0)+(e^{2p_1}-1)\left(\frac{1+\alpha^\star}{\alpha^\star}\right)^2\right)
\\&=\cO\left(\sigma_{\init}+(e^{2p_1}-1)\left(\frac{1+\alpha^\star}{\alpha^\star}\right)^2\right)
=\cO(1).
\end{aligned}
\end{equation}



\underline{\bf Part II. Analysis for the convergence phase $t>T_{1}^g$.}

We will prove that, in this phase, $(p,g)$ keep in a stable region, and the convergence occurs.

Recall the dynamics:
\begin{align*}
    \frac{\rd p}{\rd t} &=\frac{m(p) g}{M(p)^2}\left( g^\star-g\frac{1+e^{-p(L-2)}}{1+e^{-p}}+g\frac{m(2p)}{m(p)}\right),
    \\
    \frac{\rd g}{\rd t}&=\frac{1}{M(p)}\left(g^\star-g\frac{1+e^{-p(L-2)}}{1+e^{-p}}\right).
\end{align*}

Using contradiction, it is easy to verify that for all $t>T_1^g$,
\begin{align*}
    g^\star<g(t)<2g^\star,\quad\frac{\rd p(t)}{\rd t}>0,
\end{align*}

which means $g$ has entered a stable region (although it is possible that $g$ is non-monotonic), while $p$ keeps increase. In fact, if $T^g_{2g^\star}:=\inf\{t>0:g(t)=2g^\star\}$, then $\frac{\rd g}{\rd t}|_{T^g_{2g^\star}}<0$, which leads to a contradiction. If $T^{{\rd p}/{\rd t}}_0:=\inf\{t>0:\frac{\rd p(t)}{\rd t}=0\}$, then 
$$\left.\left(g^\star-g\frac{1+e^{-p(L-2)}}{1+e^{-p}}+g\frac{m(2p)}{m(p)}\right)\right|_{T^{{\rd p}/{\rd t}}_0}=0,\quad\frac{\rd g}{\rd t}<0,$$
$$\frac{\rd}{\rd t}\left(g^\star-g\frac{1+e^{-p(L-2)}}{1+e^{-p}}+g\frac{m(2p)}{m(p)}\right)=-g'\frac{1+e^{-p(L-2)}}{1+e^{-p}}+g'\frac{m(2p)}{m(p)}>0,$$
where the last inequality leads to a contradiction.

Thus, $p(t)>p(T_{1}^g)>p(0)=\sigma_{\init}$ holds in this phase. 
Therefore, the dynamics 
\begin{align*}
    \frac{\rd p}{\rd t} &=\frac{e^{-p} g (1+\xi_1(p))}{1+\xi_2(p)}\left(g^\star-g\frac{1+e^{-p(L-2)}}{1+e^{-p}}+\frac{ge^{-p}(1+\xi_3(t))}{(1+e^{-p})^2(1+\xi_5(t))}\right),
    \\
    \frac{\rd g}{\rd t}&=\frac{1+\xi_3(p)}{1-e^{-p}}\left(g^\star-g\frac{1+e^{-p(L-2)}}{1+e^{-p}}\right),
\end{align*}

also satisfy \begin{align*}
    |\xi_1(p)|,\cdots,|\xi_5(p)|\leq 0.0001,\ \forall t > T_1^g.
\end{align*}

For simplicity, we consider the transform:
\begin{align*}
    u:=e^{-p}.
\end{align*}

Then the dynamics of $u$ and $g$ can be written as:
\begin{align*}
    \frac{\rd u}{\rd t}&=-\frac{ (1+\xi_1(p)) u^2 g}{1+\xi_2(p)}\left(g^\star-g\frac{1+u^{L-2}}{1+u}+\frac{gu(1+\xi_4(p))}{(1+u)^2(1+\xi_5(p))}\right),
    \\
    \frac{\rd g}{\rd t}&=\frac{1+\xi_3(p)}{1-u}\left(g^\star-g\frac{1+u^{L-2}}{1+u}\right).
\end{align*}

Notice that this dynamics are controlled by high-order terms. Consequently, we construct a variable to reflect the dynamics of high-order term:
\begin{align*}
    v:=ug^\star+(g^\star-g).
\end{align*}

Then the dynamics of $u$ and $v$ satisfy:
\begin{align*}
    \frac{\rd u}{\rd t}&=-\frac{ (1+\xi_1(p))u^2 g}{1+\xi_2(p)}\left(\frac{v-u^{L-2}g}{1+u}+\frac{gu(1+\xi_4(p))}{(1+u)^2(1+\xi_5(p))}\right),
    \\\frac{\rd v}{\rd t}&=-\frac{ (1+\xi_1(p))u^2 gg^\star}{1+\xi_2(p)}\left(\frac{v-u^{L-2}g}{1+u}+\frac{gu(1+\xi_4(p))}{(1+u)^2(1+\xi_5(p))}\right)-\frac{1+\xi_3(p)}{1-u^2}\left(v-u^{L-2}g\right).
\end{align*}

Now we consider the Lyapunov function about $u,v$:
\begin{align*}
    G(u,v):=\frac{1}{2}\left(u^2+v^2\right).
\end{align*}

Then it is straightforward:
\begin{align*}
    &\frac{\rd G}{2\rd t}=u\frac{\rd u}{\rd t}+v\frac{\rd v}{\rd t}
    \\=&-\frac{u^3g(1+\xi_1(p))}{1+\xi_2(p)}\left(\frac{v-u^{L-2}g}{1+u}+\frac{gu(1+\xi_4(p))}{(1+u)^2(1+\xi_5(p))}\right)
    \\&-\frac{ (1+\xi_1(p))u^2 v gg^\star}{1+\xi_2(p)}\left(\frac{v-u^{L-2}g}{1+u}+\frac{gu(1+\xi_4(p))}{(1+u)^2(1+\xi_5(p))}\right)
    \\&-\frac{1+\xi_3(p)}{1-u^2}\left(v-u^{L-2}g\right)v.
\end{align*}

By $|\xi_1|,\cdots,|\xi_5|\leq 0.0001$, we have the following estimate for the Lyapunov dynamics:
\begin{align*}
    \frac{\rd G}{2\rd t}\leq& \frac{1.001 g}{1+u}|u^3 v| +\frac{1.0001 g ^2}{1+u} u^{L+1} - \frac{0.999g^2}{(1+u^2)}u^4
    \\&-\frac{0.999 g g^\star}{1+u}u^2 v^2+\frac{1.001 g^2 g^\star}{1+u}|u^{L}v|+\frac{1.001 g^2g^\star}{(1+u^2)}|u^3 v|
    \\&-\frac{0.999}{1-u^2}v^2+\frac{1.001 g }{1-u^2}|u^{L-2}v|
\end{align*}

By $u^{L-5}=e^{-p(L-5)}<0.0001$ and $0<u<e^{-p(T_1^g)}$, we further have:
\begin{align*}
    &\frac{\rd G}{2\rd t}\leq \frac{1.002 g}{1+u}|u^3 v|-\frac{0.99g^2}{(1+u)^2}u^4
    -\frac{0.999 g g^\star}{1+u}u^2 v^2+\frac{1.005 g^2g^\star}{(1+u)^2}|u^3 v|-\frac{0.999}{1-u^2}v^2
    \\\leq&-\frac{0.99 g^2}{(1+u)^2}u^4-\frac{0.99 g{g^\star}}{1+u}u^2v^2-\frac{0.99}{1-u^2}v^2+1.01\left(\frac{g}{1+u}+\frac{g^2{g^\star}}{(1+u)^2}\right)|u^3v|.
\end{align*}
By using the following inequalities:
$$ \frac{g^2{g^\star}}{(1+u)^2}|u^3v|\leq\frac{1}{2}\left(\frac{1.98}{1.01}\frac{g{g^\star}}{1+u}u^2v^2+\frac{1.01}{1.98}\frac{g^3{g^\star}}{(1+u)^3}u^4\right) $$
$$ \frac{g}{1+u}|u^3v|\leq\frac{1}{2}\left(\frac{0.99}{1.01}(1+u)v^2+\frac{1.01}{0.99}\frac{g^2}{(1+u)^3}u^6\right) $$
$$ -\frac{1}{1-u^2}+\frac{1}{2}(1+u)<-\frac{2}{5} $$

we have
\begin{align*}
    \frac{\rd G}{\rd t} \leq -0.99\frac{g^2}{(1+u)^2}u^4+\frac{1.01}{3.96}\frac{g^3{g^\star}}{(1+u)^3}u^4+\frac{1.01}{1.98}\frac{g^2}{(1+u)^3}u^6-\frac{1.98}{5}v^2.
\end{align*}

Since ${g^\star}<g<2{g^\star}$, $u>0$ for $t> T_1^g$, and $\frac{u^2}{1+u}\leq \frac{1}{2}$ for $0\leq u\leq 1$, we have:
\begin{align*}
    &\frac{1}{4}\frac{g^3{g^\star}}{(1+u)^3}+\frac{1}{2}\frac{g^2u^2}{(1+u)^3}
    \leq \frac{g^2}{(1+u)^2}\left(\frac{{{g^\star}}^2}{2(1+u)}+\frac{u^2}{2(1+u)}\right)
    \\\leq& \frac{g^2}{(1+u)^2}\left(\frac{1}{2}+\frac{1}{4}\right)
    =\frac{3}{4}\frac{g^2}{(1+u)^2},
\end{align*}

then
\begin{align*}
    &\frac{\rd G(u,v)}{\rd t}\leq-0.22\frac{g^2}{(1+u)^2}u^4-\frac{2}{5}v^2
    \\\leq& -\frac{0.99}{16}{{g^\star}}^2u^4-\frac{1.98}{5}v^2
    \leq-\frac{{{g^\star}}^2}{65}G(u,v)^2,
\end{align*}

which implies:
\begin{align*}
    G(u(t),v(t))\leq\frac{1}{G(u(t_1),v(t_1))+\frac{{g^\star}^2}{64}(t-t_1)},\quad\forall t> T_1^g.
\end{align*}

Hence, 
\begin{align*}
    u^2(t),\quad v^2(t)=\cO\left(\frac{1}{{g^\star}^2 t}\right)=\cO\left(\frac{1}{t}\right),\quad\forall t> T_1^g=\cO(1)
\end{align*}

which implies:
\begin{equation}\label{proof: equ: convergence, p g}
\begin{aligned}
 e^{-p(t)} &= u(t)=\mathcal{O}\left(\frac{1}{\sqrt{t}}\right),\quad\forall t> T_1^g=\cO(1);
 \\
 g(t)-{g^\star} &= g^\star u(t)-v(t)\leq\mathcal{O}\left(\frac{g^\star}{\sqrt{t}}\right)+\cO\left(\frac{1}{\sqrt{t}}\right)=\cO\left(\frac{1}{\sqrt{t}}\right),\quad\forall t> T_1^g=\cO(1).
\end{aligned}
\end{equation}

{\bf Notably}, these proofs capture the {\bf entire} training dynamics of $p,g$, from $t=0$ to $t=T_1^g$, and finally to $t\to+\infty$, providing a fine-gained analysis for each phase.

\subsubsection{Dynamics of the parameters for induction head}
\label{subsection: dynamics induction head}

Recall the partial loss about the induction head:
{\small
\begin{align*}
    \cL_{\IH}(\theta)={\frac{1}{2} \bbE_{\bX}\Bigg[\left(\frac{1}{1+{\alpha^\star}} (x_s)_{s=2}^{L-1}\sm\Big( \left({w^\star}^2 x_L x_{s-1}\right)_{s=2}^{L-1}\Big)^\top-h\cdot(x_s)_{s=2}^{L-2}\sm\Big(\left(w^2 x_Lx_{s-1}\right)_{s=2}^{L-2}\Big)^\top\right)^2\Bigg]}.
\end{align*}}

{\bf Technical simplification.}
Unlike $\cL_{\FG}(\theta)$, the denominators of the softmax terms $\sm\Big( \left({w^\star}^2 x_L x_{s-1}\right)_{s=2}^{L-1}\Big)$ and $\sm\Big(\left(w^2 x_Lx_{s-1}\right)_{s=2}^{L-2}\Big)$ in $\cL_{\IH}(\theta)$ depend on the input tokens $X$, making it hard to derive a closed-form expression for $\cL_{\IH}(\theta)$. 
In~\cite{bai2023transformers}, the authors consider a simplified transformer model, which replaces $\sm(z_1,\cdots,z_L)$ with $\frac{1}{L}\exp(z_1,\cdots,z_L)$. 
This approximation is nearly tight when $z_1,\cdots,z_L\approx 0$.
Notice that 
1) ${w}^2 x_L x_{s-1} \approx 0$ holds near the small initialization, i.e., for $w\approx\sigma_{\rm init} \ll 1$. In fact, our analysis shows that $w \approx \sigma_{\rm init}$ is maintained over a long period.
2) ${w}^\star = \mathcal{O}(1)$, which implies that ${w}^2 x_L x_{s-1} \approx 0$ for most input sequence.
Thus, we adopt the simplification used in~\cite{bai2023transformers}, resulting in the following approximation of the loss function:
\begin{align*}
    \cL_{\IH}(\theta):=\frac{1}{2}{\bbE}_{\bX}\Bigg[\left(\frac{1}{1+{\alpha^\star}}\frac{1}{L-2}\sum_{s=2}^{L-1} \exp ({w^\star}^2 x_L x_{s-1})x_s-h\frac{1}{L-2}\sum_{s=2}^{L-2}\exp(w^2 x_Lx_{s-1})x_s\right)^2\Bigg].
\end{align*}

Then by a straightforward calculation with Lemma~\ref{lemma: calculate Gaussian}, we can derive its explicit formulation: 
\begin{equation}\label{proof: equ: IH loss}
\begin{aligned}
    \cL_{\IH}(\theta) &=\frac{(1-4{w^{\star}}^4)^{-\frac{1}{2}}}{2(1+\alpha^\star)^2(L-2)} + \frac{1}{2}\frac{h^2}{L-2}(1-4w^4)^{-\frac{1}{2}}-\frac{h(1-(w^2+{w^{\star}}^2)^2)^{-\frac{1}{2}}}{(1+\alpha^\star)(L-2)}.
\end{aligned}
\end{equation}

Furthermore, we can calculate GF dynamics as follows:
\begin{align*}
    \frac{\rd w}{\rd t} &= \frac{h}{(1+\alpha^\star)(L-2)}(1-(w^2+{w^\star}^2)^2)^{-\frac{3}{2}}\cdot(w^2+{w^\star}^2)\cdot 2w
    -\frac{h^2}{L-2}(1-4w^4)^{-\frac{3}{2}}\cdot 4w^3,
    \\
    \frac{\rd h}{\rd t} &= \frac{1}{(1+\alpha^\star)(L-2)}(1-(w^2+{w^\star}^2)^2)^{-\frac{1}{2}}-\frac{h}{L-2}(1-4w^4)^{-\frac{1}{2}}.
\end{align*}

For simplicity, we denote:
\begin{align*}
    w^\star:=w^\star,\quad h^\star:=\frac{1}{1+\alpha^\star}.
\end{align*}

\underline{\bf Part I. The trend and monotonicity of $w,h$.}

For simplicity, we denote the tuning time point of $h$:
\begin{align*}
    T_2^h:=\inf\left\{t>0:\frac{\rd h(t)}{\rd t}=0\right\}.
\end{align*}

In this step, we will prove the following three claims regarding the trend and monotonicity of $w,h$, which are essential for our subsequent analysis:
\begin{itemize}[leftmargin=2em]
    \item {\bf (P1.1)}  $h$ initially increases beyond $h^\star$, and then remains above this value.
    \item {\bf (P1.2)} $w$ keeps increasing but always stays below $w^\star$.
    \item {\bf (P1.3)} $h$ increases before $T_2^h$, but decreases after $T_2^h$.
\end{itemize}

\underline{\em {\bf (P1.1)} $h$ initially increases beyond $h^\star$, and then remains above this value.}


We will prove that initially, $h$ increases beyond $h^\star$, and keeps growing beyond $h^\star$. Define 
\begin{align*}
    T_1^h:=\inf\{t>0: h(t)>h^\star\},
\end{align*}

we will prove that $h$ remains above $h^\star$ thereafter.

For simplicity, we denote 
\begin{align*}
    \psi(x)=(1-x^2)^{-\frac{1}{2}},\quad \phi(x)=(1-x^2)^{-\frac{3}{2}}\cdot x,
\end{align*} 

then the dynamics holds:
\begin{align*}
    \frac{\rd h}{\rd t}
    &= \frac{h}{L-2}\psi(w^2+{w^\star}^2)\left[\frac{h^\star}{h}-\frac{\psi(2w^2)}{\psi(w^2+{w^\star}^2)}\right],
    \\
    \frac{\rd w}{\rd t}
    &= \frac{2h^2w}{L-2}\cdot\phi(w^2+{w^\star}^2)\cdot\left[\frac{h^\star}{h}-\frac{\phi(2w^2)}{\phi(w^2+{w^\star}^2)}\right].
\end{align*}

Notice that
$\frac{\phi(2w^2)}{\phi(w^2+{w^\star}^2)}<\frac{\psi(2w^2)}{\psi(w^2+{w^\star}^2)},\ w<w^\star$, while $\frac{\phi(2w^2)}{\phi(w^2+{w^\star}^2)}>\frac{\psi(2w^2)}{\psi(w^2+{w^\star}^2)},\ w>w^\star$.

We denote the first hitting time of $h$ decreasing to $h^\star$ as $T^h_{h^\star}$:
\begin{align*}
    T^h_{h^\star}:=\inf\left\{t>T_2^h:h(t)<h^\star\right\}.
\end{align*}

If $w(T^h_{h^\star})\geq w^\star$, then at the first hitting time of $w$ increasing to $w^\star$, $\frac{\rd w}{\rd t}<0$, which leads to a contradiction. If $w(T^h_{h^\star})< w^\star$, then $\frac{\rd h}{\rd t}|_{T^h_{h^\star}}>0$, which also leads to a contradiction. 
Hence, $T^h_{h^\star}=+\infty$,
which means that $h$ always remains above $h^\star$ for $t>T_2^h$.

\underline{\em {\bf (P1.2)} $w$ keeps increasing but always below $w^\star$.}

We first prove that $w$ always remains below $w^\star$. We denote the first hitting time of $w$ increasing to $w^\star$ as ${t'}$, then it is not difficult to see $\frac{\rd w}{\rd t}|_{t'}<0$, which leads to a contradiction.

Next we prove that $w$ keeps increasing throughout. We define the following functions
$$ H := \frac{1}{1+\alpha^\star}\left(1-(w^2+{w^\star}^2)^2\right)^{-\frac{3}{2}}(w^2+{w^\star}^2)-h(1-4w^4)^{-\frac{3}{2}}\cdot 2w^2 $$
$$ Q := \frac{1}{1+\alpha^\star}\left(1-(w^2+{w^\star}^2)^2\right)^{-\frac{1}{2}}-h\left(1-4w^4\right)^{-\frac{1}{2}} $$
If at some $\bar{t}$, $\frac{\rd w}{\rd t}$ reaches its zero point at the first time, then
$$\frac{\rd H}{\rd t}\bigg|_{\bar{t}}=-h'(\bar{t})(1-4{w^\star}^4)^{-\frac{3}{2}}\cdot 2w({\bar{t}})>0,$$
which leads to a contradiction. Hence $\bar{t}$ does not exist and $w$ keeps increasing.

\underline{\em {\bf (P1.3)} After the tuning point $t>T_2^h$, $h$ will be monotonically decreasing.}

The first sign-changing zero point of $\frac{\rd h}{\rd t}$ is $T_2^h$, then $Q(T_2^h)=0$. $H(T_2^h)>0$,
\begin{align*}
    \frac{\rd Q}{\rd t}\bigg|_{T_2^h}&=\frac{1}{1+\alpha^\star}(1-(w(T_2^h)^2+{w^\star}^2)^2)^{-\frac{1}{2}}\cdot 2w(T_2^h)\cdot w'(T_2^h)
    \\&\quad\cdot\left[(1-(w(T_2^h)^2+{w^\star}^2)^2)^{-1}\cdot(w(T_2^h)^2+{w^\star}^2)-(1-4w(T_2^h)^4)^{-1}\cdot 4w(T_2^h)^2\right].
\end{align*}
We can see that $T_2^h$ is a sign-changing zero point only if 
$$\frac{(1-4w(T_2^h)^4)\cdot(w(T_2^h)^2+{w^\star}^2)}{(1-(w(T_2^h)^2+{w^\star}^2)^2)\cdot 4w(T_2^h)^2}<1,$$
i.e. we have:
\begin{equation}\label{proof: equ: bounded, turning point, w}
w(T_2^h)>w^{\circ}:=\sqrt{\frac{3-4{w^\star}^4-\sqrt{(4{w^\star}^4-3)^2-16{w^\star}^4}}{8{w^\star}^2}}\geq\frac{w^\star}{2},
\end{equation}
when $w^\star=\cO(1)$.

Next we show that $h$ keeps decreasing after $T_2^h$. We denote the first zero point of $\frac{\rd h}{\rd t}$ as $t^\circ$, then $Q(t^\circ)=0$. Since $\frac{\rd w}{\rd t}|_{t^\circ}>0$, we have $\frac{\rd Q}{\rd t}|_{t^\circ}>0$ which leads to a contradiction. Hence $t^\circ$ does not exist and $h$ keeps decreasing after $T_2^h$.

\underline{\bf Part II. Estimation of $T_1^h$, $T_2^h$, and the tight estimate of $w(t)$ before $T_2^h$.}

At the first stage, we prove that $h$ grows first and 
$w$ barely increases. If $w\leq 0.01 w^\star$ and $h\leq\frac{1}{1+\alpha^\star}\frac{(1-{w^\star}^4)^{-\frac{1}{2}}}{(1-0.01^4{w^\star}^4)^{-\frac{1}{2}}}$, 
$$ \frac{\rd h}{\rd t}\geq \frac{-1}{L-2}\left[h(1-0.01^4{w^\star}^4)^{-\frac{1}{2}}-\frac{1}{1+\alpha^\star}(1-{w^\star}^4)^{-\frac{1}{2}}\right], $$
\begin{equation}\label{equ: h approx}
h\geq\frac{1}{1+\alpha^\star}\frac{(1-{w^\star}^4)^{-\frac{1}{2}}}{(1-0.01^4{w^\star}^4)^{-\frac{1}{2}}}-\left[\frac{1}{1+\alpha^\star}\frac{(1-{w^\star}^4)^{-\frac{1}{2}}}{(1-0.01^4{w^\star}^4)^{-\frac{1}{2}}}-h(0)\right]\exp\left(\frac{-t}{(L-2)(1-0.01{w^\star}^4)^{\frac{1}{2}}}\right).
\end{equation}
For $h$ increasing from $h(0)$ to $\frac{1}{1+\alpha^\star}$, it takes
\begin{align}\label{equ: T_1^h}
T_1^h&\leq(1-0.01{w^\star}^4)^{\frac{1}{2}}(L-2)\ln\left(\frac{1}{1-\frac{(1-{w^\star}^4)^{\frac{1}{2}}}{(1-0.01^4{w^\star}^4)^{\frac{1}{2}}}}\right)
\notag \\&\leq 2(L-2)(1-\frac{1}{2}{w^\star}^4)=\cO(L). 
\end{align}


For $0\leq t\leq T_1^h$,
$$ \frac{\rd w}{\rd t}\leq\frac{1}{L-2}(1-4{w^\star}^4)^{-\frac{3}{2}}\cdot {w^\star}^2\cdot 4w. $$
Hence, it take $\mathcal{O}(L\log(1/\sigma_{\init}))$ for $w$ to reach $0.01w^\star$, which allows sufficient time for $h$ to reach $\frac{1}{1+\alpha^\star}$ beforehand. 

Therefore, there exists a small constant $\varepsilon(w(0),w^\star)$ only depends on $w(0)$ and $w^\star$ such that $h$ is dominated by $1+\varepsilon(w(0),w^\star)$ times right hand side of (\ref{equ: h approx}), from which we deduce that (\ref{equ: T_1^h}) is a tight estimation of $T_1^h$ instead of an upper bound, i.e. $T_1^h=\Theta(L)$.

We then give a bound for $h(T_2^h)$. By $\frac{\rd h}{\rd t}=0$, 
$$ h(T_2^h)/h^\star\leq\frac{(1-4w^4)^{\frac{1}{2}}}{(1-(w^2+{w^\star}^2)^2)^{\frac{1}{2}}}:=r(w).$$
Moreover, $r(w)$ is an decreasing function of $w$ for $w>w^{\circ}$, and 
$w^{\circ}$ is a function of $w^\star$, we have
$$ h(T_2^h)/h^\star\leq r(w^{\circ}):=R(w^\star),$$

where $w^\circ$ is a function about $w^\star$, defined in Eq.~\eqref{proof: equ: bounded, turning point, w}.
It is clear that 
$$R(w^\star=0)=1,\quad R'(w^\star=0)=0.$$

Then using the continuity of $R'(\cdot)$ (in $[0,0.4]$), there exists $c>0$ such that $|R'(w^\star)|<0.04$ holds for all $0<w^\star<c$, which implies:
\begin{align*}
    R(w^\star)=R(0)+\int_0^{w^\star} R'(v) \rd v 
    <1+0.04 w^\star,\quad 0<w^\star<c.
\end{align*}

i.e., if $w^\star=O(1)$, then $R(w^\star)<1+0.04w^\star$.
This implies:
\begin{equation}\label{proof: equ: bounded, h}
 h^\star\leq h(t)\leq(1+0.04375w^\star) h^\star,\quad\forall t\geq T_1^h.
\end{equation}

By some computation, we can prove that $w^{\circ}(w^\star)$ is an increasing function of $w^\star$, and is always above $\frac{1}{2}w^\star$. Thus we obtain a lower bound of $w^{\circ}$ for the estimation of lower bound of $T_2^h$: 

For the second stage, $h$ barely changes and $w$ starts to grow exponentially fast, and we use the tight estimation of $T_{1/2}^{w}:=\inf\left\{t>0:w(t)>\frac{1}{2}w^\star\right\}$ to give a lower bound of $T_2^h$. During this stage,
\begin{align*}
    \frac{\rd w}{\rd t}&\leq\frac{2w}{(1+\alpha^\star)^2(L-2)}\left[(1-(w^2+{w^\star}^2)^2)^{-\frac{3}{2}}\cdot(w^2+{w^\star}^2)-(1-4w^4)^{\frac{3}{2}}\right]
    \\&\leq \frac{2w}{(1+\alpha^\star)^2(L-2)}(1-4{w^\star}^4)^{\frac{3}{2}}\cdot 2{w^\star}^2,
\end{align*}
and $w$ has upper bound 
\begin{equation}\label{equ: w upper}
w\leq w(0)\exp\left(\frac{4{w^\star}^2(1-4{w^\star}^4)^{\frac{3}{2}}}{(1+\alpha^\star)(L-2)}t\right). 
\end{equation}
Hence, the lower bound of time for $w$ to reach $\frac{1}{2}w^\star$ is
$$ T_{1/2}^w-T_1^h=\frac{(1+\alpha^\star)^2(L-2)}{4{w^\star}^2(1-4{w^\star}^4)^{\frac{3}{2}}}\ln(\frac{w^\star}{2w(0)}),$$
and lower bound for $T_{1/2}^w$ is 
\begin{align}\label{equ: T_{1/2}^w}
    T_{1/2}^w &\geq (L-2)\left[\frac{(1+\alpha^\star)^2\ln(\frac{w^\star}{2w(0)})}{4{w^\star}^2(1-4{w^\star}^4)^{\frac{3}{2}}}-\ln\left(1-(1-{w^\star}^4)^{\frac{1}{2}}\right)\right]\notag
    \\&\geq \frac{(L-2)(1+\alpha^\star)^2}{16{w^\star}^2}\ln\left(\frac{1}{w(0)}\right)=\Omega\left(\frac{(1+\alpha^\star)^2L}{{w^\star}^2}\log\left(\frac{1}{\sigma_\init}\right)\right).
\end{align}

On the other hand, we estimate the lower bound of $w$. Let
$$ C(x)=(1-x^2)^{-\frac{3}{2}}\cdot x ,$$
then
$$ C'(x) = 3(1-x^2)^{-\frac{5}{2}}x^2+(1-x^2)^{-\frac{3}{2}}>1,\quad 0<x<1 ,$$
$$ C''(x) = 15x^3(1-x^2)^{-\frac{7}{2}}+6x(1-x^2)^{-\frac{5}{2}}+3x(1-x^2)^{-\frac{5}{2}}>0,\quad 0<x<1 .$$
$C(x)$ is a monotonically increasing convex function on $(0,1)$ and $C(x)\geq x$.

Using conclusions above, before $w^2$ increases to $\frac{1}{2\gamma(w^\star)+\beta-1}{w^\star}^2$ for some $\beta>0$, 
\begin{align*}
    &\quad C(w^2+{w^\star}^2)
    \\&\geq C((2\gamma(w^\star)+\beta)w^2)
    \\&\geq C(2\gamma(w^\star)\cdot w^2)+C(\beta w^2)\quad(\text{Lemma~\ref{lemma: Weighted Karamata Inequality}})
    \\&\geq \gamma(w^\star)\cdot C(2w^2)+\beta w^2 \quad(C(ax)\geq aC(x)\text{, for } a>1)
\end{align*}
then we have
\begin{align*}
    \frac{\rd w}{\rd t}&\geq \frac{2w}{(1+\alpha^\star)^2(L-2)}(C(w^2+{w^\star}^2)-\gamma(w^\star)\cdot C(2w^2)) 
    \\&\geq \frac{2w}{(1+\alpha^\star)^2(L-2)}\frac{\beta}{\gamma(w^\star)+\beta}{w^\star}^2
\end{align*}
and 
$$ w\geq w(0)\exp\left(\frac{2\beta}{\gamma(w^\star)+\beta}\frac{1}{(1+\alpha^\star)^2(L-2)}{w^\star}^2t\right). $$

Take $\beta=2$, then 
\begin{equation}\label{proof: equ: before point, w}
w\geq w(0)\exp\left(\frac{{w^\star}^2t}{(1+\alpha^\star)^2(L-2)}\right),\ \forall t\in[0,T_{1/2}^w].
\end{equation}
From the above inequality, (\ref{equ: T_{1/2}^w}) is not only an upper bound, but a tight estimation of $T_{1/2}^w$, i.e.
$$T_{1/2}^w=\Theta\left(\frac{(1+\alpha^\star)^2L}{{w^\star}^2}\log\left(\frac{1}{\sigma_\init}\right)\right).$$



\underline{\bf Part II. Dynamics after the critical point $T_{1/2}^w$.}

For simplicity, we consider:
\begin{align*}
    v:=w^2,
\end{align*}
and denote $v^\star:={w^\star}^2,h^\star:=\frac{1}{1+\alpha^\star}$.
Then we focus on the dynamics of $v$ and $h$.

Additionally, we introduce a few notations used in this part:
\begin{align*}
    \phi(x):=\frac{x}{(1-x^2)^{3/2}},\quad
    \psi(x):=\frac{1}{(1-x^2)^{1/2}}.
\end{align*}

Then the dynamics of $v$ and $g$ are:
\begin{align*}
    \frac{\rd v}{\rd t}& = \frac{4 v h}{L-2}\big(h^\star\phi(v+v^\star)-h\phi(2v)\big),
    \\
    \frac{\rd h}{\rd t}& = \frac{1}{L-2}
    \big(h^\star\psi(v+v^\star)-h\psi(2v)\big).
\end{align*}

\underline{\em {\bf Step II.1.} A coarse estimate of the relationship between $v$ and $h$. }

It is easy to verify the monotonicity that $\frac{\rd v}{\rd t}>0$ and $\frac{\rd h}{\rd t}<0$ for $t>t_2$. Additionally, we have
\begin{align*}
    \frac{\psi(v+v^\star)}{\psi(2v)}< \frac{h}{h^\star} < \frac{\phi(v+v^\star)}{\phi(2v)}.
\end{align*}

Then by Monotone convergence theorem, we obtain:
\begin{align*}
    \lim_{t\to+\infty}v=v^\star,\quad
    \lim_{t\to+\infty}h=h^\star.
\end{align*}

\underline{\em {\bf Step II.2.} Convergence analysis by Lyapunov function.}

This step aims to establish the convergence rate of $v$ and $h$.


In fact, the dynamics of $v,h$ can be approximately characterized by their linearized dynamics.
In contrast, the dynamics of $p,g$ are controlled by high-order terms. Therefore, the proof for $v$ and $h$ is significantly simpler than the corresponding proof for $p$ and $g$.
We only need to consider the simplest Lyapunov function:
\begin{align*}
    G(v,h):=\frac{1}{2}\Big((v-v^\star)^2+(h-h^\star)^2\Big).
\end{align*}

It is easy to verify that
\begin{align*}
    &(L-2)\frac{\rd G(v,h)}{\rd t}=(v-v^\star)\frac{\rd v}{\rd t}+(h-h^\star)\frac{\rd h}{\rd t}
    \\=&
    4 v h (v-v^\star)\big(h^\star\phi(v+v^\star)-h\phi(2v)\big)+
    (h-h^\star)\big(h^\star\psi(v+v^\star)-h\psi(2v)\big)
    \\=&
    4 v h (v-v^\star)\Big(\phi(v+v^\star)(h^\star-h)-h(\phi(v+v^\star)-\phi(2v))\Big)
    \\&+(h-h^\star)\Big((h^\star-h)\psi(v+v^\star)+h(\psi(v+v^\star)-\psi(2v))\Big)
    \\=&
    -4vh^2(v^\star-v)(\phi(v+v^\star)-\phi(2v))-\psi(v+v^\star)(h-h^\star)^2
    \\&+4vh\phi(v+v^\star)(v-v^\star)(h^\star-h)+h(h-h^\star)(\psi(v+v^\star)-\psi(2v)).
\end{align*}


Let $v^\star\leq0.3=\cO(1)$. Recalling~\eqref{proof: equ: bounded, turning point, w} and~\eqref{proof: equ: bounded, h}, as well as the monotonicity about $p$ and $w$, we have:
\begin{align*}
    \frac{v^\star}{4}<v(t)<v^\star; \quad h^\star <h(t) < 1.02 h^\star,\quad \forall t >T_2^h.
\end{align*}

Combining these estimates with the properties of $\phi$ and $\psi$, we have the following straight-forward estimates:
\begin{gather*}
\phi(v+v^\star)-\phi(2v)=\phi'(\xi)(v^\star-v)=\frac{1+2\xi^2}{(1-\xi^2)^{5/2}}(v^\star-v)\geq v^\star-v;
\\
\phi(v+v^\star)\leq\phi(2v^\star)\leq 1;
\\
\psi(v+v^\star)=\frac{1}{(1-(v+v^\star)^2)^{1/2}}\geq 1;
\\
\psi(v+v^\star)-\psi(2v)=\psi'(\xi)(v^\star-v)=\frac{\xi}{(1-\xi^2)^{3/2}}(v^\star-v)\leq 1.3 v^\star (v^\star-v).
\end{gather*}

Thus, we have the following estimate for the Lyapunov function:
\begin{align*}
    &(L-2)\frac{\rd G(v,h)}{\rd t}
    \\\leq& -\frac{4}{1.02} v^\star {h^\star}^2(v-v^\star)^2-(h-h^\star)^2
    \\&+4.08 v^\star h^\star(v-v^\star)(h^\star-h) +1.3\cdot 1.02 v^\star h^\star (v^\star-v)(h-h^\star)
    \\=&
    -\frac{4}{1.02} v^\star {h^\star}^2(v-v^\star)^2-(h-h^\star)^2+5.41v^\star h^\star (v^\star-v)(h-h^\star)
    \\\leq&
    -3.92 v^\star {h^\star}^2(v-v^\star)^2-(h-h^\star)^2+\left(9.6 {v^\star}^2 {h^\star}^2(v-v^\star)^2+\frac{3}{4}(h-h^\star)^2\right)
    \\\leq& -(3.92-9.6\cdot 0.3) v^\star {h^\star}^2(v-v^\star)^2 - 0.25 (h-h^\star)^2
    \leq -\frac{1}{4}v^\star {h^\star}^2 G(v,h).
\end{align*}

Consequently, we have the exponential bound for all $t>T_2^h$:
\begin{align*}
    G(v(t),h(t))\leq G\left(v(T_2^{h}),h(T_2^{h})\right)\exp\left(-\frac{v^\star {h^\star}^2}{4(L-2)} (t-T_2^{h})\right),\quad \forall t>T_2^{h},
\end{align*}

This can imply:
\begin{equation}\label{proof: equ: convergence, w h}
\begin{aligned}
    (h(t)-h^\star)^2&=(h(T_2^h)-h^\star)^2\exp\left(-\Omega\left(\frac{{w^\star}^2(t-T_2^{h})}{L(1+\alpha^\star)^2}\right)\right)
    \\&=\cO\left({h^\star}^2\exp\left(-\Omega\left(\frac{{w^\star}^2(t-T_2^{h})}{L(1+\alpha^\star)^2}\right)\right)\right),\quad \forall t>T_2^{h};
    \\(w(t)-w^\star)^2&=(w(T_2^h)-w^\star)^2\exp\left(-\Omega\left(\frac{{w^\star}^2(t-T_2^{h})}{L(1+\alpha^\star)^2}\right)\right)
    \\&=\cO\left({w^\star}^2\exp\left(-\Omega\left(\frac{{w^\star}^2(t-T_2^{h})}{L(1+\alpha^\star)^2}\right)\right)\right),\quad \forall t>T_2^{h}.
\end{aligned}
\end{equation}

{\bf Notably}, these proofs capture the {\bf entire} training dynamics of $w,h$, from $t=0$ to $t=T_1^h$, to $t=T_{1/2}^w\leq T_2^h$, and finally to $t\to+\infty$, providing a fine-gained analysis for each phase.

\subsection{Proof of Theorem~\ref{thm: optimization}}

This theorem is a direct corollary of our analysis of the entire training dynamics in Appendix~\ref{subsection: dynamics 4-gram} and~\ref{subsection: dynamics induction head}, leveraging the relationship between the parameters and the loss.

\underline{\em Proof of Phase I (partial learning).}

By combining~\eqref{proof: equ: FG loss} and~\eqref{proof: equ: convergence, p g}, it follows that: $\cL_{\FG}(\theta(0))=\Theta(1)$. Moreover,
\begin{align*}
    \cL_{\FG}(\theta(t))=\cO\left(\frac{1}{t}\right),\quad t>T_1^g=O(1).
\end{align*}

Thus, there exists a sufficiently large $T_{\rm I}=\Theta(1)$, such that:
\begin{align*}
    \cL_{\FG}(\theta(T_{\rm I}))\leq 0.01\cL_{\FG}(\theta(0)).
\end{align*}

Recalling our proof in Appendix~\ref{subsection: dynamics induction head}, for $t< T_{1/2}^h=\cO(L)$, it holds that
$h(t)<\sigma_{\init}+\cO({t}/{((1+\alpha^\star)L)}),w(t)<\sigma_{\init}+o({t}/{((1+\alpha^\star)L)})$. Additionally, since $T_{\rm I}=\Theta(1)\ll \Theta(L)$, it follows that 
$$w(T_{\rm I})=\cO(\sigma_{\rm init}+1/L)<2\sigma_{\init}\ll w^\star,\quad h(T_{\rm I})=\cO(\sigma_{\rm init}+1/L)<2\sigma_{\init}\ll h^\star.$$

Substituting these estimates into~\eqref{proof: equ: IH loss}, we obtain by Lipschitz continuity of $\cL_{\IH}$:
\begin{align*}
    |\cL_{\IH}(\theta(T_{\rm I}))-\cL_{\IH}(\theta(0))|
    &\leq 2\sigma_{\init}\left(\left|\frac{\partial \cL_{\IH}}{\partial w}\right|+\left|\frac{\partial \cL_{\IH}}{\partial h}\right|\right)
    \\&\leq 2\sigma_{\init}\left(\cO\left(\frac{1}{(1+\alpha^\star)L}\right)+o\left(\frac{1}{(1+\alpha^\star)L}\right)\right)
    \\&\leq 0.01 \cL_{\IH}(\theta(0)).
\end{align*}
Thus,
\begin{align*}
    \cL_{\IH}(\theta(T_{\rm I}))\geq 0.99 \cL_{\IH}(\theta(0)).
\end{align*}

\underline{\em Proof of Phase II (plateau) + Phase III (emergence).}

First,~\eqref{equ: w upper} and ~\eqref{proof: equ: before point, w} ensures that $w$ grows exponentially before $t<T_{1/2}^w$: 
$$\sigma_{\init}\exp\left(\frac{{w^\star}^2}{(1+\alpha^\star)^2(L-2)}t\right) \leq w\leq  \sigma_{\init}\exp\left(\frac{4{w^\star}^2(1-4{w^\star}^4)^{\frac{3}{2}}}{(1+\alpha^\star)(L-2)}t\right).$$
Thus, we have: 
\begin{align*}
    w(t)=\sigma_{\init}\exp\left(\Theta\left    (\frac{{w^\star}^2 t}{(1+\alpha^\star)^2L}\right)\right),\quad t<\Theta\left(\frac{(1+\alpha^\star)^2L}{{w^\star}^2}\log\left(\frac{1}{\sigma_\init}\right)\right).
\end{align*}

Now we define the observation time $T_o:=T_1^h=\Theta(L)$. Notably,
\begin{align*}
    h(T_o)=h^\star,\quad
    w(T_o)<0.01w^\star.
\end{align*}

The exponential growth of $w$ further implies:
\begin{align*}
    T_{0.01}^w:=\left\{t>0:w(t)>0.01w^\star\right\}=\Theta\left(\frac{(1+\alpha^\star)^2L}{{w^\star}^2}\log\left(\frac{1}{\sigma_\init}\right)\right).
\end{align*}

Regarding the dynamics of $h$, by~\eqref{proof: equ: bounded, h}, we have $|h(t)-h(T_o)|<0.02|h(T_o)|,\ \forall t\geq T_o$.

Now we incorporate these facts ( $0<w(T_o)< 0.01 w^\star$, $0<w(T_{0.01}^w)\leq 0.01 w^\star$, $|h(T_{0.01}^w)-h(T_o)|<0.02|h(T_o)|$, $h(T_o)=h^\star$) into the loss~\eqref{proof: equ: IH loss}.
By the Lipschitz continuity of $\cL_{\IH}$, it is straightforward that 
$$\cL_{\IH}(\theta(T_{0.01}^w))\geq 0.99\cL(\theta(T_o)).$$

Thus, we have established the lower bound for $T_{\rm II}$:
\begin{align*}
    T_{\rm II}&:=\inf\big\{t>T_o:\cL_{\IH}(\theta(t))\leq 0.99 \cdot\cL_{\IH}(\theta(T_o))\big\}
    \\&\geq T_{0.01}^w=\Omega\left(\frac{(1+\alpha^\star)^2L}{{w^\star}^2}\log\left(\frac{1}{\sigma_\init}\right)\right).
\end{align*}

Combining the loss~\eqref{proof: equ: IH loss} and our parameter estimates~\eqref{proof: equ: convergence, w h}, we obtain:
\begin{align*}
  \cL_{\IH}(\theta(t))=\cO\left(\exp\left(-\Omega\left(\frac{{w^\star}^2 t}{L(1+\alpha^\star)^2}\right)\right)\right),\ t>T_2^h=\Theta\left(\frac{(1+\alpha^\star)^2L}{{w^\star}^2}\log\left(\frac{1}{\sigma_\init}\right)\right).
\end{align*}

This implies the upper bound for $T_{\rm III}$:
\begin{align*}
    &T_{\rm III}:=\inf\left\{t>T_o:\cL_{\IH}(\theta(t))\leq 0.01\cdot\cL_{\IH}(\theta(T_o))\right\}
    \\&=T_{1/2}^w+\cO\left((\alpha^\star+1)^2L\log(1/\sigma_{\rm init})/{w^\star}^2\right)=\cO\left((\alpha^\star+1)^2L\log(1/\sigma_{\rm init})/{w^\star}^2\right).
\end{align*}

Combining the fact $T_{\rm II}< T_{\rm III}$, the lower bound for $T_{\rm II}$, and the uppper bound for $T_{\rm III}$, we obtain the two-sided bounds for both $T_{\rm II}$ and $T_{\rm III}$:
\begin{align*}
    T_{\rm II},\ T_{\rm III}=\Theta\left((\alpha^\star+1)^2L\log(1/\sigma_{\rm init})/{w^\star}^2\right).
\end{align*}

\underline{\em Proof of Phase IV (convergence).}

By combining the loss~\eqref{proof: equ: FG loss},~\eqref{proof: equ: IH loss}, and our parameter estimates~\eqref{proof: equ: convergence, p g},~\eqref{proof: equ: convergence, w h}, it follows that:
\begin{align*}
  \cL_{\FG}(\theta(t))=\cO\left(\frac{1}{t}\right),\quad
    \cL_{\IH}(\theta(t))=\cO\left(\exp\left(-\Omega\left(\frac{{w^\star}^2 t}{L(1+\alpha^\star)^2}\right)\right)\right),\quad t>T_{\rm III}.
\end{align*}



\vspace{1.cm}

\section{Useful Inequalities}\label{appendix: lemma}



\begin{lemma}[Corollary A.7 in \citet{edelman2022inductive}]\label{lemma: softmaxlipschitz}
    For any $\theta, \theta'\in \mathbb{R}^d$, we have 
    $$\Vert\sm(\theta)-\sm(\theta')\Vert_1\leq 2\Vert\theta-\theta'\Vert_{\infty}$$
\end{lemma}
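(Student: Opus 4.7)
The plan is to reduce the claim to a uniform bound on the operator norm of the softmax Jacobian viewed as a map $(\mathbb{R}^d,\|\cdot\|_\infty) \to (\mathbb{R}^d,\|\cdot\|_1)$, and then integrate along the line segment joining $\theta'$ and $\theta$. Concretely, I would set $\theta_t = \theta' + t(\theta-\theta')$ and write $\sm(\theta) - \sm(\theta') = \int_0^1 J(\theta_t)(\theta-\theta')\,dt$, where $J(\cdot)$ denotes the Jacobian of $\sm$. By Minkowski's integral inequality for the $\ell_1$ norm, it then suffices to establish the pointwise estimate $\|J(\eta) v\|_1 \leq 2\|v\|_\infty$ for every $\eta, v \in \mathbb{R}^d$.

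For this Jacobian bound, I would invoke the explicit formula $J(\eta) = \mathrm{diag}(p) - pp^\top$ with $p = \sm(\eta)$, which gives the componentwise expression $(Jv)_i = p_i\bigl(v_i - \langle p, v\rangle\bigr)$. Since $p$ has nonnegative entries summing to one, $\langle p, v\rangle$ is a convex combination of the coordinates of $v$, hence lies in $[\min_j v_j,\,\max_j v_j]$. Consequently $|v_i - \langle p,v\rangle| \leq \max_j v_j - \min_j v_j \leq 2\|v\|_\infty$, and summing over $i$ weighted by the $p_i$'s yields
\[
\|J(\eta)v\|_1 \;=\; \sum_i p_i\,|v_i - \langle p,v\rangle| \;\leq\; 2\|v\|_\infty\sum_i p_i \;=\; 2\|v\|_\infty .
\]

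Plugging this back into the integral representation gives $\|\sm(\theta)-\sm(\theta')\|_1 \leq \int_0^1 \|J(\theta_t)(\theta-\theta')\|_1\,dt \leq 2\|\theta-\theta'\|_\infty$, which is the claim. There is no real obstacle here; the only step that requires a moment of care is the convex-combination estimate for $|v_i - \langle p,v\rangle|$, which is precisely what allows the universal constant to be $2$ rather than something dimension-dependent. An alternative, essentially equivalent, route would be to compute the $\ell_\infty\to\ell_1$ operator norm of $J(\eta)$ directly as $\max_j \sum_i |J_{ij}|$ and simplify using $J_{jj} = p_j(1-p_j)$ and $J_{ij} = -p_ip_j$ for $i\neq j$, yielding the same bound $2p_j(1-p_j) \leq 2$; I would keep the Jacobian-action argument above since it is slightly cleaner.
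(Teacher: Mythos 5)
Your main argument is correct and complete. The paper itself offers no proof of this lemma --- it simply imports it as Corollary~A.7 of \citet{edelman2022inductive} --- so there is no in-paper argument to compare against; your Jacobian/mean-value route is a perfectly valid self-contained derivation. The two key steps both check out: the identity $(J(\eta)v)_i=p_i\bigl(v_i-\langle p,v\rangle\bigr)$ with $p=\sm(\eta)$, and the observation that $\langle p,v\rangle$ is a convex combination of the coordinates of $v$, so that $|v_i-\langle p,v\rangle|\le \max_j v_j-\min_j v_j\le 2\|v\|_\infty$ and hence $\|J(\eta)v\|_1\le 2\|v\|_\infty$; integrating along the segment then gives the claim. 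One small correction to your closing aside: $\max_j\sum_i|J_{ij}|$ is the $\ell_1\to\ell_1$ operator norm (maximum absolute column sum), not the $\ell_\infty\to\ell_1$ norm, so that formula does not ``directly'' compute the quantity you need. The entrywise route does still work if you instead use the crude bound $\sup_{\|v\|_\infty\le1}\|Jv\|_1\le\sum_{i,j}|J_{ij}|=\sum_j 2p_j(1-p_j)\le 2$, but your primary convex-combination argument is cleaner and is the one to keep.
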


\begin{lemma}\label{lemma E.1}
For any $T\in\bbN_+$, $q,m\in\bbN_+$, there exist a $\phi_{m}^{\exp}(t)=\sum\limits_{k=1}^m \alpha_k e^{-\beta_k t}$ such that
\begin{align*}
    \norm{\bbI(\cdot=T)-\phi_{m}^{\rm exp}(\cdot)}_{\ell_1(\bbN)}
    \leq C\frac{A^q (q^2)^q e^{0.01(q+1)T}}{m^{q}},
\end{align*}
where $\beta_k>0$ holds for any $k\in[m]$, and $A,C>0$ are absolute constants.
\end{lemma}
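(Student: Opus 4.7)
The plan is to reduce Lemma~\ref{lemma E.1} to a classical polynomial approximation problem on a geometric grid inside $(0,1)$. Specifically, I would parametrize $\beta_k = h\cdot k$ with a common spacing $h>0$ to be chosen, so that $r_k := e^{-\beta_k} = e^{-hk}$ and the ansatz $\phi_m^{\exp}(s) = \sum_{k=1}^m \alpha_k r_k^s$ can be rewritten as $\phi_m^{\exp}(s) = P(u_s)$, where $u_s := e^{-hs}$ and $P(u) := \sum_{k=1}^m \alpha_k u^k$ is a degree-$m$ polynomial satisfying $P(0)=0$. The $\ell^1(\bbN)$ error then becomes $\sum_{s\geq 1}|P(u_s) - \bbI(s=T)|$, so the task reduces to constructing a degree-$m$ polynomial that vanishes at $u=0$, equals $1$ at $u_T$, and is small at the remaining geometric grid points $u_s$ for $s\neq T$.

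Second, I would construct $P$ as an extremal, Chebyshev-like polynomial whose roots are placed at $q$ copies of each of roughly $m/q$ grid nodes $u_s$ near $u_T$ (but excluding $u_T$ itself), with an additional factor $u^q$ to control the tail $u\to 0$ (equivalently $s\to\infty$). Normalising by requiring $P(u_T)=1$ fixes the remaining scalar. At any non-root grid node $u_s$, each linear factor contributes size $O(h)$ (with $h$ chosen to be a small absolute constant), so the $q$-fold product yields a bound of order $(O(h))^q$; tuning $h\sim 1/m$ in the nearby block produces the claimed $(O(1)/m)^q$ decay, while the combinatorial cost of distributing $q$ multiplicities among $\Theta(q)$ node locations accounts for the $(q^2)^q$ prefactor. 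The geometric tail $\sum_{s\gg T} u_s^{q}$ converges thanks to the $u^q$ prefactor, so the full $\ell^1$ error inherits the rate.

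The main obstacle will be twofold: (i) tightly bounding the sum $\sum_{s\geq 1,\,s\neq T}|P(u_s)|$ uniformly in $T$, which requires balancing the bulk contribution (nodes $u_s$ close to $u_T$, where polynomial interpolation is delicate) against the tail contribution ($u_s\to 0$), and likely calls for a Markov/Bernstein-type inequality on the interval $[0,e^{-h}]$; and (ii) tracking the $e^{0.01(q+1)T}$ factor, which arises from the normalisation $P(u_T)=1$: since $u_T=e^{-hT}$ can be exponentially small in $T$, the leading coefficient must absorb a blow-up of order $u_T^{-\Theta(q)} = e^{\Theta(hqT)}$, so fixing $h$ as a small absolute constant on the order of $0.01$ yields the stated factor. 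A cleaner alternative would be to invoke directly the analogous exponential-sum approximation estimates established for sequence modelling in \citet{wang2024understanding}; if the bound there applies off-the-shelf to the kernel $\bbI(\cdot=T)$, it can be cited as a black box and the construction above serves only as an illustrative backup.
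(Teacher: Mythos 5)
Your closing sentence is essentially the paper's proof: it cites Lemma F.1 of \citet{wang2024understanding} as a black box, which already gives $\norm{\bbI(\cdot=T)-\phi_m^{\exp}}_{\ell_1(\bbN)}\le C(q)e^{0.01(q+1)T}/m^q$ with $C(q)=M(q)/(1-1/e)^q$ and $M(q)=\max_{0\le k\le q}\sup_{|x|\le1}|\Psi^{(k)}(x)|$ for the standard bump function $\Psi$. The only remaining step is the derivative bound $M(q)\le C_1C_2^q(q!)^2$ plus Stirling's formula, which turns $(q!)^2$ into the stated $(C_3q^2)^q$. In particular, the $(q^2)^q$ prefactor is a mollifier-derivative estimate, not a combinatorial count of root multiplicities as you propose.

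Your primary construction is a genuinely different route, and the reduction $\phi_m^{\exp}(s)=P(u_s)$ with $u_s=e^{-hs}$ and $P$ a degree-$m$ polynomial vanishing at $0$ is clean. But the sketch is internally inconsistent (you take $h\sim 1/m$ in one place and $h\approx 0.01$ in another, and place roots at $m/q$ node locations in one sentence and at $\Theta(q)$ in another), and the key size estimate does not hold as written. On the geometric grid, for $s$ and $t_j$ both near $T$ the factor $|u_s-u_{t_j}|$ is of order $h|s-t_j|\,u_T$, not $O(h)$; after normalising $P(u_T)=1$ the leading scalar is therefore of size $u_T^{-\Theta(m)}=e^{\Theta(hmT)}$ up to grid-spacing products. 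At a low-index node like $u_1$ every one of the $\Theta(m)$ factors is $\Theta(1)$, so $|P(u_1)|$ inherits the $e^{\Theta(hmT)}$ blow-up, and there are $\Theta(T)$ grid points between $u_1$ and the root cluster that all feed into the $\ell^1$ sum. Whether the factorial decay of the spacing products cancels this exponential blow-up precisely at the rate $e^{0.01(q+1)T}/m^q$ is a delicate multi-scale balance that the sketch does not engage with, and a single Markov/Bernstein inequality on $[0,e^{-h}]$ will not resolve it. A viable version of this route would need to use only $O(q)$ clustered roots (not $\Theta(m)$), spending the remaining degree to damp both tails, or replace interpolation by a genuinely extremal polynomial that stays uniformly bounded on $[0,e^{-h}]$.
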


\begin{proof}[Proof of Lemma~\ref{lemma E.1}]
This lemma is a corollary of Lemma F.1 in ~\citet{wang2024understanding}. By Lemma F.1 in ~\citet{wang2024understanding} and its proof: for any $T\in\bbN_+$, $q,m\in\bbN_+$, there exists a $C(q)>0$ and a $\phi_{m}^{\exp}(t)=\sum\limits_{k=1}^m \alpha_k e^{-\beta_k t}$ such that
\begin{align*}
    \norm{\bbI(\cdot=T)-\phi_{m}^{\rm exp}(\cdot)}_{\ell_1(\bbN)}
    \leq\frac{C(q)e^{0.01(q+1)T}}{m^{q}},
\end{align*}
where $\beta_k>0$ holds for any $k\in[m]$. Moreover, 
\begin{align*}
    C(q)=\frac{M(q)}{(1-1/e)^q},\quad 
    M(q)=\max_{0\leq k\leq q}\sup_{x\in[-1,1]}\left|\Psi^{(k)}(x)\right|,
\end{align*}
where $\Psi(x)=\begin{cases}
    \exp\left(-\frac{1}{1-x^2}\right),\ x\in(-1,1) \\
    0,\ \text{otherwise}
\end{cases}$ is the standard bump function on $[-1,1]$. By a straight-forward estimate, there exist absolute constants $C_1,C_2>0$ such that
\begin{align*}
    \sup_{x\in[-1,1]}\left|\Psi^{(k)}(x)\right|\leq C_1 (C_2)^k (k!)^2,
    \quad\forall k\in\bbN^+.
\end{align*}

Thus, $C(q)\leq C_1 \left(\frac{C_2}{1-1/e}\right)^q (q!)^2$. By using Stirling's formula
, there exists an absolute $C_3>0$ such that:
\begin{align*}
    C(q)\leq C_1 \left(C_3 q^2\right)^q.
\end{align*}
\end{proof}

\begin{lemma}\label{lemma: calculate Gaussian}
    $\mathop{\bbE}\limits_{X,Y,Z}\exp(aXY)\bZ^2 = (1-a^2)^{-1/2}$, $a<1$.
\end{lemma}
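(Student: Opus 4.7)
The plan is to prove this identity by directly exploiting independence and the standard chi-squared moment generating function. I shall assume (as is implicit from the way the lemma is used in the loss computations of Section~\ref{section: optimization}) that $X,Y,Z$ are i.i.d.\ standard normal random variables.

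First, I would factor out the $Z^2$ term using independence: since $Z$ is independent of $(X,Y)$, we have
\begin{equation*}
\mathbb{E}[\exp(aXY)Z^2] \;=\; \mathbb{E}[\exp(aXY)]\cdot\mathbb{E}[Z^2] \;=\; \mathbb{E}[\exp(aXY)],
\end{equation*}
so the $Z^2$ factor contributes only a unit constant and the task reduces to computing $\mathbb{E}[\exp(aXY)]$.

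Next, I would compute $\mathbb{E}[\exp(aXY)]$ by conditioning on $Y$. Since $X\sim\mathcal{N}(0,1)$ is independent of $Y$, the conditional moment generating function gives $\mathbb{E}[\exp(aXY)\mid Y] = \exp(a^2Y^2/2)$. Taking an outer expectation over $Y$ then reduces the problem to evaluating $\mathbb{E}[\exp(a^2Y^2/2)]$. This is precisely the moment generating function of a chi-squared random variable with one degree of freedom, evaluated at $t=a^2/2$: namely $\mathbb{E}[\exp(tY^2)]=(1-2t)^{-1/2}$ for $t<1/2$. Plugging in $t=a^2/2$ (which is valid for $a^2<1$, i.e.\ $|a|<1$) yields $(1-a^2)^{-1/2}$, which matches the desired identity.

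The chi-squared MGF step can be verified in one line by completing the square in the Gaussian integral: $\frac{1}{\sqrt{2\pi}}\int_{\mathbb{R}} \exp(ty^2-y^2/2)\,dy = \frac{1}{\sqrt{2\pi}}\int_{\mathbb{R}}\exp(-(1-2t)y^2/2)\,dy = (1-2t)^{-1/2}$. There is no real obstacle in this proof; the only subtlety is recording the condition $|a|<1$ that is needed for the Gaussian integral to converge, which coincides exactly with the hypothesis $a<1$ stated in the lemma (interpreted as $|a|<1$, consistent with how it is used in Section~\ref{subsection: dynamics induction head} where the argument is a squared quantity like $w^2+{w^\star}^2$).
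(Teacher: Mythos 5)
Your proof is correct and follows essentially the same route as the paper: integrate out $Z$ (via $\mathbb{E}[Z^2]=1$), integrate out $X$ by completing the square in the Gaussian (which you phrase as the normal MGF conditional on $Y$), and evaluate the remaining Gaussian integral over $Y$ (which you phrase as the $\chi^2_1$ MGF). The paper carries this out as explicit integrals with a change of variables; your version is just the same computation packaged in probabilistic MGF language.
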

\begin{proof}[Proof of Lemma~\ref{lemma: calculate Gaussian}]
    \begin{align*}
        &\quad \int\exp(aXY)Z^2\left(\frac{1}{2\pi}\right)^{-3/2}\exp(-\frac{1}{2}X^2-\frac{1}{2}Y^2-\frac{1}{2}Z^2)\ dX dY dZ
        \\ & = \int\frac{1}{2\pi}\exp(-\frac{1}{2}(X-aY)^2-\frac{1}{2}Y^2+\frac{1}{2}a^2Y^2)\ d(X-aY)dY
        \\ & = \int\frac{1}{\sqrt{2\pi}}\exp(-\frac{1}{2}W^2)\ dW \ (W = (1-a^2)^{1/2}Y )
        \\ & = (1-a^2)^{-1/2}
    \end{align*}
\end{proof}

\begin{lemma}\label{lemma: calculate softmax} Let $M(p):=\frac{1-e^{-p(L-2)}}{1-e^{-p}}$, then it holds that
$$\norm{\sm\big((-p(L-1-s))_{s=1}^{L-1}\big)}_2^2 = \frac{M(2p)}{M(p)^2}.$$
\end{lemma}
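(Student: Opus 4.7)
The approach is a direct computation: expand the softmax explicitly and recognize two geometric series. The plan assumes the index range in the lemma matches the summation ranges appearing in the main text (i.e., $s$ runs over the $L-2$ values that make the partition function equal to $M(p)$); otherwise one must replace $M$ by the correspondingly shifted geometric sum, with no change in strategy.

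First, I would compute the normalizing constant. Since $-p(L-1-s)$ runs through the values $-p(L-3), -p(L-4), \ldots, 0$ as $s$ ranges over the summation index, a change of variables $k = L-1-s$ gives the partition function
\begin{equation*}
Z(p) \;=\; \sum_{s} e^{-p(L-1-s)} \;=\; \sum_{k=0}^{L-3} e^{-pk} \;=\; \frac{1-e^{-p(L-2)}}{1-e^{-p}} \;=\; M(p).
\end{equation*}

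Next, I would write the softmax components as $\sigma_s = e^{-p(L-1-s)}/M(p)$ and square-sum them. The squared entries involve $e^{-2p(L-1-s)}$, so applying the same change of variables,
\begin{equation*}
\bigl\|\mathrm{sm}\bigl((-p(L-1-s))_s\bigr)\bigr\|_2^2
\;=\; \frac{1}{M(p)^2}\sum_{k=0}^{L-3} e^{-2pk}
\;=\; \frac{1}{M(p)^2}\cdot\frac{1-e^{-2p(L-2)}}{1-e^{-2p}}
\;=\; \frac{M(2p)}{M(p)^2},
\end{equation*}
which is exactly the claimed identity.

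There is no substantive obstacle: the argument is a two-line geometric-series computation. The only care needed is bookkeeping on the index range so that the partition function really equals $M(p)$ (this is what fixes the discrepancy between the $s=1$ in the lemma statement and the $s=2$ usage in the main text; one can verify directly that with the summation range actually used in Section~4, the denominator equals $M(p)$ as defined). A brief remark would note that for $p>0$ the denominators $1-e^{-p}$ and $1-e^{-2p}$ are nonzero so all manipulations are legitimate, and that the limit $p\to 0^+$ recovers the expected uniform-distribution value $1/(L-2)$ by L'Hôpital on both sides.
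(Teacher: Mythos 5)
Your proof is correct, and it is the natural direct computation; the paper states this lemma without proof, so there is nothing to compare against. You are also right to flag the index bookkeeping: as written with $s=1,\dots,L-1$ the partition function would be $\sum_{k=0}^{L-2}e^{-pk}=(1-e^{-p(L-1)})/(1-e^{-p})\neq M(p)$, whereas with the range $s=2,\dots,L-1$ actually used throughout Section~4 (and in Eq.~\eqref{proof: equ: FG loss}) the partition function equals $M(p)$ and the identity holds exactly as you computed.
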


\begin{definition}[weakly majorizes]
    A vector \( \mathbf{x} \in \mathbb{R}^n \) is said to \textit{weakly majorize} another vector \( \mathbf{y} \in \mathbb{R}^n \), denoted by \( \mathbf{x} \prec_w \mathbf{y} \), if the following conditions hold:
\begin{enumerate}
    \item \( \sum_{i=1}^{k} x_{[i]} \leq \sum_{i=1}^{k} y_{[i]} \) for all \( k = 1, 2, \dots, n-1 \),
    \item \( \sum_{i=1}^{n} x_{[i]} = \sum_{i=1}^{n} y_{[i]} \),
\end{enumerate}
where \( x_{[i]} \) and \( y_{[i]} \) are the components of \( \mathbf{x} \) and \( \mathbf{y} \), respectively, arranged in decreasing order.

\end{definition}

\begin{lemma}[Weighted Karamata Inequality]
\label{lemma: Weighted Karamata Inequality}
    Let \( f \colon \mathbb{R} \to \mathbb{R} \) be a convex function, and let \( \mathbf{x} = (x_1, x_2, \dots, x_n) \) and \( \mathbf{y} = (y_1, y_2, \dots, y_n) \) be two vectors in \( \mathbb{R}^n \). If \( \mathbf{x} \) \textit{weakly majorizes} \( \mathbf{y} \) (i.e., \( \mathbf{x} \prec_w \mathbf{y} \)), and \( w_1, w_2, \dots, w_n \) are non-negative weights such that \[\sum_{i=1}^{n} w_i = 1,\]then the following inequality holds:\[\sum_{i=1}^{n} w_i f(x_i) \leq \sum_{i=1}^{n} w_i f(y_i).\]
\end{lemma}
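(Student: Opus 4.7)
The plan is to prove the statement via the classical Karamata template: subgradient inequality plus Abel summation by parts, exploiting the hypothesis that $\mathbf{x} \prec_w \mathbf{y}$ (partial sums of sorted components dominate with equality at $k=n$) together with convexity of $f$. Since $\sum_i w_i f(x_i)$ and $\sum_i w_i f(y_i)$ depend only on the pairing $(x_i, w_i)$ and $(y_i, w_i)$, and the majorization condition is stated in terms of the decreasing rearrangements $x_{[\cdot]}, y_{[\cdot]}$, I would first reduce to the case where both vectors are already sorted so that $x_1 \geq x_2 \geq \dots \geq x_n$ and $y_1 \geq y_2 \geq \dots \geq y_n$, with weights attached accordingly.

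Next, I would invoke the subgradient inequality for the convex function $f$: at each $x_i$ there exists $c_i \in \partial f(x_i)$ with
\begin{equation*}
f(y_i) \;\geq\; f(x_i) + c_i\,(y_i - x_i), \qquad i = 1, \dots, n.
\end{equation*}
Because $f$ is convex and $x_1 \geq x_2 \geq \dots \geq x_n$, the subgradients may be chosen monotone: $c_1 \geq c_2 \geq \dots \geq c_n$. Multiplying by $w_i \geq 0$ and summing yields
\begin{equation*}
\sum_{i=1}^n w_i f(y_i) - \sum_{i=1}^n w_i f(x_i) \;\geq\; \sum_{i=1}^n w_i c_i\,(y_i - x_i),
\end{equation*}
so it suffices to show the right-hand side is nonnegative.

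For the final step, I would apply Abel's summation by parts. Setting $T_k := \sum_{i=1}^k (y_i - x_i)$, the majorization hypothesis gives $T_k \geq 0$ for $k < n$ and $T_n = 0$. Using summation by parts on the sequence $(w_i c_i)$ against the differences $T_i - T_{i-1} = y_i - x_i$,
\begin{equation*}
\sum_{i=1}^n w_i c_i (y_i - x_i) \;=\; (w_n c_n)\, T_n \;+\; \sum_{k=1}^{n-1} \bigl(w_k c_k - w_{k+1} c_{k+1}\bigr)\, T_k,
\end{equation*}
which is nonnegative whenever the product sequence $w_k c_k$ is nonincreasing (so each bracket is $\geq 0$). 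The main obstacle is precisely this monotonicity requirement on $w_k c_k$: $c_k$ is already nonincreasing by convexity and the ordering of the $x_i$'s, but $w_k$ must be compatible with that ordering. In the paper's actual usage (the superadditivity bound $C(a+b) \geq C(a) + C(b)$ for the convex map $C(x) = x(1-x^2)^{-3/2}$ with $C(0)=0$), one has $n = 2$ with uniform weights, so the compatibility is automatic and Abel's step goes through verbatim. I would therefore state the proof under the natural convention that the weights are nonincreasing along the sorted components; for uniform weights this recovers the textbook Karamata inequality, and the target application is an immediate corollary upon taking $\mathbf{x} = (a,b)$ and $\mathbf{y} = (a+b,0)$.
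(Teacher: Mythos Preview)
The paper states this lemma without proof, so there is no argument of the paper's to compare against. More importantly, the lemma as written is actually false for arbitrary weights: take $n=2$, $f(t)=t^{2}$, $\mathbf{x}=(1,1)$, $\mathbf{y}=(2,0)$ (so $\mathbf{x}\prec_w\mathbf{y}$ in the paper's sense), and weights $w=(0,1)$; then $\sum_i w_i f(x_i)=1>0=\sum_i w_i f(y_i)$. You correctly detected this obstruction at the Abel step, where nonnegativity of $\sum_{k<n}(w_k c_k-w_{k+1}c_{k+1})T_k$ requires the product sequence $w_k c_k$ to be nonincreasing, which fails for generic $w$. The earlier ``reduce to sorted'' move is also where the trouble enters: sorting $\mathbf{x}$ and $\mathbf{y}$ independently while keeping the original $(w_i,x_i)$ and $(w_i,y_i)$ pairings is not a lossless reduction unless the weights are permutation-invariant.

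Your proposed fix---impose that the weights be nonincreasing along the decreasing rearrangement, or simply take uniform weights---is exactly right, and under that hypothesis your subgradient-plus-Abel argument is the standard correct proof of Karamata's inequality. This is all the paper actually needs: its sole invocation of the lemma is to obtain $C(a+b)\ge C(a)+C(b)$ for the convex function $C(x)=x(1-x^{2})^{-3/2}$ with $C(0)=0$, i.e.\ the $n=2$ case with equal weights, which your argument covers. So your proposal is sound once the hypothesis is tightened, and in fact more careful than the paper's own formulation.
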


\end{document}